\newcommand{\cD}{\mathcal{D}}
\newcommand{\cF}{\mathcal{F}}
\newcommand{\cM}{\mathcal{M}}
\newcommand{\cP}{\mathcal{P}}
\newcommand{\cR}{\mathcal{R}}
\newcommand{\cS}{\mathcal{S}}
\newcommand{\cV}{\mathcal{V}}
\newcommand{\cX}{\mathcal{X}}
\newcommand{\bE}{\mathbb{E}}
\newcommand{\bM}{\mathbb{M}}
\newcommand{\bN}{\mathbb{N}}
\newcommand{\bP}{\mathbb{P}}
\newcommand{\bQ}{\mathbb{Q}}
\newcommand{\bR}{\mathbb{R}}
\newcommand{\bd}{\mathbbm{d}}
\newcommand{\bn}{\mathbbm{n}}
\renewcommand{\phi}{\varphi}
\newtcbox{\mymath}[1][]{%
    nobeforeafter, math upper, tcbox raise base,
    enhanced, colframe=blue!30!black,
    colback=blue!30, boxrule=1pt,
    #1}
\DeclareMathOperator{\supp}{supp}
\DeclareMathOperator{\argmin}{argmin}
\DeclareMathOperator{\Dists}{Dists}
\DeclareMathOperator{\sign}{sign}
\definecolor{emerald}{rgb}{0.31, 0.78, 0.47}
\theoremstyle{plain}
\newtheorem{theorem}{Theorem}[section]
\newtheorem{lemma}[theorem]{Lemma}
\newtheorem{corollary}[theorem]{Corollary}
\theoremstyle{definition}
\newtheorem{definition}[theorem]{Definition}
\newtheorem{assumption}[theorem]{Assumption}
\theoremstyle{remark}
\newcommand{\LS}{\theta_{\texttt{LS}}}
\newcommand{\LSTD}{\theta_{\texttt{LSTD}}}
\newcommand{\vLS}{v_{\texttt{LS}}}
\newcommand{\vLSTD}{v_{\texttt{LSTD}}}
\newcommand{\Sigmacr}{\Sigma_{\texttt{cr}}}
\newcommand{\sigmamin}[1]{\sigma_{\min}(#1)}
\DeclareMathOperator{\col}{col}
\DeclareMathOperator{\Ker}{ker}
\definecolor{darkred}{rgb}{0.7,0,0}
\definecolor{darkgreen}{rgb}{0,0.5,0}
\definecolor{orange}{rgb}{0.7,0.4,0}
\definecolor{purple}{rgb}{0.8,0.0,0.8}
\icmltitlerunning{The Optimal Approximation Factors in Misspecified Off-Policy Value Function Estimation}
\begin{document}
\twocolumn[
\icmltitle{The Optimal Approximation Factors in \\ Misspecified Off-Policy Value Function Estimation}

\icmlsetsymbol{equal}{*}

\begin{icmlauthorlist}
\icmlauthor{Philip Amortila}{yyy}
\icmlauthor{Nan Jiang}{yyy}
\icmlauthor{Csaba Szepesv\'ari}{sch}
\end{icmlauthorlist}

\icmlaffiliation{yyy}{University of Illinois, Urbana-Champaign}
\icmlaffiliation{sch}{University of Alberta}

\icmlcorrespondingauthor{Philip Amortila}{philipa4@illinois.edu}

\icmlkeywords{Machine Learning, ICML}

\vskip 0.3in
]

\printAffiliationsAndNotice{}  %

\begin{abstract}
Theoretical guarantees in reinforcement learning (RL) are known to suffer multiplicative blow-up factors with respect to the misspecification error of function approximation. Yet, the nature of such \emph{approximation factors}---especially their optimal form in a given learning problem---is poorly understood. In this paper we study this question in linear off-policy value function estimation, where many open questions remain. We study the approximation factor in a broad spectrum of settings, such as with the weighted $L_2$-norm (where the weighting is the offline state distribution), the $L_\infty$ norm, the presence vs. absence of state aliasing, and full vs. partial coverage of the state space.  We establish the optimal asymptotic approximation factors (up to constants) for all of these settings. In particular, our bounds identify two instance-dependent factors for the $L_2(\mu)$ norm and only one for the $L_\infty$ norm, which are shown to dictate the hardness of off-policy evaluation under misspecification.  %
\end{abstract}

\section{Introduction}
\label{sec:intro}

Realizability assumptions are pervasive amongst theoretical guarantees in reinforcement learning (RL) with function approximation. These assumptions posit that the true optimal solution, a value function to be estimated from data, belongs to the function class which is used. %
In practice, however, the realizability assumption rarely holds, and the degree to which it is violated %
	is largely unknown. Thus, we need
	algorithms that do not rely on the realizability assumption in the sense that their guarantees \emph{automatically} scale with the degree of misspecification. 

When the ground truth solution is not representable by the function class, a natural relaxed objective is to instead recover the \emph{best-in-class function} in the function class, i.e. the function which is closest to the true solution as measured by some norm. The ``minimal'' error incurred by the best-in-class function is called the \emph{misspecification} error. The ratio between the error of the attained solution and that of the best-in-class solution is called the \textit{approximation factor} or approximation ratio.

Existing error bounds for misspecified RL problems often suffer large approximation factors in addition to other statistical errors \cite{chen2019information,xie2021batch}. Unlike the statistical errors, these error terms represent the ``bias'' of the solution, and thus do not decrease even asymptotically as the sample size goes to infinity. %
It is rarely the case that attention is brought to whether these blowup factors are necessary, or if the factors attained are optimal. 

In a myriad of settings which are simpler than RL (such as in linear regression or empirical risk minimization), it is indeed possible to recover an approximation factor of $1$ (or arbitrarily close to $1$) \cite{wainwright2019high,shalev2014understanding}. Whether or not similar guarantees are possible in RL problems, or what the optimal ratios would be, has been largely unstudied. Towards studying this question, we formulate an offline RL problem with linear features, and examine the optimal approximation ratio achieved by any estimator (even \textit{asymptotic} ones). Attainable approximation factors may depend on the number of samples available, but the optimal asymptotic approximation factor is as low as it can be since even ``sample-inefficient'' estimators are allowed.

Concretely, our learning problem is that of linear off-policy value function estimation in infinite-horizon discounted Markov Reward Processes (MRPs). Despite the apparent simplicity of this setting, even here an understanding of the blowup factors remains open. In this problem, the learner is given access to a feature-map $\phi: \cS \rightarrow \bR^d$ and an offline dataset of tuples $(s_i,r_i,s'_i)$ from the MRP. The states $s_i$ are sampled i.i.d. from an off-policy distribution $\mu$ that may be different from the stationary distribution of the MRP. We also study the \textit{aliased} setting where the states can only be observed through their feature mapping. We do not assume anything about the off-policy distribution beyond that it yields a non-degenerate second moment matrix (defined in Section \ref{sec:background}).\footnote{e.g. $\mu$ need not cover the entire state space or have good ``concentrability'' with respect to the stationary distribution}  We also do not assume that the value function of the MRP is linear in the given feature mapping, and thus the task of the learner is simply to output the \emph{best possible linear approximation} of the true value function (as measured by some norm). Our question is thus: \emph{``what is the optimal asymptotic approximation factor for linear off-policy value function estimation under misspecification?''}

Recent works \cite{amortila2020variant,perdomo_sharp_2022} have provided some negative results in the realizable setting which demonstrate that the approximation factor may be arbitrarily large \textit{in the worst case}.
In this paper, we provide instance-dependent upper and lower bound results, with the goal of pinning down the optimal approximation ratio for off-policy value function estimation under both the $L_2(\mu)$ norm and the $L_\infty$ norm. For upper bounds, we analyze the well-known (off-policy) Least Squares Temporal Difference (LSTD) algorithm \cite{bradtke1996linear}, and provide exact characterizations of its error compared to the optimal linear projection. %
This leads to an approximation factor for LSTD involving two problem-dependent terms, giving two ``failure modes'' for this algorithm. Via instance-dependent lower bounds, we show that the approximation factor attained by LSTD is optimal (up to constant factors) in a myriad of settings of interest. In other settings, we instead show that alternative model-based estimators attain the optimal approximation factors.  For ease of reference, a summary of the settings that we study and their associated results can be found in Table \ref{fig:table-results}.

Our results explain the previous unidentifiability results, as well as provides new ones. %
To our knowledge, the only prior work establishing the optimality of LSTD was in the \textit{on-policy} setting (i.e., when $\mu$ is the stationary distribution) and held for sample sizes which were much smaller than the size of the state space \cite{mou_optimal_2020}. In particular, no prior work exists on characterizing the necessary blowup of the misspecification error in the off-policy case, even that which is asymptotically achievable. Furthermore, while prior LSTD bounds are primarily in $L_2(\mu)$ norm, we also provide additional results in the $L_\infty$ norm, which allows for distribution-free error guarantees that are crucial for RL. 
\paragraph{Differences from ICML version} Compared to the ICML camera-ready, the present (arXiv) version of this paper contains improved results. In particular, we have added Subsection \ref{sec:l2munoalias} and Theorem \ref{thm:l2mulocal}, which completes all of the entries of Table \ref{fig:table-results}. The ICML version only established limiting cases for that particular setting. As this new result resolves questions left open by the ICML version, the ensuing discussions are also modified. The text is otherwise identical to the ICML version (modulo minor improvements).

\begin{table*}[t]
\centering
\begin{adjustbox}{width=0.7\textwidth}
\begin{tabular}{@{}l|c|c@{}}
\toprule
& $L_2(\mu)$ norm & $L_\infty$ norm  \\ 
\midrule
$\mu \geq 0$. Aliasing. & $\alpha^\star \approx \sqrt{ 1 + \left(\gamma \frac{\norm{\Pi_\mu P}_\mu} {\sigmamin{\Sigma^{-1/2}A\Sigma^{-1/2}}}\right)^2}$  & $\alpha^\star \approx 1 + \frac{1+\gamma}{\sigmamin{A}}$\\
\midrule 
$\mu \geq 0$. No aliasing. & $\alpha^\star \approx \sqrt{ 1 + \left(\gamma \frac{\norm{\Pi_\mu P}_\mu} {\sigmamin{\Sigma^{-1/2}A\Sigma^{-1/2}}}\right)^2}$ & $\alpha^\star \approx 1 + \frac{1+\gamma}{\sigmamin{A}}$ \\
\midrule
$\mu > 0$. Aliasing. & $\alpha^\star \approx \sqrt{ 1 + \left(\gamma \frac{\norm{\Pi_\mu P}_\mu} {\sigmamin{\Sigma^{-1/2}A\Sigma^{-1/2}}}\right)^2}$ & $\alpha^\star = \frac{1}{2(1-\gamma)}$ \\ %
\midrule 
$\mu > 0$. No aliasing. & $\alpha^\star = 1$. & $\alpha^\star = 1$.  \\
\bottomrule
\end{tabular}
\end{adjustbox}
\caption{The optimal asymptotic approximation factors $\alpha^\star$ for various settings. $\mu \geq 0$: offline distribution is arbitrary. $\mu > 0$: offline distribution has full support. Aliasing: states are only observed through feature mapping (cf. Section \ref{sec:setup}). The terms $\Pi_\mu$, $\Sigma$, and $A$ are defined in Section \ref{sec:background}.
$\approx$ indicates matching upper and lower bounds up to constants for certain parameter regimes. $=$ indicates matching upper and lower bounds.
}
\label{fig:table-results}
\end{table*}

\section{Problem setup}\label{sec:setup}

This section formalizes   linear off-policy value function estimation in discounted Markov Reward Processes.

\paragraph{Notation} We write $\Dists(\cX)$ to denote the set of probability distributions over a set $\cX$. We write $I_{n \times n}$ for the $n \times n$ identity matrix, or simply $I$ when the dimension is clear from context. For any matrix $X$, we let $\lambda_{\min}(X)$ and $\sigma_{\min}(X)$ denote its minimum eigenvalue (if $X$ is square) and minimum singular value, respectively. All vectors are column vectors, and we write $^\top$ for the transpose operator.

\paragraph{Markov Reward Processes} Markov Reward Processes arise when a fixed memoryless policy is followed in a
a Markov Decision Process \cite{puterman2014markov,szepesvari2010algorithms}. %

\begin{definition}[Markov Reward Process]
A finite discounted Markov Reward Process (MRP) $\cM = \langle \cS, \cR, \cP, \gamma \rangle$ is defined by a finite state space $\cS \in \bN$, a stochastic reward function $\cR: \cS \rightarrow \Dists([-1,1])$ with expectation $r(s) = \int x \dd(\cR(s))$, a transition function $\cP: \cS \rightarrow \Dists(\cS)$, and a discount factor $\gamma \in [0,1)$. %
\end{definition}
To simplify the presentation, %
we consider finite (but arbitrarily large) state spaces. As is standard, we have assumed that the reward distribution at any state is almost-surely bounded. We will write $S \coloneqq |\cS|$, and canonically identify $\cS = \{1,\cdots,S\}$. We can identify $r$ with a ${S}$-dimensional vector and $\cP$ with the $S\times S$ row-stochastic matrix. We write $\cP(s'\vert s) = [\cP(s)](s') = P_{s,s'}$. %
 The value function of an MRP is the following:

\begin{definition}[Value function]
The value function in an MRP $\cM$ is the function $v_\cM: \cS \mapsto [\frac{-1}{1-\gamma}, \frac{1}{1-\gamma}]$ defined by:
\[
v_\cM(s) = \bE\left[\sum_{t\geq 0} \gamma^t r(S_t) \,\bigg|\, S_0 := s, S_t \sim P(S_{t-1})\right]\,.
\]
In vector notation we have
\[
v_\cM = \sum_{t=0}^\infty \gamma^t P^t r = (I-\gamma P)^{-1} r,
\]
which is an $S$-dimensional vector.
\end{definition}

\paragraph{Policy evaluation with misspecified linear features}

 A feature map $\varphi: \cS \rightarrow \bR^d$ is given, which the learner can use to approximate $v_\cM$.
 The task of the learner is to output a function $f:\cS \to \bR$ that is linear in the features in the sense that  for some $\theta\in \bR^d$, for every $s\in \cS$, $f(s) = \theta^\top \varphi(s)$. We write $\Phi \in \bR^{S \times d}$ for the matrix whose $s^\text{th}$ row $(s \in S)$ is  $(\phi(s))^\top$, and $\cF_\Phi = \{f_\theta = \Phi \theta \mid \theta \in \bR^d\} \subseteq \bR^{S}$ for the subspace consisting of linear functions. The learner will be evaluated by how far the function $f$ is from $v_\cM$ in a given norm, which is also available to the learner. 
 We consider the so-called \emph{misspecified} setting, that is, we do not assume $v_{\cM}$ itself is a linear function of the features. Instead, the learner is only asked to
 produce a function whose error is not much larger than that of the \emph{best linear approximation} of $v_\cM$ in the given norm (obtained via the projection operators defined in \cref{sec:background}). %

\paragraph{Observation model} 

We study the \emph{offline} setting, meaning that the learner is given a dataset $\cD_n$ from the MRP and no interaction is allowed. We will study both the \emph{aliased} and \emph{non-aliased} settings. %
In the aliased setting \cite{sutton2018reinforcement}, the states are only seen through the feature mapping. Formally, the observations take the form of $n$ i.i.d. samples, which are generated by the following process 
\begin{align}
    &\phi_i = \phi(s_i) \text{ where } s_i \stackrel{\texttt{i.i.d.}}{\sim} \mu, \label{eq:mu-def}\\
    &R_i \sim \cR(s_i), \\
    &\phi'_i = \phi(s'_i) \text{ where } s'_i \sim \cP(s_i).
\end{align}
We refer to the joint distribution over the triplets $(\phi_i,r_i,\phi'_i)$ as $\bQ_{\cM,\mu,\phi}$, and thus the dataset $\cD_n = \{ (\phi_i,r_i,\phi'_i) \}_{i=1}^n$ consists of $n$ i.i.d.~samples from $\bQ_{\cM,\mu,\phi}$.

In the \emph{non-aliased} setting, the learner instead observes
$\cD^\diamond_n = \{ (s_i,\phi(s_i),r_i,s_i',\phi(s'_i)) \}_{i=1}^n$, where
\begin{equation}
    s_i \stackrel{\texttt{i.i.d.}}{\sim} \mu, \, r_i \sim \cR(s_i), \, s'_i \sim \cP(S_i). %
\end{equation}
We will refer to the joint distribution over non-aliased tuples $(s_i,\phi(s_i), r_i, s'_i,\phi(s'_i))$ as $\bQ^\diamond_{\cM,\mu,\phi}$. We write $\supp(\mu) \coloneqq \{\mu(s) > 0\} \subseteq \cS$ for the support of $\mu$.

We are in the \emph{off-policy} setting, by which we mean that $\mu$ is \emph{not} restricted to be a stationary distribution of the transition matrix $P$. In particular, we do not assume that $\mu$ has good ``concentrability'' or has support over the entire state space.

All of our upper bounds will apply to the aliased setting and thus also for the easier non-aliased setting, so we will only need to distinguish the settings when stating lower bounds. 
 We make some minor ``quality of life'' assumptions about $\phi$ and $\mu$, which are mainly for convenience. Let us write $D$ for the diagonal matrix with the entries of $\mu$ along its diagonal (i.e. $D_{s,s} = \mu(s)$, for $s \in S$, and $0$ otherwise).

\begin{assumption}[Feature boundedness \& non-degenerate second moment]\label{ass:lrassumptions}
We have $
\max_s \norm{\phi(s)}_2 \leq 1. %
$
Furthermore, we assume that 
$
\Sigma \coloneqq \Phi^\top D \Phi =\bE_\mu[\phi(s)\phi(s)^\top]
$
is invertible. 
\end{assumption}
Above, the $L_2$-boundedness of $\phi$ just provides a normalization of the features and can be assumed without loss of generality. Furthermore, if $\Sigma$ is not invertible then the features are redundant; the dimensionality of the feature space can be reduced so that after the reduction $\Sigma$ is invertible. Hence, this assumption can also be made without loss of generality, and we further know that it is insufficient by itself for the value prediction problem (even under realizability) \cite{amortila2020variant}. %

\paragraph{Optimal asymptotic approximation factors}

In the misspecified setting, the quality of a finite-sample estimator is characterized by its \emph{approximation ratio} and its \emph{statistical error}. 
If, given the dataset $\cD_n$ a learner returns the (possibly random) function $\hat v = \hat v(\cD_n) \in \cF_\Phi$, one often upper bounds the error of the returned function via an \emph{oracle inequality} of the following form: %
\begin{align}
\norm{\hat v - v_\cM} &\leq \underbrace{\alpha_n(\cM,\mu,\phi)}_{\text{approximation factor}} 
\underbrace{\inf_{\theta} \norm{\Phi \theta - v_M}}_{\text{oracle's error}} \nonumber \\
&\,\, + \underbrace{\varepsilon_n(\cM,\mu,\phi)}_{\text{statistical error}} \label{eq:oracle-ineq-def},
 \end{align}
 which holds either with high probability or in expectation. The approximation factor measures the magnification of the oracle
 approximation error $\inf_{\theta}\norm{\Phi \theta - v_\cM}$, and may be due to the imperfection
 of the learning algorithm or because of a fundamental hurdle that every learner faces (or both).
As we will be interested in the fundamental difficulty all learners face in off-policy estimation, regardless of sample-sizes,
we will consider the limit of infinite sample sizes, where the statistical error is zero.
In particular, we can think of this as the case when 
the learner is given the distribution $\bQ_{\cM,\mu,\phi}$ (in the non-aliased case the distribution 
 $\bQ^\diamond_{\cM,\mu,\phi}$). In the non-aliased case this is equivalent to the learner being given the model $\cP(s)$ and $r(s)$ \emph{for all states $s \in \supp(\mu)$}, and its task can be viewed as ``completing'' this model outside of the data distribution (using the features). A learner is a map from distributions of the above form to linear functions over $\cF_\Phi$. The approximation ratio exhibited by a deterministic asymptotic estimator is:%
\begin{equation}\label{eq:approx-ratio-def}
\alpha^{\hat{v}}_{\norm{\cdot}}(\cM,\mu,\phi) = \frac{ \norm{\hat{v}(\bQ_{\cM,\mu,\phi}) - v_\cM} }{ \inf_{\theta} \norm{\Phi \theta - v_\cM} },
\end{equation}
with the convention that $\frac{0}{0}=1$ and $\frac{x}{0}=\infty$ whenever $x > 0$. %
We refer to $\inf_{\theta} \norm{\Phi \theta - v_M}$ as the \emph{misspecification error} of the MRP $\cM$. %
We do not need to consider random asymptotic estimators since, if one measures them by their expected approximation ratio, Jensen's inequality tells us that deterministic estimators are optimal.\footnote{Since the averaged estimator $\bE[\hat{v}]$ will be deterministic and output functions in $\cF_\Phi$, and we have $\norm{ \bE[\hat{v}(\bQ_{\cM,\mu,\phi})] - v_\cM} \leq \bE[\norm{\hat{v}(\bQ_{\cM,\mu,\phi}) - v_\cM}]$.}

We will consider two natural choices for the norms, the weighted $L_2(\mu)$ norm and the $L_\infty$ norm. These are defined by
$$
\norm{v}_\mu = \left(\sum_s \mu(s) v^2(s)\right)^{1/2} \, \quad \& \quad \, \norm{v}_\infty = \max_{s} \lvert v(s) \rvert, 
$$
where $\mu$ is the offline state distribution from Equation \eqref{eq:mu-def}. For any matrix $X \in \bR^{S \times S}$, we will write $\norm{X}_\mu$ for it's $L_2(\mu)$-operator norm. The $L_2(\mu)$ norm is a natural choice for function estimation as it only asks to minimize the error on states which have been encountered. In particular, for the simpler problem of linear regression (a special case of our setting for $\gamma=0$), the least squares estimator attains the optimal approximation ratio of $1$ under this norm. Meanwhile, the $L_\infty$ norm is important for obtaining distribution-independent guarantees which we often need for RL, e.g. when value prediction is being used as a subroutine \cite{lagoudakis2003least}. We emphasize that our problem setting requires \textit{function estimation} (estimating $v_\cM$) rather than simply \textit{return estimation} (estimating $v_\cM$ under an initial distribution). Function estimation is a strictly more difficult problem, and there are many applications where one would require a guarantee on the error of off-policy evaluation on the whole space rather than simply at the initial states, e.g. for the aforementioned subroutines or in model selection problems \cite{huangbeyond}. We will write $\alpha_\mu$ for approximation ratios in the $L_2(\mu)$ norm, and $\alpha_\infty$ for approximation ratios in the $L_\infty$ norm. %

\section{Background}\label{sec:background}

The optimal linear approximations of $v_\cM$ are obtained by taking its projection via the projection operators.

\begin{definition}[Projection operators]
We write $\Pi_\mu$ for the linear projection in the $L_2(\mu)$ norm, i.e. 
$
\Pi_\mu v = \argmin_{\hat{v} \in \cF_\Phi} \norm{\hat{v}- v}_\mu.
$
This operator has a closed form,
\begin{equation}
\Pi_\mu = \Phi \Sigma^{-1} \Phi^\top D,
\end{equation}
which is well-defined by Assumption \ref{ass:lrassumptions}. We also write $\Pi_\infty$ for the linear projection in the $L_\infty$ norm, i.e. 
$
\Pi_\infty v \in \argmin_{\hat{v} \in \cF_\Phi} \norm{\hat{v} - v}_\infty.
$
The $L_\infty$ projection may not be unique, and we consider that ties can be broken arbitrarily (we will not need to refer to a specific minimizer, only the value of the minimum). %
\end{definition}

One canonical estimator for the policy evaluation problem is the Least Squares Temporal Difference (LSTD) algorithm \cite{bradtke1996linear}. 
In the limit of infinite samples, or
\emph{at the population level}, it is defined by the estimator 
\begin{align}
A &\coloneqq \Phi^\top D (I - \gamma P) \Phi = \bE_{s,s'} \left[\phi(s) ( \phi(s) - \gamma \phi(s'))^\top \right] \label{eq:A-matrix-def} \\
b &\coloneqq \Phi^\top D r = \bE_{s \sim \mu}\left[\phi(s) r(s) \right] \label{eq:b-vector-lstd} \\
\LSTD &\coloneqq A^{-1} b\,, \qquad \vLSTD = \Phi \LSTD\,,
 \label{eq:lstd}
\end{align}
whenever $A$ is invertible. In the sequel we will see that we do not need to define $\LSTD$ when $A$ is not invertible since in that case no estimator can have a finite approximation ratio. The finite-sample version of LSTD is obtained by replacing $A$ and $b$ by their empirical averages.  We note that LSTD is applicable in the aliased setting.%

\section{Approximation Ratios in the $L_2(\mu)$ Norm}\label{sec:l2mu}

We begin by studying the optimal approximation factor in the $L_2(\mu)$ norm. Section \ref{sec:l2mu-aliasing} provides a general upper bound for the approximation ratio attained by LSTD and then provides a nearly-matching lower bound for the aliased setting. Section \ref{sec:l2mu-no-aliasing} establishes that this approximation ratio is also optimal in the non-aliased setting. The results of this section are summarized in the ``$L_2(\mu$) norm'' column of Table \ref{fig:table-results}.

\subsection{Under aliasing: LSTD attains the optimal approximation factor}\label{sec:l2mu-aliasing}

Our first result is a tight upper bound for the approximation factor obtained by LSTD. 

\begin{restatable}{theorem}{lstdub}\label{thm:lstd-ub}
Assume that the $A$ matrix from Equation \eqref{eq:A-matrix-def} is invertible. Then the population LSTD estimator of Equation \eqref{eq:lstd} has an approximation factor upper bound of 
\begin{align}
\alpha_\mu^{\text{LSTD}} &\leq \sqrt{1 + \left(\gamma \norm{\Phi A^{-1} \Phi^\top D P}_\mu\right)^2} \label{eq:lstd-ub-sharper}\\
&\leq \sqrt{1 + \left(\gamma \frac{\norm{\Pi_\mu P}_\mu}{\sigmamin{\Sigma^{-1/2}A\Sigma^{-1/2}}}\right)^2} \label{eq:lstd-ub}
\end{align}
\end{restatable}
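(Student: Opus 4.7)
The plan is to organize the proof around a Pythagorean decomposition followed by two separate norm bounds. Since $\vLSTD \in \cF_\Phi$ and $\Pi_\mu v_\cM$ is the $L_2(\mu)$-orthogonal projection of $v_\cM$ onto $\cF_\Phi$, orthogonality gives
\[
\norm{\vLSTD - v_\cM}_\mu^2 = \norm{\vLSTD - \Pi_\mu v_\cM}_\mu^2 + \norm{\Pi_\mu v_\cM - v_\cM}_\mu^2.
\]
After dividing through by $\norm{\Pi_\mu v_\cM - v_\cM}_\mu^2$, the $\sqrt{1 + (\cdot)^2}$ shape in the theorem emerges automatically, and the task reduces to controlling the ``deviation from the projection'' $\norm{\vLSTD - \Pi_\mu v_\cM}_\mu$ in terms of the misspecification error.

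For the first inequality \eqref{eq:lstd-ub-sharper}, I would exploit the Bellman identity $r = (I-\gamma P)v_\cM$ to rewrite $b = \Phi^\top D(I-\gamma P)v_\cM$, so that $\LSTD = A^{-1}\Phi^\top D(I-\gamma P)v_\cM$. Setting $\theta_\Pi = \Sigma^{-1}\Phi^\top D v_\cM$ so that $\Phi\theta_\Pi = \Pi_\mu v_\cM$, I would decompose $v_\cM = \Pi_\mu v_\cM + (v_\cM - \Pi_\mu v_\cM)$ inside the LSTD formula. The $\Pi_\mu v_\cM$ piece collapses via $A^{-1}\Phi^\top D(I-\gamma P)\Phi\theta_\Pi = A^{-1}A\theta_\Pi = \theta_\Pi$, while in the residual piece the identity component vanishes because $\Phi^\top D(v_\cM - \Pi_\mu v_\cM) = 0$ (orthogonality of the projection residual to the columns of $\Phi$ in the $D$-weighted inner product). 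What remains is the identity
\[
\vLSTD - \Pi_\mu v_\cM = -\gamma\,\Phi A^{-1}\Phi^\top D P\,(v_\cM - \Pi_\mu v_\cM),
\]
from which \eqref{eq:lstd-ub-sharper} follows by submultiplicativity of the $L_2(\mu)$-operator norm.

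For the second inequality \eqref{eq:lstd-ub}, the task is to upper bound $\norm{\Phi A^{-1}\Phi^\top D P}_\mu$. The key tool is the change-of-variable identity $\norm{\Phi y}_\mu = \norm{\Sigma^{1/2}y}_2$ for $y \in \bR^d$ (immediate from $\Phi^\top D\Phi = \Sigma$). Applied with $y = A^{-1}\Phi^\top D P v$, it converts the $L_2(\mu)$ norm into $\ell_2$; inserting $\Sigma^{1/2}\Sigma^{-1/2}$ between $A^{-1}$ and $\Phi^\top D P$ then factorizes the expression as $(\Sigma^{1/2}A^{-1}\Sigma^{1/2})(\Sigma^{-1/2}\Phi^\top D P v)$. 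The first factor has $\ell_2$-operator norm $1/\sigmamin{\Sigma^{-1/2}A\Sigma^{-1/2}}$ since $\Sigma^{1/2}A^{-1}\Sigma^{1/2} = (\Sigma^{-1/2}A\Sigma^{-1/2})^{-1}$, while $\norm{\Sigma^{-1/2}\Phi^\top D P v}_2 = \norm{\Phi\Sigma^{-1}\Phi^\top D P v}_\mu = \norm{\Pi_\mu P v}_\mu$ by the same change-of-variable identity. Taking a supremum over $v$ with $\norm{v}_\mu \leq 1$ yields \eqref{eq:lstd-ub}.

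The main obstacle is the middle algebraic step: the derivation hinges on noticing that the Bellman identity aligns $(I-\gamma P)$ inside $b$ with the matrix $A$, so that the $I$ piece cancels against the projected component (via $A^{-1}A$) and the $I$ piece cancels against the orthogonal residual (via $\Phi^\top D \Delta = 0$), leaving only the single $\gamma P$ term. Once this identity is in hand, the operator-norm factorization in the second inequality is routine linear algebra.
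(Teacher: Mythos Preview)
Your proposal is correct and follows essentially the same approach as the paper: the paper also derives the exact identity $\Phi\LS - \Phi\LSTD = \gamma\,\Phi A^{-1}\Phi^\top D P\,(\Pi_\mu v_\cM - v_\cM)$ via the Bellman equation, the decomposition $v_\cM = \Pi_\mu v_\cM + v^\perp$ with $\Phi^\top D v^\perp = 0$, and then applies the Pythagorean theorem together with the same $\Sigma^{1/2}$ change-of-variable factorization to obtain \eqref{eq:lstd-ub}. The only cosmetic difference is that you present the Pythagorean step first and the paper presents it last.
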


\begin{proof}[Proof (sketch)]
This result 
relies on an \emph{exact error decomposition} of the LSTD solution: 
\begin{equation} \label{eq:ls-minus-lstd}
\Phi\LS - \Phi\LSTD = \gamma \Phi A^{-1}\Phi^\top D P (\Pi_\mu v_\cM - v_\cM),
\end{equation}
where $\LS$ is the least-squares parameter corresponding to the optimal projection, i.e. satisfying $\Phi \LS = \Pi_\mu v_\cM$. See Appendix \ref{app:lstd-ub} for a full proof. 
\end{proof}

We mention that every occurrence of $\gamma P$ in the above bound can also be replaced by $(I-\gamma P)$, so we can in effect take the minimum over the two bounds. The savings from this are limited since $\left|\norm{ \Pi_\mu (I - \gamma P) }_\mu - \gamma \norm{\Pi_\mu P}_\mu \right|\leq 1$. 
We note that the vector $\Pi_\mu v_\cM - v_\cM$ is the component of the value function which is orthogonal to the features, so Equation \eqref{eq:ls-minus-lstd} indicates that the error of LSTD is precisely dictated by action of the linear operator $\Phi A^{-1} \Phi^\top D P$ on this vector. %
The second upper bound in Theorem \ref{thm:lstd-ub} (Equation \eqref{eq:lstd-ub}) further separates out the two terms $\norm{\Pi_\mu P}_\mu$ and $\sigmamin{\Sigma^{-1/2}A\Sigma^{-1/2}}$. We identify these two terms as the two instance-dependent factors which control the hardness of the value function estimation problem under the $L_2(\mu)$ norm. We next give an instance-dependent lower bound which shows that for any instance values of the two parameters (within certain domains), there is a nearly-matching lower bound on the achievable asymptotic approximation factor.
\begin{restatable}{theorem}{aliasedlocallb}\label{thm:aliased-local-lb}
In the aliased setting, $\forall\, x \in [1,\infty], \forall y \in (0,\frac{1}{2})$, there exists a collection of two instances $\bM=\{(\cM_1,\mu_1,\phi_1),(\cM_2,\mu_2,\phi_2)\}$ which both satisfy $\norm{\Pi_\mu P}_\mu = x$ and $\sigmamin{\Sigma^{-1/2}A\Sigma^{-1/2}}=y$ and generate the same data distribution $\bQ$, yet any estimator $\hat{v}$ will satisfy
\begin{equation}\label{eq:aliased-local-lb}
\sup_{(\cM,\mu,\phi) \in \bM} \alpha^{\hat{v}}_\mu(\cM,\mu,\phi) \geq \sqrt{ 1 + \gamma^2 \frac{\norm{\Pi_\mu P}^2_\mu-1}{\sigma^2_{\min}(\Sigma^{-1/2}A\Sigma^{-1/2})}} 
\end{equation}
\end{restatable}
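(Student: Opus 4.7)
The plan is a two-instance (``two-point'') indistinguishability argument exploiting aliasing. I would construct two MRP instances $(\cM_i,\mu_i,\phi_i)$, $i=1,2$, that induce the \emph{same} observable distribution $\bQ$, so that any deterministic estimator $\hat v$ produces the same function on both. One instance will be realizable ($v_{\cM_1}\in\cF_\Phi$, so its misspecification error is zero): if $\hat v\neq v_{\cM_1}$ then $\alpha_\mu^{\hat v}(\cM_1,\mu_1,\phi_1)=\infty$, trivially beating the stated bound. Thus one may assume $\hat v = v_{\cM_1}$, and the problem reduces to lower-bounding
\begin{equation*}
\alpha_\mu^{\hat v}(\cM_2,\mu_2,\phi_2) \;=\; \frac{\|v_{\cM_1}-v_{\cM_2}\|_\mu}{\|v_{\cM_2}-\Pi_\mu v_{\cM_2}\|_\mu}.
\end{equation*}

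\paragraph{Algebraic reduction.} Since $\cM_1$ is realizable, the population LSTD estimator applied to the common $\bQ$ returns $v_{\cM_1}$ exactly (as $b=A\theta^\star$ when $v_{\cM_1}=\Phi\theta^\star$). Viewing $v_{\cM_2}$ as the ``ground truth'' in the LSTD error decomposition of Theorem \ref{thm:lstd-ub} (Equation \eqref{eq:ls-minus-lstd}),
\begin{equation*}
v_{\cM_1}-\Pi_\mu v_{\cM_2} \;=\; \gamma\,\Phi A^{-1}\Phi^\top D P\,(\Pi_\mu v_{\cM_2}-v_{\cM_2}).
\end{equation*}
Because $v_{\cM_1},\Pi_\mu v_{\cM_2}\in\cF_\Phi$ while $w:=v_{\cM_2}-\Pi_\mu v_{\cM_2}$ lies in the $L_2(\mu)$-orthogonal complement of $\cF_\Phi$, Pythagoras yields
\begin{equation*}
\bigl(\alpha_\mu^{\hat v}\bigr)^2 \;=\; 1+\gamma^2\,\frac{\|\Phi A^{-1}\Phi^\top D P\, w\|_\mu^2}{\|w\|_\mu^2}.
\end{equation*}
It therefore suffices to construct aliased instances satisfying the parameter constraints and a choice of $w\in \cF_\Phi^\perp$ achieving $\|\Phi A^{-1}\Phi^\top DPw\|_\mu/\|w\|_\mu\ge \sqrt{x^2-1}/y$.

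\paragraph{Construction.} I would use a small aliased MRP partitioned into (a) an \emph{in-support} block carrying all of $\mu$ and determining $\Sigma$, $A$, $b$, and hence $\bQ$; and (b) an \emph{out-of-support} block that hosts the misspecification direction. The in-support dynamics are chosen to realize the prescribed $\sigmamin{\Sigma^{-1/2}A\Sigma^{-1/2}}=y$ and $\norm{\Pi_\mu P}_\mu = x$. The two instances differ only in the rewards assigned to the out-of-support states: the rewards for $\cM_1$ are tuned to make $v_{\cM_1}\in \cF_\Phi$, while those for $\cM_2$ are tuned so that $w=v_{\cM_2}-\Pi_\mu v_{\cM_2}$ points along the worst-case singular direction of $\Phi A^{-1}\Phi^\top D P$ restricted to $\cF_\Phi^\perp$. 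Since rewards at out-of-support states contribute neither to $b$ nor to any observable statistic, the two instances remain observationally equivalent, i.e.\ $\bQ_{\cM_1,\mu_1,\phi_1}=\bQ_{\cM_2,\mu_2,\phi_2}$. The factor $\sqrt{x^2-1}$ emerges because admissible $w$ must be orthogonal to $\cF_\Phi$, which removes exactly one unit of the operator norm of $\Pi_\mu P$ (the part acting within $\cF_\Phi$ accounts for the $-1$), and the $1/y$ factor is the extremal singular-value amplification of $\Sigma^{1/2}A^{-1}\Sigma^{1/2}$.

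\paragraph{Main obstacle.} The algebraic reduction is immediate from the LSTD identity of Theorem \ref{thm:lstd-ub} together with Pythagoras; the delicate step is the joint construction. One must simultaneously (i) match $\norm{\Pi_\mu P}_\mu=x$ and $\sigmamin{\Sigma^{-1/2}A\Sigma^{-1/2}}=y$ exactly; (ii) keep $P$ row-stochastic, $\mu$ a probability vector, and all rewards in $[-1,1]$; and (iii) arrange the out-of-support portion so that the induced orthogonal direction $w$ aligns with the extremal singular direction of $\Phi A^{-1}\Phi^\top D P$ on $\cF_\Phi^\perp$. Decoupling these requirements is precisely what the ``in-support/out-of-support'' split enables: the parameters in (i) are determined by the in-support data, while (iii) is controlled by the freely tunable out-of-support block that leaves $\bQ$ unchanged. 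The restriction $y\in(0,1/2)$ in the theorem likely reflects the need to keep the Neumann-series expansion of $A^{-1}$ (and the associated tuning parameters in the construction) within feasible ranges.
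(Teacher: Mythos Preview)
Your high-level two-point strategy (one realizable instance forcing the estimator, then bounding the ratio on the other) is exactly right, and your algebraic reduction via the LSTD identity and Pythagoras is a clean way to frame the target quantity. However, the \emph{construction} you propose departs substantially from the paper's and, as written, does not actually exploit aliasing.

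The paper's construction is far simpler than what you outline: it uses a two-state MRP with \emph{full support} ($\mu_1,\mu_2>0$) and a one-dimensional feature map with $\phi(s_1)=\phi(s_2)=1$. Indistinguishability comes purely from feature collision, not from unobserved states: both instances share the same $P$ and $\mu$, but one has deterministic rewards $(1,0)$ and the other has $\mathrm{Ber}(\mu_1)$ rewards at both states, yielding the same aliased triple distribution $(\phi,r,\phi)$. With $d=1$ everything is scalar: $\Sigma=1$, $A=1-\gamma$, so setting $\gamma=1-y$ gives $\sigma_{\min}(\Sigma^{-1/2}A\Sigma^{-1/2})=y$, and a short computation gives $\norm{\Pi_\mu P}_\mu=\sqrt{1+\mu_1/\mu_2}$, which is set to $x$ by choosing $\mu_1=(x^2-1)/x^2$. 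The ratio of squared errors is then computed directly; the restriction $y\in(0,\tfrac12)$ is not about Neumann series but simply ensures $\gamma>\tfrac12$, which is used in the final algebraic simplification $\frac{2\gamma-1}{(1-\gamma)^2}\ge\frac{1}{(1-\gamma)^2}$.

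Your proposed mechanism---hiding the difference in rewards at out-of-support states---is the device used for the \emph{non-aliased} lower bounds (Theorems~\ref{thm:sigmamin-lb} and~\ref{thm:l2mulocal}), and there it is genuinely delicate: enforcing $\norm{\Pi_\mu P}_\mu<\infty$ with unsupported states imposes the structural constraints of Lemma~\ref{lemma:pipmu}, and simultaneously aligning $w$ with the extremal direction requires fixed-point arguments. You would be importing all of that machinery into a setting where a two-state, fully-supported, single-feature example suffices. Moreover, your claim that orthogonality ``removes exactly one unit of the operator norm'' is not a general fact; in the paper's instance it holds because $\norm{\Pi_\mu P w}_\mu^2/\norm{w}_\mu^2=\mu_1/\mu_2=x^2-1$ falls out of the explicit $2\times 2$ computation, not from an abstract principle.
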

\begin{proof}[Proof (sketch)]
	 We construct two MRPS which, under aliasing, will generate the same data distribution. However, the two MRPs have different value functions and one will be realizable. In particular, the approximation ratio will be infinite if learner doesn't output that particular value function. The lower bound is obtained by calculating the error of this value function as the estimate for the first MRP. See Figure \ref{fig:aliasing-lb-l2mu} for an illustration of the two MRPs, and Appendix \ref{app:aliased-local-lb} for a full proof. 
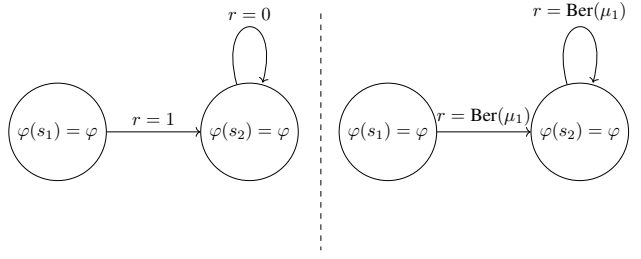
\begin{figure}
\scalebox{0.7}{
\begin{tikzpicture}
\node[state] (q1) {$\varphi(s_1) = \varphi$};
\node[state, right=0.7in of q1] (q2) {$\varphi(s_2)=\varphi$};
\node[state, right=0.3in of q2] (p1) {$\varphi(s_1) = \varphi$};
\node[state, right=0.7in of p1] (p2) {$\varphi(s_2)=\varphi$};
\draw[->] (p1) edge[above] node{$r=\text{Ber}(\mu_1)$} (p2)
(p2) edge[loop above] node{$r=\text{Ber}(\mu_1)$} (p2);
\draw[->] (q1) edge[above] node{$r=1$} (q2)
(q2) edge[loop above] node{$r=0$} (q2);
\draw [dashed] (5,2.25) -- (5,-2.25);
\end{tikzpicture}
}
\caption{The construction of Theorem \ref{thm:aliased-local-lb}. Left: MRP $\cM_1$. Right: MRP $\cM_2$. They generate the same aliased distribution $\bQ$.}\label{fig:aliasing-lb-l2mu}
\end{figure}
\end{proof}
The numerator in the second term of the lower bound is always non-negative due to the restriction on the domain of $x$. Furthermore, when $x > \sqrt{2}$, then the upper bound (Eq. \eqref{eq:lstd-ub}) and the lower bound (Eq. \eqref{eq:aliased-local-lb}) differ by at most a multiplicative factor of $2$. Thus, in this regime of the instance-dependent parameters, LSTD attains the asymptotically optimal approximation ratio up to constant factors. Our domain restrictions on $x$ and $y$ in the lower bound also do not preclude the interesting regimes of the problem, i.e. the cases where $\norm{\Pi_\mu P}_\mu$ is large ($\rightarrow \infty$) or $\sigmamin{\Sigma^{-1/2}A\Sigma^{-1/2}}$ is small ($\rightarrow 0$). Of course, this lower bound heavily relies on the aliased nature of the problem. %
Our next section examines whether the same lower bound holds in the non-aliased setting, where the learner is less restricted.

\subsection{Without aliasing: what is the optimal approximation factor?}\label{sec:l2mu-no-aliasing}

In the non-aliased case, the learner can  still use the LSTD algorithm, so the upper bound of Theorem \ref{thm:lstd-ub} still holds. For the lower bounds, the class of learners that we are competing against now have more information. 
We show that, despite that LSTD does not use the state information, it still attains the optimal approximation factor for this setting. As this result contains our most intricate argument, we start with two illustrative warm-up results showing that both of our instance-dependent factors appearing in Equation \eqref{eq:lstd-ub} are independently necessary, meaning that the finiteness of one alone does not guarantee a finite approximation ratio. The main result is given in Theorem \ref{thm:l2mulocal}.

\subsubsection{$\norm{\Pi_\mu P}_\mu$ is necessary}

The first result of two exhibits a family of instances where $\sigmamin{\Sigma^{-1/2}A\Sigma^{-1/2}} > 0$ yet the approximation ratio of any estimator is infinite. By the upper bound of Theorem \ref{thm:lstd-ub}, this must indicate that $\norm{\Pi_\mu P}_\mu = \infty$, and indeed this is the case. 

\begin{restatable}{lemma}{PiPnecessary}\label{lemma:PiPnecessary}
In the non-aliased setting, there exists a family of instances $\bM=\{(\cM,\mu,\phi)\}$ which all have an $L_2(\mu)$-misspecification of $0$, $\sigmamin{\Sigma^{-1/2}A\Sigma^{-1/2}} > 0$, and $\norm{\Pi P}_\mu = \infty$, yet any estimator $\hat{v}$ will satisfy 
\[
\sup_{(\cM,\mu,\phi) \in \bM} \alpha^{\hat{v}}_\mu(\cM,\mu,\phi) = \infty
\]
\end{restatable}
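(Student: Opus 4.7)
My strategy is to exhibit a small two-state family of MRPs, sharing the same $\mu$ and $\phi$ but differing in their rewards on a state outside $\supp(\mu)$, so that the non-aliased data distribution is identical under both while the true value functions disagree on $\supp(\mu)$. The finiteness of $\sigma_{\min}(\Sigma^{-1/2}A\Sigma^{-1/2})$ will come for free since $\mu$ has only a single support point and one feature; the ``blow-up'' in $\|\Pi_\mu P\|_\mu$ will be engineered by having the chain leave $\supp(\mu)$ (so that inputs $v$ supported on $\cS\setminus\supp(\mu)$ have $\|v\|_\mu = 0$ but nonzero $\|\Pi_\mu P v\|_\mu$).

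Concretely, I would take $\cS = \{s_1,s_2\}$, $\phi(s_1) = \phi(s_2) = 1 \in \bR$, $\mu = \delta_{s_1}$, and the deterministic transitions $\cP(s_2 \mid s_1) = \cP(s_2 \mid s_2) = 1$. For the family, set $r(s_1) = 0$ in both MRPs and let $r(s_2)$ differ, e.g.\ $r_1(s_2) = 0$ and $r_2(s_2) = 1$. A direct calculation then gives $v_{\cM_1} \equiv 0$ while $v_{\cM_2}(s_1) = \gamma/(1-\gamma)$, $v_{\cM_2}(s_2) = 1/(1-\gamma)$. Both value functions are constant on $\supp(\mu)=\{s_1\}$, hence perfectly representable by some $\theta \in \bR$, so the $L_2(\mu)$-misspecification is $0$ for each instance. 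Also $\Sigma = 1$ and $A = 1-\gamma$, so $\sigma_{\min}(\Sigma^{-1/2}A\Sigma^{-1/2}) = 1-\gamma > 0$. Finally, taking $v = c\cdot \mathbbm{1}_{s_2}$ we see $\|v\|_\mu = 0$ yet $\|\Pi_\mu P v\|_\mu = |c|$, so $\|\Pi_\mu P\|_\mu = \infty$ (as the upper bound of Theorem~\ref{thm:lstd-ub} predicts is necessary when the approximation factor diverges).

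The non-aliased data distribution $\bQ^\diamond$ is supported on the single tuple $(s_1,1,0,s_2,1)$ in both MRPs, because $\mu$ never samples $s_2$ and $r(s_1) = 0$ is common. Consequently any estimator's output $\hat v(\bQ^\diamond) = \Phi\theta$ is some constant $c \in \bR$, independent of which instance generated the data. Its errors are $\|\Phi c - v_{\cM_1}\|_\mu = |c|$ and $\|\Phi c - v_{\cM_2}\|_\mu = |c - \gamma/(1-\gamma)|$. Since both misspecifications are zero, the approximation ratios are (by the convention $0/0=1$, $x/0=\infty$) infinite whenever the corresponding numerator is positive. No single $c$ kills both numerators, so $\sup_{(\cM,\mu,\phi)\in\bM}\alpha^{\hat v}_\mu(\cM,\mu,\phi) = \infty$ for every estimator, yielding the claim.

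\textbf{Main obstacle.} The only subtle point is satisfying all three requirements simultaneously, in particular making $\|\Pi_\mu P\|_\mu$ infinite while keeping both the misspecification zero and $\sigma_{\min}(\Sigma^{-1/2}A\Sigma^{-1/2})$ strictly positive. This forces $\mu$ to lack full support (so the $L_2(\mu)$ ``norm'' is only a seminorm, which is what allows $\|\Pi_\mu P\|_\mu$ to diverge) while still ensuring $A$ restricted to the observed subspace is nondegenerate --- the construction above threads this needle using a single support point and a chain that exits it.
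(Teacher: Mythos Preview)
Your proposal is correct and follows essentially the same approach as the paper: both use the two-state chain $s_1\to s_2$, $s_2\to s_2$ with $\mu=\delta_{s_1}$ and an unobserved reward at $s_2$, then argue that the non-aliased data distribution is identical across the family while the value at $s_1$ differs, forcing an infinite ratio since the $L_2(\mu)$-misspecification is trivially zero on a single support point. The only difference is cosmetic: the paper takes $\phi(s_1)=\gamma$, $\phi(s_2)=1+\varepsilon$ (giving $A=-\gamma^2\varepsilon$), whereas you take $\phi(s_1)=\phi(s_2)=1$ (giving $A=1-\gamma$); your choice is slightly cleaner and even respects the feature-boundedness assumption exactly.
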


\begin{proof}%
We take MRPs which have the same transition dynamics as those in the construction of Theorem \ref{thm:aliased-local-lb}. They have a reward $r(s_1) = 0$ and $r(s_2) = r$, and $\mu(s_1) = 1$, $\mu(s_2)=0$. The features are arbitrary non-zero vectors. Realizability under $L_2(\mu)$ is trivially satisfied since only $\supp(\mu) = \{s_1\}$ so we only have to represent a scalar. No estimator can recover the true value function since there is no data on state $s_2$. See Appendix \ref{app:PiPnecessary} for a full proof. 
\end{proof}

This example illustrates the interpretation that $\norm{\Pi_\mu P}_\mu$ captures the intuitive source of hardness in value function estimation. One interpretation for the case where the features are ``tabular'' is that $\norm{\Pi_\mu P}_\mu$ is large (or infinite) when there is a lack of ``pushforward'' coverage \cite{xie2021batch}, meaning that a state $s \in \supp(\mu)$ may transition to a state $s' \notin \supp(\mu)$. Since the value at $s$ depends on the value at $s'$, we may not be able to predict $v_\cM(s)$ even under realizability. However, in the sequel we will see examples where there is no pushforward converage but $\norm{\Pi_\mu P}_\mu$ can still be bounded thanks to the features. Namely, our next result shows that, surprisingly, this is not the only source of hardness in the off-policy value estimation problem.

\subsubsection{$\sigma_{\min}(\Sigma^{-1/2}A\Sigma^{-1/2})$ is also necessary}

We next examine the case where $\norm{\Pi_\mu P}_\mu$ is finite but $\sigma_{\min}(\Sigma^{-1/2}A\Sigma^{-1/2})$ is zero. We show that the condition $\norm{\Pi_\mu P}_\mu < \infty$ implies a somewhat strong structure on the features (cf. Lemma \ref{lemma:pipmu}). Even in the presence of this condition, we show that one can find a set of instances where any estimator will have an infinite approximation ratio. The upper bound of Theorem \ref{thm:lstd-ub} implies that $\sigmamin{\Sigma^{-1/2}A\Sigma^{-1/2}}=0$ must be the case on these instances, and indeed this is the case. %
\begin{restatable}{theorem}{sigmaminlb}\label{thm:sigmamin-lb}
In the non-aliased setting, there exists a family of instances $\{(M,\mu,\phi)\}$ which all have an $L_2(\mu)$-misspecification of $0$, $\norm{\Pi_\mu P}_\mu < \infty$, and $\sigmamin{\Sigma^{-1/2}A\Sigma^{-1/2}} = 0$, yet any estimator $\hat{v}$ will satisfy 
$$
\sup_{(\cM,\mu,\phi) \in \bM} \alpha^{\hat{v}}_\mu(\cM,\mu,\phi) = \infty
$$
\end{restatable}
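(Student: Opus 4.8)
The plan is to exhibit two non-aliased instances that induce the \emph{same} data distribution $\bQ^\diamond$, are both realizable (so each has $L_2(\mu)$-misspecification $0$), yet have distinct value functions on $\supp(\mu)$; any estimator then sees identical data and must err on at least one of them, forcing an approximation ratio of $\frac{x}{0}=\infty$ there. The remaining work is to engineer the features and transitions so that this ambiguity coexists with $\norm{\Pi_\mu P}_\mu<\infty$ and $\sigmamin{\Sigma^{-1/2}A\Sigma^{-1/2}}=0$.

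First I would record two reductions. Writing $B:=(I-\gamma P)\Phi$ and $M:=B_{\supp(\mu),:}$ for its restriction to covered rows, realizability of $v=\Phi\theta$ is exactly $B\theta=r$, whose covered part $M\theta=r_{\supp(\mu)}$ depends only on the covered transitions and rewards, i.e.\ only on the observable data. Hence two realizable instances sharing the data and differing in value correspond precisely to a nonzero $\theta_0$ with $M\theta_0=0$; since $\Sigma$ is invertible, such a $\theta_0$ automatically makes $\Phi\theta_0$ nonzero in $\mu$-norm on the covered states. Moreover, since $D$ annihilates the uncovered rows, $A\theta_0=\Phi^\top D\,(B\theta_0)$ depends only on the covered entries of $B\theta_0$, namely $M\theta_0=0$; hence $A\theta_0=0$ and $\sigmamin{\Sigma^{-1/2}A\Sigma^{-1/2}}=0$. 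This is the key coincidence: the single condition ``$M$ is rank-deficient'' simultaneously produces the value ambiguity and the degeneracy of $A$. Second, by the closed form of $\Pi_\mu$ one checks that $\norm{\Pi_\mu P}_\mu<\infty$ iff $\Phi^\top D P e_s=0$ for every $s\notin\supp(\mu)$, i.e.\ the feature-weighted transition mass entering each uncovered state cancels (cf.\ Lemma \ref{lemma:pipmu}).

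Then I would build the gadget. The ambiguity direction gives $g=\Phi\theta_0$, nonzero on covered states and, by $M\theta_0=0$, satisfying the $\gamma$-discounted harmonic identity $g(s)=\gamma\,\bE_{s'\sim P(s)}[g(s')]$ at every covered $s$. Because $\gamma<1$ and the features are bounded, the extreme values of $g$ cannot be attained at covered states (a covered maximizer would need a strictly larger value downstream), so the construction must route covered states toward uncovered ``value sinks'' carrying the large positive and negative values of $g$. At the same time the cancellation requirement forces the features of the covered states feeding any given uncovered state to be linearly dependent (they must cancel as vectors), whereas $\Sigma$-invertibility forces the covered features to span $\bR^d$. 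I would reconcile these by taking more than $d$ covered states, with features chosen so that their cancelling combinations form a cone of dimension at least two, which lets different covered states be routed to the two sinks in different proportions, and by taking a discount large enough that the harmonic system is solvable with valid transition probabilities; the rewards and feature scales are then fixed by $r=B\theta$ and can be shrunk into $[-1,1]$ and the unit ball by a global rescaling of $(\theta,\theta_0,\Phi)$.

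The main obstacle is exactly this reconciliation. The cancellation constraints rigidly couple the transitions into each uncovered state (they must be proportional to a cancelling weight vector), and a naive symmetric choice makes the summed harmonic equations force the up/down routing to be identical across states, collapsing $g$ to $0$. Overcoming this requires a deliberately asymmetric design --- enough covered states for a multidimensional cancelling cone, together with a discount above the amplification threshold imposed by bounded features (which is why $\gamma$ cannot be taken arbitrarily, and is accommodated since the instance, including $\gamma$, is ours to choose). After this, verifying misspecification $0$, finiteness of $\norm{\Pi_\mu P}_\mu$, and $\sigmamin{\Sigma^{-1/2}A\Sigma^{-1/2}}=0$ is a direct computation, and the infinite ratio follows from the $\frac{x}{0}=\infty$ convention.
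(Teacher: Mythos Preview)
Your reductions are correct and align with the paper's framework: the paper likewise identifies that placing $\Phi$ in the column space of the discounted occupancy matrix $\bd=(I-\gamma P)^{-1}$ (restricted to the uncovered-reward columns) forces $A=0$---your condition $M\theta_0=0$ in the special case where $M$ vanishes identically---creates an infinite family of realizable value functions indexed by the unobserved rewards, and invokes the same structural characterization of $\norm{\Pi_\mu P}_\mu<\infty$ (Lemma~\ref{lemma:pipmu}). Your observation that $M\theta_0=0$ alone already forces $A\theta_0=0$ is a clean abstraction of what the paper does concretely.

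The gap is that you do not exhibit an instance. Your last two paragraphs catalogue the constraints a valid construction must meet---covered features spanning $\bR^d$, a cancelling cone of dimension at least two so that the routing to the uncovered sinks can differ across covered states, a discount above the harmonic threshold, valid transition probabilities, and a $\mu$ supported exactly on the covered states---but you stop at ``I would reconcile these'' without verifying simultaneous feasibility. This is not a formality: the paper notes that the cancellation condition of Lemma~\ref{lemma:pipmu} together with $\supp(\mu)=\{1,2,3\}$ is a bilinear system in $(\mu,\Phi)$, and that for generic $P$ and feature coefficients the resulting linear system for $\mu$ need \emph{not} have a solution in the probability simplex. The paper resolves this by random search over $(P,\alpha,\beta)$ until a valid $\mu>0$ is found, and then reports the concrete numerical witness (a $5$-state MRP with $d=1$, $\gamma=9/10$, three covered states, two self-looping uncovered states, and explicit entries for $P$, $\Phi$, and $\mu$). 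Without either such a witness or an analytic argument that the feasible region is nonempty, your proposal is an outline rather than a proof.
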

\begin{proof}[Proof (sketch)]
We pick a $5$-state MRP with $3$ $\mu$-supported states (numbered $1, 2, 3$) and $2$ $\mu$-unsupported states (numbered $4,5$). %
We set the reward to be zero except for $r(4)$ and $r(5)$ (which will be unknown to the learner). For a fixed transition matrix $P$, let $\bd = (I - \gamma P)^{-1}$ denotes its discounted occupancy matrix, and $\bd_4$ and $\bd_5$ denote the fourth and fifth columns of this matrix, respectively. Geometrically, the space of possible value functions which the learner must distinguish between is the $2$-dimensional plane $\cV_\cM \coloneqq \{r(4) \cdot \bd_4 + r(5) \cdot \bd_5\}_{r_4,r_5 \in [-1,1]}$. We then pick a $1$-dimensional feature map $\Phi = \lambda_1 \bd_4 + \lambda_2 \bd_5 \in \bR^{5 \times 1}$, which is a linear combination of the columns of $\bd$ and thus lies in the plane. Thus, there are an infinite number of realizable value functions, and the learner cannot distinguish the correct one without knowing $r(4)$ and $r(5)$ (which occur at unsupported states). This implies that the approximation ratio is infinite. The only thing left to check is that $\norm{\Pi_\mu P}_\mu < \infty$. In the presence of unsupported states,  this would imply the following structural condition. %
\begin{restatable}{lemma}{pipmu}\label{lemma:pipmu}
Under Assumption \ref{ass:lrassumptions}, $\norm{\Pi_\mu P}_\mu < \infty$ if and only if $\,\forall s' \notin \supp(\mu), \bE_{s \sim \mu}\left[\phi(s) \cP(s'|s)\right] =0$.
\end{restatable}

See Appendix \ref{app:sigmamin-lb} for a proof of Lemma \ref{lemma:pipmu}. This condition (along with the condition that $\supp(\mu) = \{1,2,3\}$) turns out to be a set of bilinear condition in both $\mu$ and $\Phi$, and we proceeded by random search to find a problem $(P, \lambda_1, \lambda_2, \mu(1),\mu(2),\mu(3))$ which satisfies this condition. See Appendix \ref{app:sigmamin-lb} for a full description of the MRP.
\end{proof}

\begin{figure}
\begin{center}
\includegraphics[scale=0.6]{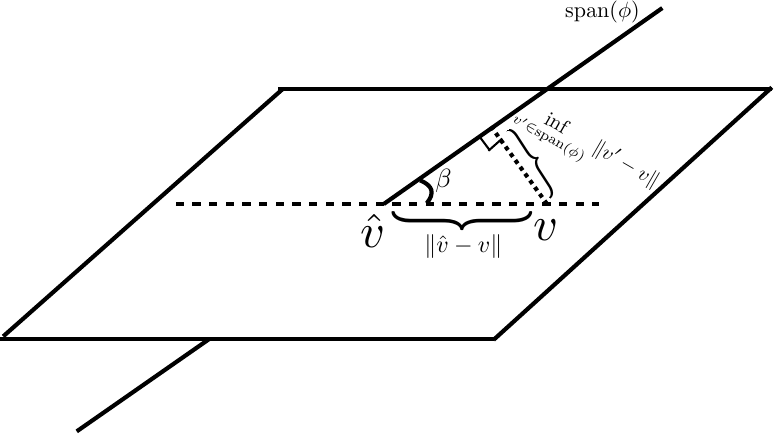}
\caption{Illustration of proof of \cref{thm:sigmamin-lb}. The plane is a  space of possible value functions $\cV_\cM$ which the nature can choose from without leaking more information to the learner. The line represents the space of possible linear predictors $\cF_\Phi$. The true value function is $v$, and the best estimator is $\hat{v}$. As shown on the figure, the angle $\beta$ determines the approximation ratio, and $\beta = 0$ (line in the plane) implies $\infty$ approximation ratio. In our construction, $\beta$ is controlled by the magnitude of $A$.}\label{fig:angledotpdf}
\end{center}
\end{figure}

Why is $A=0$ implied by the construction of the previous proof? While it may appear surprising that the invertibility of some algorithm-specific quantity (the $A$ matrix) can dictate the hardness of value function estimation for \textit{all} estimators, the intuition is that $A=0$ implies that the linear subspace $\cF_\Phi$ can live completely inside of the space of ``plausible'' value functions which the learner can not distinguish between ($\cV_\cM$, in the notation of our proof). More formally, when $\Phi$ is a linear combination of columns from the discounted occupancy matrix, we have that $D \Phi = D (\gamma P)\Phi$ and thus $A = 0$. In the general case where $A$ is nonzero, its minimum singular value dictates the ``angle'' between the $\cF_\Phi$ and $\cV_\cM$, and a small angle indicates a large approximation error (see Figure \ref{fig:angledotpdf}). In conclusion, we have showed that $\norm{\Pi_\mu P} < \infty$ and $\sigma_{\min}(\Sigma^{-1/2}A\Sigma^{-1/2}) > 0$ are \textit{both} independently necessary for finite approximation factors in value function estimation under the $L_2(\mu)$ norm. 

\subsubsection{LSTD attains the optimal approximation factor}\label{sec:l2munoalias}

We can combine all of our observations so far to establish a general lower bound for the non-aliased case.

\begin{restatable}{theorem}{Ltwomulocal}\label{thm:l2mulocal}
For all $x \in (0, \infty)$, there exists three problem instances $\bM = \{ (\cM_i,\mu_i,\phi_i)\}_{i=1}^3$ that all have $x = \norm{\Pi_\mu P}_\mu / \sigmamin{\Sigma^{1/2}A\Sigma^{1/2}}$, yet any estimator $\hat{v}$ will satisfy 
\begin{align*}
\max_{(\cM,\mu,\phi) \in \bM} \alpha_\mu^{\hat{v}}(\cM,\mu,\phi) &\geq  \frac{\norm{\Pi_\mu(I - \gamma P)}_\mu}{\sigma_{\min}(\Sigma^{-1/2}A\Sigma^{-1/2})} - 1 \\
&\geq \frac{\gamma \norm{\Pi_\mu P}_\mu - 1}{\sigma_{\min}(\Sigma^{-1/2}A\Sigma^{-1/2})} - 1
\end{align*}
\end{restatable}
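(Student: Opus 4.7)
My approach exploits indistinguishability under the non-aliased observation model: two MRPs that share transitions $P$ and rewards on $\supp(\mu)$ produce the same data distribution $\bQ^\diamond$, so any estimator must return the same output on both. I first fix $(P,\phi,\mu)$ satisfying Assumption \ref{ass:lrassumptions}, the feature condition of Lemma \ref{lemma:pipmu} (so $\norm{\Pi_\mu P}_\mu < \infty$), and invertibility of $A$ (so $\sigmamin{\Sigma^{-1/2}A\Sigma^{-1/2}} > 0$), tuned so that $x = \norm{\Pi_\mu P}_\mu / \sigmamin{\Sigma^{-1/2}A\Sigma^{-1/2}}$ equals the prescribed value. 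Since $\Pi_\mu$, $\Sigma$, $A$, and $P$ are all reward-independent, all three instances, which will differ only in the rewards assigned to states in $\cS\setminus\supp(\mu)$, automatically share this $x$.

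The three instances are $\cM_0$ with reward $r\equiv 0$ (so $v_{\cM_0}=0\in\cF_\Phi$ is realizable, with zero misspecification), together with $\cM_\pm$ obtained by perturbing the reward to $\pm\Delta r$ for some $\Delta r$ supported on $\cS\setminus\supp(\mu)$ and scaled so the rewards stay in $[-1,1]$. These yield $v_{\cM_\pm}=\pm\Delta$ for $\Delta := (I-\gamma P)^{-1}\Delta r \in V^\perp$, where $V^\perp := \{(I-\gamma P)^{-1}w : w|_{\supp(\mu)}=0\}$. For any estimator $\hat v$, either $\hat v\neq v_{\cM_0}$ (making its ratio on $\cM_0$ infinite, since the misspecification is $0$) or $\hat v=0$, in which case the ratio on each of $\cM_\pm$ equals $\norm{\Delta}_\mu/\norm{(I-\Pi_\mu)\Delta}_\mu$. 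The lower bound thus reduces to exhibiting $\Delta\in V^\perp$ with
\[
\frac{\norm{\Delta}_\mu}{\norm{(I-\Pi_\mu)\Delta}_\mu} \;\geq\; \frac{\norm{\Pi_\mu(I-\gamma P)}_\mu}{\sigmamin{\Sigma^{-1/2}A\Sigma^{-1/2}}} - 1.
\]

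To build such a $\Delta$, I combine the two extremal directions driving the LSTD upper bound in Theorem \ref{thm:lstd-ub}. Let $\theta^*$ be a minimizing direction of $\Sigma^{-1/2}A\Sigma^{-1/2}$, normalized so that $\norm{\Phi\theta^*}_\mu = 1$; then $\norm{\Pi_\mu(I-\gamma P)\Phi\theta^*}_\mu = \sigmamin{\Sigma^{-1/2}A\Sigma^{-1/2}}$. Write $\Phi\tilde\theta := \Pi_\mu(I-\gamma P)\Phi\theta^*$. Now design the instance so that $\Phi\tilde\theta$ is aligned with the image of the maximal right singular vector of $\Pi_\mu(I-\gamma P)$, and let $v^*$ denote this singular vector (with $\norm{v^*}_\mu=1$); then $\Pi_\mu(I-\gamma P)v^* = \bigl(\norm{\Pi_\mu(I-\gamma P)}_\mu/\sigmamin{\Sigma^{-1/2}A\Sigma^{-1/2}}\bigr)\Phi\tilde\theta$. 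The candidate $\tilde\Delta := \Phi\theta^* - \bigl(\sigmamin{\Sigma^{-1/2}A\Sigma^{-1/2}}/\norm{\Pi_\mu(I-\gamma P)}_\mu\bigr)v^*$ then lies in $\Ker(\Pi_\mu(I-\gamma P))$, satisfies $\norm{(I-\Pi_\mu)\tilde\Delta}_\mu \leq \sigmamin{\Sigma^{-1/2}A\Sigma^{-1/2}}/\norm{\Pi_\mu(I-\gamma P)}_\mu$, and $\norm{\tilde\Delta}_\mu \geq 1 - \sigmamin{\Sigma^{-1/2}A\Sigma^{-1/2}}/\norm{\Pi_\mu(I-\gamma P)}_\mu$, giving the target ratio by direct division. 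Finally, project $\tilde\Delta$ into $V^\perp$ by subtracting $(I-\gamma P)^{-1}$ applied to the in-support component of $(I-\gamma P)\tilde\Delta$; the additive $-1$ in the final bound absorbs the slack from this adjustment.

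The main obstacles are the alignment design --- arranging an instance where the max right singular direction of $\Pi_\mu(I-\gamma P)$ is mapped to a multiple of $\Phi\tilde\theta$ --- and the $V^\perp$-adjustment step. A clean way to handle both simultaneously is to work in structured regimes, e.g.\ choosing $|\cS\setminus\supp(\mu)| = S-d$ so that $V^\perp = \Ker(\Pi_\mu(I-\gamma P))$ and no adjustment is needed, or taking $d=1$ so that singular-direction alignment is automatic. To realize any $x\in(0,\infty)$, I would interpolate between the templates of Lemma \ref{lemma:PiPnecessary} and Theorem \ref{thm:sigmamin-lb}, tuning free parameters in $P$, $\phi$, and $\mu$ (e.g., scaling transitions into unsupported states against the feature condition to control $\norm{\Pi_\mu P}_\mu$, and perturbing the ``bad'' configuration from Theorem \ref{thm:sigmamin-lb} to control $\sigmamin{\Sigma^{-1/2}A\Sigma^{-1/2}}$).
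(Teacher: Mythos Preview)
Your high-level skeleton matches the paper: three instances sharing $(P,\phi,\mu)$ and differing only in rewards at unsupported states, with one realizable instance forcing $\hat v=0$, so the bound reduces to exhibiting a $\Delta\in V^\perp$ with large $\norm{\Delta}_\mu/\norm{(I-\Pi_\mu)\Delta}_\mu$. The gap is in how you actually produce such a $\Delta$.

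Your candidate $\tilde\Delta=\Phi\theta^*-\bigl(\sigma_{\min}/\norm{\Pi_\mu(I-\gamma P)}_\mu\bigr)v^*$ lands only in $\Ker(\Pi_\mu(I-\gamma P))$, which is a strictly weaker condition than $\Delta\in V^\perp$ (the latter requires $D(I-\gamma P)\Delta=0$, not just $\Phi^\top D(I-\gamma P)\Delta=0$). Your ``project into $V^\perp$'' fix --- subtracting $(I-\gamma P)^{-1}$ of the in-support part of $(I-\gamma P)\tilde\Delta$ --- can be arbitrarily large and alters both numerator and denominator of the ratio; the assertion that the additive $-1$ absorbs this slack is unsupported and in general false. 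Separately, the alignment you need (that the image of the max singular direction of $\Pi_\mu(I-\gamma P)$ coincides with $\Phi\tilde\theta$) is not free, and the parameter-tuning to hit a prescribed $x$ is left as a gesture.

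The paper sidesteps all three issues by \emph{designing} $\Phi$ itself to be a small perturbation of an element of $V^\perp$: with $d=1$ it takes $\Phi=\lambda_1\bd_4+\lambda_2\bd_5+\lambda_3\psi$ (where $\bd_4,\bd_5$ are columns of $(I-\gamma P)^{-1}$ corresponding to unsupported states), so the value function $v_\cM=r_4\bd_4+r_5\bd_5$ is in $V^\perp$ by construction and the misspecification is at most $|\lambda_3|\,\norm{\psi}_\mu$, with $\lambda_3$ directly tied to $\sigma_{\min}(\Sigma^{-1/2}A\Sigma^{-1/2})$. The operator-norm direction is then obtained by choosing $\psi$ to realize $\norm{\Pi_\mu(I-\gamma P)}_\mu$; since $\Pi_\mu$ depends on $\Phi$, which depends on $\psi$, this is a genuine circularity, resolved via Brouwer's fixed-point theorem together with continuity of eigenvectors. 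Finally, ensuring $\norm{\Pi_\mu P}_\mu<\infty$ (Lemma \ref{lemma:pipmu}) while keeping $\lambda_3\neq0$ forces a rank-deficiency condition on the transition columns into $\cS\setminus\supp(\mu)$, and hitting any $x$ requires a delicate intermediate-value argument over $\mu$ and the perturbation size. None of this machinery is present in your sketch, and the abstract $\tilde\Delta$ construction does not circumvent it.
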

\begin{proof}[Proof (sketch)]
    We use a similar idea to Theorem \ref{thm:sigmamin-lb}. We take a $5$-state MDP with $\supp(\mu) = \{1,2,3\}$ and unobserved rewards at $r(4)$ and $r(5)$. The features are slightly perturbed away from being a linear combination of columns from the discounted matrix. Namely, in the notation of Theorem \ref{thm:sigmamin-lb}, we take $\Phi = \alpha \bd_4 + \beta \bd_5 + \varepsilon \psi$, for some small $\varepsilon$ and carefully-chosen $\psi$. The size of $\varepsilon$ will control $\sigmamin{A}$, while the size of $\norm{\Pi_\mu P}_\mu$ will be controlled by $\mu$. To ensure that $\norm{\Pi_\mu P}_\mu < \infty$  (or Lemma \ref{lemma:pipmu}) continues to hold as we vary $\varepsilon$ and $\mu$, we derive a structural condition that relates this to the $(\alpha,\beta,\varepsilon)$ vector lying in the kernel of some matrix. We find $P$ which is rank-deficient but for which $\bd$ is full-rank. This ensures that the kernel of this matrix is non-trivial, which means that we can pick $(\alpha,\beta,\varepsilon)$ inside this kernel as a continuous function of the problem parameters while also ensuring that $\Phi \neq 0$. We also pick the perturbation vector $\psi$ to (roughly) realize the operator norm of $\Pi_\mu P$, i.e. it satisfies $\norm{\Pi_\mu P \psi}_\mu = \norm{\Pi_\mu P}_\mu$. Such a choice follows by a careful argument involving Brouwer's fixed point theorem (since $\Pi_\mu$ depends on $\psi$ via $\Phi$) and continuity of eigenvectors. See Appendix \ref{app:l2mulocal} for a full proof.
\end{proof}

In terms of instance-dependence, this result is slightly weaker than our previous bounds since the statement only allows control over the \textit{ratio} of the two instance-dependent terms. As before, the domain of $x$ is allowed to vary to capture the interesting asymptotics of the problem ($x \rightarrow \infty$). Once again we have that, for $x$ larger than some constant, the lower bound is within a constant factor of the upper bound. Specifically, when $x > 4$, the upper bound and lower bound differ by a factor of 2, and this ratio converges to $1$ as $x \rightarrow \infty$. This result establishes that, for the interesting regimes where the parameters are large enough, LSTD attains the optimal approximation ratio (up to constant factors) in the $L_2(\mu)$ norm in both the aliased and non-aliased cases.

\paragraph{Cases where the approximation factor is well-behaved} Even though the blowup from these two factors is unavoidable in general, along the way we identify several novel conditions under which the optimal value function can be recovered, either by LSTD or by alternative estimators. These are outlined in Appendix \ref{app:alphaequals1}. Particularly interesting conditions are when $P$ maps orthogonal value functions (i.e. functions not lying in the span of $\Phi$) to orthogonal value functions, or when $\norm{P}_\mu < \infty$ (noting that this is stronger than just $\norm{\Pi_\mu P}_\mu < \infty$, since $\norm{\Pi_\mu P}_\mu \leq \norm{\Pi_\mu}_\mu \norm{P}_\mu = \norm{P}_\mu$).

\section{Approximation Ratios in the $L_\infty$ Norm}\label{sec:linfty}

In this section we study the optimal asymptotic approximation factor for the $L_\infty$ norm. A summary of the results for this section can be found in the ``$L_\infty$ norm'' column of Table \ref{fig:table-results}. Recall that we write $\alpha_\infty$ for approximation factors in this norm. 

\subsection{LSTD attains the optimal approximation factor}\label{sec:linfty-lstd}

We begin with an upper bound for LSTD. We first note that it is possible (see Appendix \ref{app:translating}) to convert an approximation ratio bound for the $L_2(\mu)$ norm into an approximation ratio bound for the $L_\infty$ norm by paying an extra factor of $1/\lambda_{\min}(\Sigma)$ (which is finite by Assumption \ref{ass:lrassumptions}, but may be arbitrarily large). However, our next result shows that this eigenvalue dependence is not necessary and a more direct approach yields a better result. %

\begin{restatable}{theorem}{lstdubinfty}\label{thm:lstd-ub-infty}
Assume that the $A$ matrix from Equation \eqref{eq:A-matrix-def} is invertible. Then the population LSTD estimator has an approximation factor upper bound of 
\[
\alpha_\infty^{\texttt{LSTD}} \leq 1 + \norm{\Phi A^{-1} \Phi^\top D(I - \gamma P)}_\infty \leq 1 + \frac{1+\gamma}{\sigmamin{A}}
\]
\end{restatable}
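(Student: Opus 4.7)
The plan is to mirror the structure of Theorem~\ref{thm:lstd-ub} but swap the $L_2(\mu)$ projector $\LS$ for an $L_\infty$-optimal parameter $\theta_\infty$ satisfying $\Phi\theta_\infty = \Pi_\infty v_\cM$. The first step is to observe that the error decomposition in Equation~\eqref{eq:ls-minus-lstd} is actually a consequence of a more general identity: for \emph{any} $\theta \in \bR^d$,
\[
\Phi\theta - \vLSTD \;=\; \Phi A^{-1}\Phi^\top D(I-\gamma P)(\Phi\theta - v_\cM),
\]
which I would verify by expanding $\vLSTD = \Phi A^{-1}b$, writing $A\theta - b = \Phi^\top D(I-\gamma P)\Phi\theta - \Phi^\top D r$ and substituting $r = (I-\gamma P)v_\cM$. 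Applied to $\theta = \theta_\infty$ and combined with the triangle inequality, this yields
\[
\norm{\vLSTD - v_\cM}_\infty \;\leq\; \bigl(1 + \norm{\Phi A^{-1}\Phi^\top D(I-\gamma P)}_\infty\bigr)\,\norm{v_\cM - \Pi_\infty v_\cM}_\infty,
\]
which, after dividing by the misspecification error $\norm{v_\cM - \Pi_\infty v_\cM}_\infty = \inf_\theta \norm{\Phi\theta - v_\cM}_\infty$, gives the first claimed bound.

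For the second inequality, I would bound $\norm{\Phi A^{-1}\Phi^\top D(I-\gamma P)}_\infty$ by $(1+\gamma)/\sigmamin{A}$ by a row-wise Cauchy-Schwarz argument. Fix $x$ with $\norm{x}_\infty = 1$; the $s$-th entry of the product is
\[
\phi(s)^\top A^{-1} \Phi^\top D(I-\gamma P) x.
\]
Cauchy-Schwarz separates this into $\norm{\phi(s)}_2 \cdot \norm{A^{-1}}_2 \cdot \norm{\Phi^\top D(I-\gamma P)x}_2$. The first factor is at most $1$ by Assumption~\ref{ass:lrassumptions}, and the second is at most $1/\sigmamin{A}$. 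For the third, I would write $\Phi^\top D(I-\gamma P)x = \sum_s \mu(s)\phi(s)[(I-\gamma P)x](s)$ and push the $L_2$ norm inside the sum to bound it by $\norm{(I-\gamma P)x}_\infty \leq \norm{I-\gamma P}_\infty \norm{x}_\infty \leq 1+\gamma$, where the final step uses row-stochasticity of $P$ (each row of $I-\gamma P$ has absolute row sum at most $1 + \gamma - 2\gamma P_{s,s} \leq 1+\gamma$).

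There is no single hard step, but the delicate point is the mixed-norm bookkeeping in the final operator-norm estimate: the naive approach (mentioned in the paragraph preceding the theorem) of converting an $L_2(\mu)$ bound to $L_\infty$ loses a factor of $1/\lambda_{\min}(\Sigma)$, which can be arbitrarily bad. The trick is to avoid that route entirely and instead keep $\Phi^\top D(I-\gamma P)x$ in the $L_2$ world just long enough to apply Cauchy-Schwarz against $\phi(s)$, while bounding its $L_2$ norm by the supremum of the scalar multiplier $(I-\gamma P)x$ using the fact that $\mu$ integrates to $1$ and features are $L_2$-bounded.
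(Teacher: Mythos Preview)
Your proposal is correct and follows essentially the same approach as the paper: the paper derives the identical error decomposition $\Pi_\infty v_\cM - \vLSTD = \Phi A^{-1}\Phi^\top D(I-\gamma P)(\Pi_\infty v_\cM - v_\cM)$ (your general-$\theta$ version specialized to $\theta_\infty$), then bounds the operator norm by first peeling off $\norm{I-\gamma P}_\infty \leq 1+\gamma$ via submultiplicativity and bounding $\norm{\Phi A^{-1}\Phi^\top D}_\infty$ through the max-row-$L_1$ characterization with Cauchy--Schwarz on each entry $\mu_j\langle\phi_i,A^{-1}\phi_j\rangle$. Your variant---working with $\sup_{\norm{x}_\infty=1}$ and pushing the $L_2$ norm through the $\mu$-weighted sum $\sum_s \mu(s)\phi(s)[(I-\gamma P)x](s)$---is just a reordering of the same Cauchy--Schwarz and feature-boundedness ingredients.
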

\begin{proof}[Proof (sketch)]
Relies on another exact decomposition of the LSTD error:
\[
\Pi_\infty v_\cM - \Phi\LSTD = \Phi A^{-1}\Phi^\top D(I-\gamma P) (\Pi_\infty v_\cM - v_\cM).
\]
See Appendix \ref{app:lstd-ub-infty} for a full proof.
\end{proof}

This shows that, in the $L_\infty$ norm, the upper bound obtained by LSTD only depends on the minimum singular value of $A$ (rather than on the same singular value as well as $\norm{\Pi_\mu P}_\mu$, as was the case for the $L_2(\mu)$ norm). At first glance it might appear strange that there are \emph{less} problem-dependent factors in the $L_\infty$ bound (which should be a harder norm to minimize), but the resolution to this apparent contradiction is that a guarantee of small misspecification under the $L_\infty$ norm is a substantially stronger assumption. Intuitively, the usefulness of the $L_2(\mu)$ misspecification guarantee hinges on our ability to translate the misspecification error on $\mu$-supported states to other parts of the state space, which additionally depends on $\norm{\Pi_\mu P}_\mu$.

On the lower bound side, we can combine the ideas of the construction from \cite{amortila2020variant} and our previous lower bound (Theorem \ref{thm:sigmamin-lb}) to establish that LSTD attains the optimal approximation factor for regimes where $\gamma$ is large enough. Formally, the result is that: 

\begin{restatable}{theorem}{sigmamininfty}\label{thm:sigmamin-infty}
In the non-aliased setting, for all $\gamma \in [c_1,1)$ where $c_1$ is some absolute constant, and for all $y \in [0,1-\gamma]$, there exists three instances $\{(\cM_i,\mu_i,\phi_i)\}$ which all satisfy $\sigmamin{A} = y$ yet
\[
\inf_{\hat{v}} \sup_{(\cM_i,\mu_i,\phi_i)}\alpha_\infty^{\hat{v}}(\cM_i,\mu_i,\phi_i) \geq \frac{1}{2} + \frac{\gamma}{\sigmamin{A}}.
\]
The value of the constant is upper bounded by $c_1 \leq 0.7$.
\end{restatable}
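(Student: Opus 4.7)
The plan is to combine the indistinguishability construction of Theorem \ref{thm:sigmamin-lb} with the idea from \cite{amortila2020variant} of scaling the feature map so that $\sigma_{\min}(A)$ can be tuned to any prescribed value. Specifically, Theorem \ref{thm:sigmamin-lb} produced MRPs with $\sigma_{\min}(\Sigma^{-1/2}A\Sigma^{-1/2}) = 0$ via a feature map $\Phi = \lambda_1 \bd_{i^*} + \lambda_2 \bd_{j^*}$ built from columns of the discounted occupancy matrix; here I would instead take $\Phi = \alpha\, \bd_{i^*} + \beta\, \bd_{j^*} + \varepsilon\, \psi$ on a small MRP with $\supp(\mu) \subsetneq \cS$, so that $A = \varepsilon\, \Phi^\top D\, \psi + O(\varepsilon^2)$ (the leading term vanishes when $\varepsilon=0$ because $D\Phi = D\gamma P\Phi$ at that point). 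By a continuity/IVT argument in $\varepsilon$, any target $y \in [0,\,1-\gamma]$ is realizable as $\sigma_{\min}(A)$; the cap $y \leq 1-\gamma$ is the natural range given the feature normalization and the $1/(1-\gamma)$-bounded value functions.

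Next I would exhibit three instances sharing the same observable data distribution $\bQ^\diamond_{\cM,\mu,\phi}$, differing only in the rewards at unseen states. Two of them, call them $\cM_1, \cM_2$, have rewards planted so that the induced value functions $v^{(1)}, v^{(2)} = (I-\gamma P)^{-1} r^{(i)}$ differ in $L_\infty$ by a gap $\Delta$, and the third instance $\cM_3$ is engineered so that $\inf_\theta \|\Phi\theta - v^{(3)}\|_\infty$ is small (near-realizable) with controlled numerator. Since any estimator must commit to a single $\hat{v} \in \cF_\Phi$ based only on $\bQ^\diamond$, the two-point argument $\max_i \|\hat{v} - v^{(i)}\|_\infty \geq \Delta/2$ yields the $\tfrac{1}{2}$ term in the lower bound. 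The $\gamma/\sigma_{\min}(A)$ term comes from quantifying $\Delta$ in terms of the ``tilt'' between $\cF_\Phi$ and $\cV_\cM$: a perturbation $\varepsilon\psi$ in $\Phi$ moves $\cF_\Phi$ by an angle proportional to $\sigma_{\min}(A)$ relative to $\cV_\cM$, and the planted rewards at unseen states enter $v^{(i)}$ through one application of $\gamma P$, producing a gap $\Delta \sim \gamma/\sigma_{\min}(A)$ when normalized by the small misspecification of the third instance.

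The main obstacle will be handling the $L_\infty$ geometry cleanly: unlike $L_2(\mu)$, the $L_\infty$ projection is neither linear nor unique, so bounding $\inf_\theta \|\Phi\theta - v^{(i)}\|_\infty$ tightly on each instance requires either writing down an explicit minimizer or using a dimension/duality argument. One also has to ensure all three instances simultaneously have $\sigma_{\min}(A)$ \emph{exactly} equal to $y$, not just proportional, which means the $\alpha, \beta, \varepsilon, \psi$ data must be chosen as a continuous function of $y$ while preserving the orthogonality-type structure from Lemma \ref{lemma:pipmu}. Finally, the threshold $\gamma \geq c_1$ (with $c_1 \leq 0.7$) is needed so that the $\gamma$-scaled gap genuinely dominates the $O(1)$ constants absorbed into the $L_\infty$ estimates on $\cM_3$; the explicit value $0.7$ should fall out of a direct calculation once the MRP is pinned down, and cannot be made arbitrarily small because, for small $\gamma$ and $\sigma_{\min}(A) \approx 1$, the upper bound of Theorem \ref{thm:lstd-ub-infty} is itself $O(1)$, leaving no room for a matching $\gamma$-free lower bound.
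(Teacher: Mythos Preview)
Your high-level strategy---perturb the feature map away from a linear combination of occupancy columns so that $\sigma_{\min}(A)$ becomes a tunable parameter, then exhibit three instances with identical observable data but different value functions---is the right one. However, you are importing far more machinery than the problem needs, and your account of where the $\tfrac{1}{2}$ comes from is not how the argument actually works.

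The paper does \emph{not} use the five-state MRP of Theorem~\ref{thm:sigmamin-lb}. It uses the elementary two-state chain from \cite{amortila2020variant} (also Lemma~\ref{lemma:PiPnecessary}): $P=\begin{pmatrix}0&1\\0&1\end{pmatrix}$, $\mu=(1,0)$, and a one-dimensional feature $\Phi = (\alpha d_1 + d_2)(1-\gamma)$ where $d_1,d_2$ are the columns of $(I-\gamma P)^{-1}$. Because $d=1$, the matrix $A$ is a scalar that can be written down in closed form as $A=\alpha^2(1-\gamma)^2+\alpha\gamma(1-\gamma)$; solving this quadratic for $\alpha$ gives an explicit formula achieving $\sigma_{\min}(A)=y$ for any $y\in[0,1-\gamma]$. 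No IVT or continuity argument is needed, and there is no Lemma~\ref{lemma:pipmu} constraint to maintain, since the $L_\infty$ setting does not require $\|\Pi_\mu P\|_\mu<\infty$.

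Your decomposition of the bound into a $\tfrac{1}{2}$ from a two-point halving and a separate $\gamma/\sigma_{\min}(A)$ from a tilt angle is not what happens. In the paper, the three instances have $r_2\in\{-1,0,1\}$; the $r_2=0$ instance is \emph{exactly} realizable with $\theta=0$, which forces any estimator with finite worst-case ratio to output $\hat v=0$. On the $r_2=\pm1$ instances the misspecification is at most $\alpha$ (plug in $\theta=r_2/(1-\gamma)$) while $\|\hat v - v_\cM\|_\infty=\|v_\cM\|_\infty=1/(1-\gamma)$, so the ratio is $1/(\alpha(1-\gamma))$. Substituting the closed-form $\alpha$ and doing a Taylor expansion of $\sqrt{1+x}$ turns this single ratio into $\gamma/y + \tfrac{1}{\gamma} - \tfrac{1-\gamma}{\gamma^3}$; the condition $\gamma\ge c_1\approx 0.7$ is precisely what makes the trailing terms $\ge\tfrac{1}{2}$. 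So the $\tfrac{1}{2}$ is an algebraic remainder, not a separate halving step.

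Your five-state route could in principle be pushed through, but you would be fighting the bilinear constraints of Lemma~\ref{lemma:pipmu} and the non-explicit $L_\infty$ projection for no benefit; the two-state construction sidesteps all of that.
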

\begin{proof}[Proof (sketch)]
    The proof uses the MRP construction from \citep{amortila2020variant} (and Lemma \ref{lemma:PiPnecessary}) but perturbs the features by adding a column of the discounted occupancy matrix (similar to the construction of Theorem \ref{thm:sigmamin-lb}). See Appendix \ref{app:sigmamin-infty} for a full proof. 
\end{proof}

We note again that the domain for our problem-dependent parameters ($y \in [0,1-\gamma]$) do not preclude the interesting regimes, which are when $\sigmamin{A} \rightarrow 0$. We also note that, since the lower bound holds for the non-aliased setting, it also holds for the (harder) aliased setting. Towards comparing the upper and lower bound, we can take their ratio, use the bounds on $\gamma$ and $y$, and observe that the ratio is always upper bounded by $2$. Thus, for this regime of problem parameters, the lower bounds and upper bounds match up to a constant factor. This lower bound does not make use of aliasing, while the upper bound applied to the aliased case, and thus for general $\mu$ the optimal approximation factor is not affected by the presence of aliasing.

\subsection{Optimal approximation ratio under full support}\label{sec:full-support}

In this section, we examine a natural additional assumption which enables an alternative model-based estimator that asymptotically achieves a much better approximation ratio of $(1-\gamma)^{-1}$, which is independent of $1/\sigmamin{A}$. On the other hand, this estimator will be much less sample-efficient, as its sample complexity will depend on the cardinality of $|\phi(\cS)|$. Formally, the assumption is: 
\begin{assumption}[Full support]\label{ass:full-support}
 	The off-policy distribution $\mu$ is such that $\supp(\mu) = \cS$.
\end{assumption}

The full support assumption appears somewhat commonly in the literature when $L_\infty$ norms are concerned \cite{huangbeyond,bertsekas1996neuro}. We note that Assumption \ref{ass:full-support} renders the problem trivial in the non-aliased setting as we can asymptotically recover the true value function $v_\cM$. However, it remains an interesting question whether a similar result is possible under aliasing. Our estimator is based on state abstractions. %

\paragraph{State abstractions}  We call $\phi(\cS) \coloneqq \cX$ the \emph{abstract space}, and we denote abstract states by $x,x'$. Note that $X \coloneqq |\cX| \leq |\cS|$ and in particular the abstract space is also finite. This estimator ignores the topology on $\cX$ and instead learns a pointwise function on the abstract space. Our estimator is defined as the solution to the Bayes model $M_\phi = (r_\phi,P_\phi)$, where: $r_\phi(x) = \bE[r(s) \mid \phi(s) = x] \in \bR^X$ and $
 P_\phi(x,x') = \bP(x' \mid x) \in \bR^{X \times X}$. Note that the Bayes model implicitly depends on the off-policy distribution $\mu$ via the condition expectations. The solution to this model is 
\begin{equation}\label{eq:abstactvphi}
v_\phi = (I - \gamma P_\phi)^{-1}r_\phi \in \bR^X,
\end{equation}
which we call the Bayes value function. The following result shows that $v_\phi$ has a well-behaved approximation ratio (see Appendix \ref{app:aliasing-ub-infty} for a proof). 
\begin{restatable}{theorem}{aliasingubinfty}\label{thm:aliasing-ub-infty}
Under Assumption \ref{ass:full-support}, the estimator $v_\phi$ from Equation \eqref{eq:abstactvphi} has an approximation ratio of $\frac{2}{1-\gamma}$, i.e. we have
\begin{align*}
\norm{v_\varphi \circ \phi - v_M }_\infty &\leq \frac{2}{1-\gamma}\inf_{f: \cX \mapsto \bR} \norm{ f \circ \varphi - v_\cM}_\infty \\
&\leq \frac{2}{1-\gamma}\inf_{\theta} \norm{ \Phi \theta - v_\cM}_\infty
\end{align*}
\end{restatable}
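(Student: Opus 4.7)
Let $\varepsilon \coloneqq \inf_{f : \cX \to \bR}\|f\circ\phi - v_\cM\|_\infty$ and fix any $f^\star$ achieving this infimum. The plan is to introduce a natural auxiliary function on $\cX$, show it is uniformly close to $v_\cM$ on $\cS$, show it nearly solves the Bellman equation of the Bayes model, and then invoke the contraction of that Bellman operator. Define
\[
\bar v(x) \coloneqq \bE_\mu[v_\cM(s) \mid \phi(s) = x],
\]
which is well-defined for every $x \in \cX = \phi(\cS)$ thanks to Assumption \ref{ass:full-support} (every $x$ has a preimage in $\supp(\mu)$).

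The first step is to show $\|\bar v \circ \phi - v_\cM\|_\infty \le 2\varepsilon$. For any $s$ and any $s''$ with $\phi(s)=\phi(s'')$, the triangle inequality through $f^\star(\phi(s))$ gives $|v_\cM(s) - v_\cM(s'')| \le 2\varepsilon$, and averaging over $s''$ under the conditional $\mu(\cdot \mid \phi=\phi(s))$ yields $|v_\cM(s) - \bar v(\phi(s))| \le 2\varepsilon$.

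The second step is to derive an approximate Bellman equation for $\bar v$ under the Bayes model $(r_\phi, P_\phi)$. Starting from $v_\cM = r + \gamma P v_\cM$ and conditioning on $\phi(s)=x$,
\begin{align*}
\bar v(x) &= r_\phi(x) + \gamma\,\bE_\mu\bigl[\bE_{s'\sim \cP(\cdot\mid s)}[v_\cM(s')] \,\big|\, \phi(s)=x\bigr].
\end{align*}
Writing $v_\cM(s') = \bar v(\phi(s')) + (v_\cM(s') - \bar v(\phi(s')))$ and using step one to bound the second summand uniformly by $2\varepsilon$, together with the definition of $P_\phi$, I get
\[
\bar v(x) = r_\phi(x) + \gamma(P_\phi \bar v)(x) + \gamma\,\eta(x), \qquad \|\eta\|_\infty \le 2\varepsilon.
\]
Rearranging, $(I-\gamma P_\phi)\bar v = r_\phi + \gamma\eta$, i.e.\ $\bar v - v_\phi = \gamma(I-\gamma P_\phi)^{-1}\eta$. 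Since $P_\phi$ is row-stochastic, the Neumann series bound gives $\|(I-\gamma P_\phi)^{-1}\|_\infty \le 1/(1-\gamma)$, hence $\|\bar v - v_\phi\|_\infty \le \tfrac{2\gamma\varepsilon}{1-\gamma}$.

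Combining the two steps by the triangle inequality,
\[
\|v_\phi\circ\phi - v_\cM\|_\infty \le \|\bar v\circ\phi - v_\cM\|_\infty + \|v_\phi - \bar v\|_\infty \le 2\varepsilon + \tfrac{2\gamma\varepsilon}{1-\gamma} = \tfrac{2\varepsilon}{1-\gamma},
\]
which is the claimed bound. The second inequality in the statement is immediate because every linear predictor $\Phi\theta$ is a function of $\phi$, so $\inf_f\|f\circ\phi - v_\cM\|_\infty \le \inf_\theta\|\Phi\theta - v_\cM\|_\infty$. The only delicate point is checking that $P_\phi$ is genuinely row-stochastic (so that the Bellman operator is a $\gamma$-contraction in $L_\infty$), which follows directly from its definition as a conditional probability. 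The rest is bookkeeping with triangle inequalities and the observation that conditional expectations under $\mu$ are well-posed on all of $\cX$ precisely because of the full-support assumption.
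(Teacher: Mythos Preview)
Your proof is correct. The paper's own proof simply introduces the notion of an $\varepsilon$-approximate $v_\cM$-irrelevant abstraction and then cites Theorem~5 of \cite{jiang2018notes} (the $q^\star$-irrelevant abstraction bound, specialised to a single action), so it outsources the analysis to an external reference. Your argument is a self-contained version of exactly that underlying reasoning: you introduce the conditional-expectation lift $\bar v$, show it is $2\varepsilon$-close to $v_\cM$ on $\cS$, verify it is an approximate fixed point of the Bayes Bellman operator, and close with the $\gamma$-contraction of $P_\phi$. This is the standard simulation-lemma style proof that the cited result in \cite{jiang2018notes} also uses, so the two routes are mathematically the same; yours is simply explicit rather than by citation. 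One minor remark: the existence of an exact minimiser $f^\star$ is harmless here because the infimum over $f$ decouples across the finitely many abstract states $x\in\cX$ and is attained at the midpoint of $\{v_\cM(s):\phi(s)=x\}$; alternatively one can run the whole argument with an $\varepsilon'$-near-minimiser and let $\varepsilon'\downarrow\varepsilon$.
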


 Indeed, one can construct examples where this estimator is infinitely better than LSTD, by taking $\sigmamin{A} \rightarrow 0$, which causes the LSTD parameter to diverge. This is illustrated in the counterexample of \cite{kolter2011fixed}, where LSTD diverges but our Bayes estimator achieves an approximation ratio of $1$. Our next result shows that the approximation ratio $2/(1-\gamma)$ is arbitrarily close to optimal. 

\begin{restatable}{theorem}{aliasinglbinfty}\label{thm:aliasing-lb-infty}
In the aliased setting, under Assumption \ref{ass:full-support},  $\forall \varepsilon > 0$, $\forall\,\gamma \in (0,1)$, there exists a collection of two instances $\bM=\{(\cM_1,\mu_1,\phi_1),(\cM_2,\mu_2,\phi_2)\}$ which generate the same data distribution $\bQ$, yet any estimator $\hat{v}$ will satisfy
\[
\sup_{(\cM,\mu,\phi) \in \bM} \alpha^{\hat{v}}_\infty(\cM,\mu,\phi) \geq \frac{2}{1-\gamma} - \varepsilon
\]
\end{restatable}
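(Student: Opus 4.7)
The plan is to construct two aliased-indistinguishable MRPs on a tiny state space, arrange one of them to have a constant (hence realizable) value function, and exploit the convention $x/0 = \infty$ to force any estimator with finite approximation ratio on that instance to output its value function exactly. Evaluating this forced estimator on the other instance, whose value function has range nearly $2/(1-\gamma)$, then saturates the upper bound of Theorem \ref{thm:aliasing-ub-infty} up to an arbitrarily small slack.

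Concretely, I would take $\cS = \{s_1, s_2\}$ with a one-dimensional constant feature $\phi(s_1) = \phi(s_2) = 1$, shared deterministic transitions $s_1 \to s_2$ and $s_2 \to s_2$ (so that $\phi' = 1$ always), and offline distribution $\mu(s_1) = 1-\delta$, $\mu(s_2) = \delta$ for some small $\delta \in (0,1)$ to satisfy Assumption~\ref{ass:full-support}. For $\cM_1$ I would take deterministic rewards $r_1(s_1) = -1$ and $r_1(s_2) = 1$; for $\cM_2$ I would take the \emph{same} stochastic reward distribution $\cR_2(s_i)$ at both states $i \in \{1,2\}$, namely $-1$ with probability $1-\delta$ and $+1$ with probability $\delta$. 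A direct check then shows that the aliased triple $(\phi, r, \phi')$ has the same law under either MRP, namely $(1,-1,1)$ with probability $1-\delta$ and $(1,1,1)$ with probability $\delta$, so $\bQ_{\cM_1,\mu,\phi} = \bQ_{\cM_2,\mu,\phi}$ as required.

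The remainder is a short Bellman-equation calculation. Solving the fixed-point equations yields $v_1 = \bigl((2\gamma-1)/(1-\gamma),\, 1/(1-\gamma)\bigr)$ and $v_2 \equiv (2\delta-1)/(1-\gamma)$; the latter is constant and hence lies in $\cF_\Phi$, while the former has $L_\infty$ misspecification exactly $1$ (the best constant predictor is the midpoint $\gamma/(1-\gamma)$, with error equal to half the range). Since the misspecification of $\cM_2$ is zero, any estimator with $\hat v \neq v_2$ incurs $\alpha_\infty^{\hat v}(\cM_2,\mu,\phi) = \infty$, so it suffices to handle $\hat v = v_2$. Computing $\|v_1 - v_2\|_\infty = 2(1-\delta)/(1-\gamma)$ (the maximum is attained at $s_2$) then gives $\alpha_\infty^{\hat v}(\cM_1,\mu,\phi) \geq 2(1-\delta)/(1-\gamma)$, and choosing $\delta = \varepsilon(1-\gamma)/2 \in (0,1)$ yields the desired bound $2/(1-\gamma) - \varepsilon$.

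I expect the main subtlety to be the aliased-matching step: it is essential that the two states of $\cM_2$ share the \emph{same} stochastic reward distribution, since otherwise $v_2$ would no longer be constant, realizability would be lost, and the argument would collapse to a constant-order lower bound (as is the case in the more symmetric two-state constructions one naturally tries first). Once this matching is in place, everything else is a small arithmetic verification on a two-state MRP.
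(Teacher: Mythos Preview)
Your proposal is correct and follows the same strategy as the paper's proof: construct two aliased-indistinguishable instances with one of them realizable (forcing the estimator), then evaluate that forced output on the other instance. The paper's construction differs only cosmetically—its realizable instance $\cM_2$ is a single-state MRP with a $\mathrm{Ber}(p)$ reward rather than your two-state MRP with identical stochastic rewards at both states, and its $\cM_1$ uses rewards $\{1,0\}$ rather than $\{-1,1\}$; the one loose end in your write-up is that ``the maximum is attained at $s_2$'' requires $\delta \le (1+\gamma)/2$, but for larger $\varepsilon$ the target $2/(1-\gamma)-\varepsilon$ drops below $1$ and the bound is trivial anyway.
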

\begin{proof}[Proof (sketch)]
We use same construction as Theorem \ref{thm:aliased-local-lb}, but the error remains bounded when we are under the $L_\infty$ norm. See Appendix \ref{app:aliasting-lb-infty} for a full proof. 
\end{proof}

It is interesting to note that this abstract model-based estimator does not work under $L_2(\mu)$ misspecification. In particular, the construction in the lower bound of Theorem \ref{thm:aliased-local-lb} satisfies the full-support assumption (Assumption \ref{ass:full-support}), yet the minimax $L_2(\mu)$ error can be taken to infinity by taking the ``pushforward'' parameter $\norm{\Pi_\mu P}_\mu \approx \mu(s_1)/\mu(s_2) \rightarrow \infty$. We note that the function $v_\phi \circ \phi$ may not be a linear function of the features (since it is defined pointwise for each value of $\phi$). We can output a linear function simply by taking the $L_\infty$ projection to the set of linear functions, which results in a final bound of $1+\frac{2}{1-\gamma}$ (see  Corollary \ref{cor:projected-v-phi} in Appendix \ref{app:projected-v-phi}). 

\section{Related works}\label{sec:related}

\paragraph{Existing negative results for off-policy evaluation}

With finite-horizons MRPs, \cite{wang2020statistical} show that the sample complexity of off-policy evaluation problem under realizability and a good $\lambda_{\min}(\Sigma)$ may be exponential in $d$ or in $H$, the horizon. This was adapted to the infinite-horizon setting by \citep{amortila2020variant} which shows that even with $L_\infty$ realizability and good $\lambda_{\min}(\Sigma)$ the true solution may be asymptotically unidentifiable. \citeauthor{perdomo_sharp_2022} extend this to show that the invertibility of $A$ may be necessary for identifiability in the realizable setting:  they show that amongst a class of ``linear estimators'' (which depend only on certain first-moment quantities), \emph{any} instance where $A=0$ can be modified such that these estimators cannot recover the true value function. Their lower bound only applies to a restricted class of estimators, whereas ours rules out all estimators. However, the ``instance-dependence'' in their result is stronger as it allows for arbitrary instances $(\cM,\mu,\phi)$; we instead allow for arbitrary parameter values and then take a worst-case over instances with these values. %
In the misspecified on-policy case, \citep{mou_optimal_2020} show that LSTD has the optimal approximation factor for restricted sample sizes $n$ satisfying $n^2 + d \lesssim S$. In the on-policy case, the asymptotic approximation ratio is $1$, so the hardness in their result comes from the sample size restriction, whereas ours comes from the non-stationarity of the off-policy distribution $\mu$. Overall, there was no precise understanding of when this problem is solvable/not solvable (a result which is captured by our instance-dependent bounds), or which blowup is optimal in the off-policy case (even asymptotically).

\paragraph{Existing guarantees for LSTD}

The LSTD algorithm was originally proposed by \citep{bradtke1996linear}. There have been several sample complexity analyses, (e.g. \citet{perdomo_sharp_2022, pires2012statistical,tu2018least,duan_optimal_2021}). In terms of approximation ratios under misspecification, in the on-policy case, \citep{tsitsiklis_analysis_1997} derive the classical approximation ratio bound of $(1-\gamma^2)^{-1/2}$, which uses the fact that $P$ (and thus $\Pi_\mu P$) are contractive in the $L_2(\mu)$ norm when $\mu$ is the stationary distribution. %
This bound was sharpened in an instance-dependent fashion by \citet{yu_error_2010, mou_optimal_2020}, which both consider the more general problem of solving projected fixed point equations. Their approximation bounds are similar to our Theorem \ref{thm:lstd-ub}, although our proof relies on a simpler and exact error decomposition. Our proof also enables us to readily derive $L_\infty$ bounds, whereas only $L_2(\mu)$ bounds are considered the above works. Conversely, the work of \citep{perdomo_sharp_2022} provides approximation bounds only in the $L_\infty$ norm, and these sub-optimally scale with \textit{both} $\lambda_{\min}(\Sigma)$ and $\sigmamin{A}$ (similar to our result in Appendix \ref{app:translating}). 
The work of \citep{scherrer2010should} studies both LSTD and Bellman Residual Minimization and shows that they are both instances of oblique projections onto certain subspaces, a perspective which yields the approximation factor of $\norm{\Pi_{(I-\gamma P)^\top D \Phi}}_\mu$, where $\Pi_X = \Phi (X^\top \Phi)^{-1} X^\top$ is the oblique projection operator.

\paragraph{OPE at large}
In the paper we show the quantities $\norm{\Pi_\mu P}_\mu$ and $\sigma_{\min}(\Sigma^{-1/2}A\Sigma^{-1/2})$ are both necessary for the $L_2(\mu)$ norm, and that $\sigmamin{A}$ is necessary for the $L_\infty$ norm. This implies that removing the finiteness of any of these quantities leads to unbounded approximation ratio. However, our results do not exclude the possibility that one can come up with alternative assumptions/quantities to replace them. In fact, there are two sets of alternative assumptions that are widely used in the OPE literature: (1) ``Bellman-completeness'' \citep{antos2008learning, munos2007performance, chen2019information, duan_minimax-optimal_2020}, which asserts that the function class is \textit{closed} under the Bellman operator, and (2) the realizability of so-called importance weight functions \citep{liu2018breaking,uehara_minimax_2020,miyaguchi2021asymptotically}. However, most of these works focus on the estimation of the expected return at the initial state distribution instead of recovering the full function (with \citet{huangbeyond} as an exception), and none of them study the optimality of the approximation ratio. Moreover, under these different assumptions, the definition of misspecification error changes (e.g., the violation of Bellman-completeness is sometimes referred to as ``inherent Bellman error'' (IBE) \citep{antos2008learning}), and so does the behavior of the approximation ratio. Under the $L_2(\mu)$ norm, small misspecification and small IBE do not imply each other, so studying the approximation ratio under these assumptions would be an interesting future direction. We can also compare the $\norm{\Pi_\mu P}_\mu$ quantity, a measure of data coverage, with the more classical notion of concentrability \citep{antos2008learning}.\footnote{In this setting we could for example define concentrability as $\norm{\rho/\mu}_\infty$ or $\norm{d/\mu}_\infty$, where $\rho$ is the stationary distribution and $d$ is the discounted state occupancy.} However, the construction of Theorem \ref{thm:sigmamin-lb} shows that $\norm{\Pi_\mu P}_\mu$ can be small while concentrability could be infinite, so this notion is too loose for our purposes. On the other hand, notions of \textit{linear concentrability}\footnote{For instance defined as the ratio of the operator norms for $\Sigma_\rho = \bE_\rho [\phi \phi^\top]$ and $\Sigma$, i.e. $\norm{\Sigma_\rho}_2/\norm{\Sigma}_2$} \cite{uehara2021representation}, are too weak to capture the hardness of this problem, as they can be made small in Lemma \ref{lemma:PiPnecessary} while keeping $\sigmamin{\Sigma^{-1/2}A\Sigma^{-1/2}} > 0$ and $\norm{\Pi_\mu P}_\mu = \alpha^\star = \infty$.

\section{Conclusion}

In this work we have highlighted the importance of understanding the necessary blowups to the approximation factors which occur in misspecified RL problems. We have posed a simple but fundamental learning problem, that of linear off-policy value function estimation, and focused on establishing the optimal approximation ratios for this problem achieved even by asymptotic estimators. We have provided instance-dependent upper and lowers bounds for a variety of settings (the $L_2(\mu)$ and $L_\infty$ norms, aliased and non-aliased observations, partial support and full-support). It was found that, for most of these settings, the LSTD estimator attains the optimal approximation factor and is fundamental in the sense that if it diverges that so does any other estimator. In other settings (namely, when there is full support), it was found that alternative model-based estimators instead achieve the optimal approximation ratios. 

While we have only been concerned with policy evaluation, it would also be important to consider misspecification in the more complicated offline policy optimization or online exploration problems. Here, there are even more hardness results which preclude a simple answer to this question \cite{lattimore2020learning,du2019good,weisz2021exponential,foster2021offline}.

\section*{Acknowledgements}
We thank Ashwin Pananjady and Akshay Krishnamurthy for helpful discussions. PA gratefully acknowledges funding from the Natural Sciences and Engineering Research Council (NSERC) of Canada. NJ acknowledges funding support from NSF IIS-2112471 and NSF CAREER IIS-2141781. CS gratefully acknowledges funding from NSERC and the Canada CIFAR AI Chairs Program through Amii.

\bibliography{misspec-ope1-2}
\bibliographystyle{icml2023}

\newpage
\appendix
\onecolumn
\section{Proofs for Section \ref{sec:l2mu}}

\subsection{Proof of Theorem \ref{thm:lstd-ub}}\label{app:lstd-ub}

\lstdub*

The proof follows from the following exact characterization of LSTD.
\begin{lemma}\label{lemma:lstd-ls}
If $A^{-1}$ exists, then we have that
$$
\LS - \LSTD = \gamma A^{-1} \Phi^\top D P v^\perp,
$$
where $v^\perp = v_\cM - \Pi_\mu v_\cM$. 
\end{lemma}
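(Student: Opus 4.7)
The plan is to work directly from the definitions and exploit the orthogonality property that characterizes the least-squares projection. The key identity is that $\LS$ solves the normal equations $\Sigma \LS = \Phi^\top D v_\cM$, so the residual $v^\perp := v_\cM - \Phi\LS$ is $\mu$-orthogonal to the feature columns, i.e. $\Phi^\top D v^\perp = 0$. This is the only nontrivial ingredient.

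The main computation is a short substitution. First I would rewrite the Bellman data: since $v_\cM = (I-\gamma P)^{-1} r$, we have $r = (I - \gamma P) v_\cM$, so
\begin{equation*}
\LSTD = A^{-1} b = A^{-1} \Phi^\top D (I - \gamma P) v_\cM.
\end{equation*}
Next, I would split $v_\cM = \Phi \LS + v^\perp$ inside this expression. The contribution of $\Phi \LS$ collapses by the definition of $A$:
\begin{equation*}
A^{-1} \Phi^\top D (I - \gamma P) \Phi \LS = A^{-1} A \LS = \LS.
\end{equation*}
The remaining term is $A^{-1} \Phi^\top D (I - \gamma P) v^\perp = A^{-1} \Phi^\top D v^\perp - \gamma A^{-1} \Phi^\top D P v^\perp$, and the first summand vanishes by the orthogonality $\Phi^\top D v^\perp = 0$ noted above. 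Rearranging yields
\begin{equation*}
\LS - \LSTD = \gamma A^{-1} \Phi^\top D P v^\perp,
\end{equation*}
which is exactly the claim.

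I do not foresee a serious obstacle: the argument is purely algebraic once the two ingredients (Bellman equation $r = (I-\gamma P)v_\cM$ and $\mu$-orthogonality of $v^\perp$ to the features) are in hand. The only place to be slightly careful is checking that $\LS$ is well-defined, which follows from invertibility of $\Sigma$ (Assumption \ref{ass:lrassumptions}), and that the $A^{-1}A$ cancellation is legal, which is the hypothesis of the lemma. The identity in \eqref{eq:ls-minus-lstd} then follows by multiplying through by $\Phi$, and the two upper bounds in Theorem \ref{thm:lstd-ub} are obtained by taking $L_2(\mu)$ operator norms and using $\norm{v^\perp}_\mu = \inf_\theta \norm{\Phi\theta - v_\cM}_\mu$, together with the factorization $\Phi A^{-1}\Phi^\top D = \Phi \Sigma^{-1/2}(\Sigma^{-1/2} A \Sigma^{-1/2})^{-1}\Sigma^{-1/2}\Phi^\top D$ to extract $\sigma_{\min}(\Sigma^{-1/2}A\Sigma^{-1/2})$ and $\Pi_\mu$ separately for the second bound.
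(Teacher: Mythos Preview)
Your proof is correct and follows essentially the same approach as the paper: both use the Bellman identity $r = (I-\gamma P)v_\cM$, the decomposition $v_\cM = \Phi\LS + v^\perp$, and the orthogonality $\Phi^\top D v^\perp = 0$ to reduce $\LSTD$ to $\LS - \gamma A^{-1}\Phi^\top D P v^\perp$. The only cosmetic difference is that the paper starts from $(I-\gamma P)v_\cM = r$ and multiplies through by $\Phi^\top D$ to derive an expression for $\LS$, whereas you start from $\LSTD = A^{-1}\Phi^\top D(I-\gamma P)v_\cM$ and split $v_\cM$; the algebraic content is identical.
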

\begin{proof}
We have that $v_\cM = \Pi_\mu v_\cM + v^\perp$, with $\Pi_\mu v \in \col(\Phi)$ and $v^\perp \in (\col(\Phi))^\perp$ (note: the $\perp$ subspace is with respect to the $\mu$-weighted inner product). This equivalently means that $v^\perp \in \Ker(\Phi^\top D)$. Then we have: %
\begin{align*}
    (I - \gamma P) v_M &= r \\
    (I - \gamma P) \Phi \LS + (I- \gamma P) v^\perp &= r \\
    \Phi^\top D (I - \gamma P) \Phi \LS + \Phi^\top D (I - \gamma P) v^\perp &= \Phi^\top D r \tag{$\Phi^\top D$ on both sides}\\
    \left(\Phi^\top D(\Phi - \gamma P \Phi)\right) \LS - \gamma \Phi^\top D P v^\perp &= \Phi^\top D r \tag{$v^\perp \in \Ker(\Phi^\top D)$}\\
     A \LS  &= b + \gamma \Phi^\top D P v^\perp \tag{Defns of $A,b$}\\
     \LS &= A^{-1} b + \gamma A^{-1}\Phi^\top D P v^\perp \tag{$A^{-1}$ exists}
\end{align*}
Meanwhile, the LSTD solution is defined by $\LSTD = A^{-1}b$. Substracting both of these gives:
 \begin{equation}\label{eq:ls-lstd}
 \LS - \LSTD = \gamma A^{-1} \Phi^\top D P v^\perp.
 \end{equation}
 
 Note that we also have: 
 
  \begin{equation}\label{eq:ls-lstd-2}
 \LS - \LSTD = - A^{-1} \Phi^\top D (I - \gamma P) v^\perp,
 \end{equation}

since $\Phi^\top D v^\perp = 0$.
 
\end{proof}

\begin{corollary}
Let $\vLSTD = \Phi \LSTD$ and $\vLS = \Phi \LS$. Then we have the two inequalities
\[
\norm{\Phi \LSTD - \Phi \LS}_\mu = \gamma \norm{\Phi A^{-1} \Phi^\top D P v^\perp}_\mu \leq \gamma \norm{\Phi A^{-1} \Phi^\top D P}_\mu \norm{v^\perp}_\mu
\]
and
\begin{equation}\label{eq:first-approx-ub}
\norm{\vLSTD - \vLS}_\mu \leq \frac{\gamma}{\sigma_{\min}(I - \gamma \Sigma^{-1/2}\Sigmacr\Sigma^{-1/2})}\norm{\Pi_\mu P v^\perp}_\mu \leq \frac{\gamma}{\sigma_{\min}(\Sigma^{-1/2}A\Sigma^{-1/2})}\norm{\Pi_\mu P}_\mu \norm{v^\perp}_\mu,
\end{equation}
where $\Sigma = \Phi^\top D \Phi = \bE_{\mu}[\phi(s) \phi(s)^\top]$ is the covariance matrix and $\Sigmacr = \Phi^\top D P \Phi = \bE_{\mu,P}[\phi(s) \phi(s')^\top]$ is the cross-covariance.
\end{corollary}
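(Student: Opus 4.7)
My plan is to build directly on Lemma \ref{lemma:lstd-ls}. Multiplying the identity $\LS - \LSTD = \gamma A^{-1}\Phi^\top D P v^\perp$ on the left by $\Phi$ yields the exact error formula
\[
\vLS - \vLSTD \;=\; \gamma\, \Phi A^{-1}\Phi^\top D P\, v^\perp.
\]
Taking $\norm{\cdot}_\mu$ on both sides gives the claimed equality in the first displayed line, and the outer bound by $\gamma \norm{\Phi A^{-1}\Phi^\top D P}_\mu \norm{v^\perp}_\mu$ then follows from the definition of the $L_2(\mu)$-operator norm. That handles the first of the two inequalities.

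The second chain of inequalities is more delicate, because its intermediate expression involves $\norm{\Pi_\mu P v^\perp}_\mu$ rather than $\norm{P v^\perp}_\mu$; a plain operator-norm step is therefore not enough. The key step I would establish is the auxiliary bound
\[
\norm{\Phi A^{-1}\Phi^\top D\, y}_\mu \;\leq\; \frac{1}{\sigmamin{\Sigma^{-1/2} A \Sigma^{-1/2}}}\, \norm{\Pi_\mu y}_\mu \qquad \text{for every } y \in \bR^S.
\]
To prove this I would introduce $\theta^\star = A^{-1}\Phi^\top D y$ and $\theta_\Pi = \Sigma^{-1}\Phi^\top D y$, so that the two vectors being compared are $\Phi\theta^\star$ and $\Pi_\mu y = \Phi\theta_\Pi$. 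Since the quantity $\Phi^\top D y$ is common to both definitions, the coefficients are linked by $A\theta^\star = \Sigma\theta_\Pi$. I would then change variables by $\Sigma^{1/2}$: set $u = \Sigma^{1/2}\theta^\star$ and $w = \Sigma^{1/2}\theta_\Pi$. This has two simultaneous effects: first, $\norm{\Phi\theta}_\mu^2 = \theta^\top \Sigma \theta$ implies $\norm{\Phi\theta^\star}_\mu = \norm{u}_2$ and $\norm{\Phi\theta_\Pi}_\mu = \norm{w}_2$, so the $\mu$-weighted norms become plain Euclidean norms; second, the relation $A\theta^\star = \Sigma\theta_\Pi$ rewrites as $(\Sigma^{-1/2} A \Sigma^{-1/2})\, u = w$. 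Inverting the latter yields $\norm{u}_2 \leq \norm{w}_2 / \sigmamin{\Sigma^{-1/2} A \Sigma^{-1/2}}$, which gives the auxiliary bound.

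Applying this bound with $y = P v^\perp$, and combining with the exact decomposition from the first step, yields $\norm{\vLS - \vLSTD}_\mu \leq \gamma \norm{\Pi_\mu P v^\perp}_\mu / \sigmamin{\Sigma^{-1/2} A \Sigma^{-1/2}}$. To match the intermediate expression as written, I would use the algebraic identity $A = \Phi^\top D(I-\gamma P)\Phi = \Sigma - \gamma \Sigmacr$, which yields $\Sigma^{-1/2} A \Sigma^{-1/2} = I - \gamma\, \Sigma^{-1/2} \Sigmacr \Sigma^{-1/2}$, so the two minimum-singular-value denominators are literally equal. The final outer inequality is then obtained by the submultiplicative bound $\norm{\Pi_\mu P v^\perp}_\mu \leq \norm{\Pi_\mu P}_\mu \norm{v^\perp}_\mu$.

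The main obstacle will be identifying the correct change of variables in the auxiliary bound: one needs a linear transformation that simultaneously turns the $\mu$-weighted norms into Euclidean norms (so that operator norms can be read off cleanly) and converts the mismatch between $A$ and $\Sigma$ into a single matrix whose minimum singular value controls the inversion. The choice $\Sigma^{1/2}$ accomplishes both tasks at once; once this is in hand, the remaining steps are essentially bookkeeping with operator norms together with the identity $A = \Sigma - \gamma \Sigmacr$.
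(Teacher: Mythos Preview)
Your proposal is correct and follows essentially the same approach as the paper: both arguments hinge on the change of variables by $\Sigma^{1/2}$ that converts $\norm{\Phi\theta}_\mu$ into the Euclidean norm $\norm{\Sigma^{1/2}\theta}_2$, then factor $\Sigma^{1/2}A^{-1}\Sigma^{1/2} = (\Sigma^{-1/2}A\Sigma^{-1/2})^{-1}$ to extract the minimum singular value, and finally recognize the remaining piece $\Sigma^{-1/2}\Phi^\top D y = \Sigma^{1/2}\Sigma^{-1}\Phi^\top D y$ as $\norm{\Pi_\mu y}_\mu$ after undoing the change of variables. The only difference is packaging: you isolate the step as an auxiliary bound with named intermediate vectors $u,w$, whereas the paper writes it as a single chain of equalities on the specific vector $Pv^\perp$.
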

\begin{proof}
The first equality follows from 
$$
\norm{\Phi \LSTD - \Phi \LS}_\mu = \gamma \norm{\Phi A^{-1} \Phi^\top D P v^\perp}_\mu \leq \gamma \norm{\Phi A^{-1} \Phi^\top D P}_\mu \norm{v^\perp}_\mu
$$
The second equality follows from:
\begin{align}
    \norm{\Phi \LSTD - \Phi \LS}_\mu &= \norm{\Sigma^{1/2}(\LSTD - \LS)}_2 \nonumber \\
    &= \gamma \norm{\Sigma^{1/2} A^{-1}\Phi^\top D P v^\perp}_2 \nonumber  \\
    &= \gamma \norm{(I - \gamma \Sigma^{-1/2} \Sigmacr\Sigma^{-1/2})^{-1}\Sigma^{-1/2}\Phi^\top D P v^\perp}_2 \nonumber \\
    &\leq \frac{\gamma}{\sigma_{\min}(I - \gamma \Sigma^{-1/2}\Sigmacr\Sigma^{-1/2})}\norm{\Sigma^{-1/2}\Phi^\top D P v^\perp}_2 \nonumber  \\
    &= \frac{\gamma}{\sigma_{\min}(I - \gamma \Sigma^{-1/2}\Sigmacr\Sigma^{-1/2})}\norm{\Sigma^{1/2}\Sigma^{-1}\Phi^\top D P v^\perp}_2 \nonumber  \\
    &= \frac{\gamma}{\sigma_{\min}(I - \gamma \Sigma^{-1/2}\Sigmacr\Sigma^{-1/2})}\norm{\Sigma^{1/2}(\Phi^\top D \Phi)^{-1}\Phi^\top D P v^\perp}_2 \nonumber  \\
    &=\frac{\gamma}{\sigma_{\min}(I - \gamma \Sigma^{-1/2}\Sigmacr\Sigma^{-1/2})}\norm{\Phi(\Phi^\top D \Phi)^{-1}\Phi^\top D P v^\perp}_\mu \nonumber \\
    &= \frac{\gamma}{\sigma_{\min}(I - \gamma \Sigma^{-1/2}\Sigmacr\Sigma^{-1/2})}\norm{\Pi_\mu P v^\perp}_\mu  \label{eq:first-bound}
\end{align}
And then note that $I - \gamma \Sigma^{-1/2}\Sigmacr\Sigma^{-1/2} = \Sigma^{-1/2}(\Sigma - \gamma \Sigmacr)\Sigma^{-1/2} = \Sigma^{-1/2}A\Sigma^{-1/2}.$
\end{proof}

To conclude the proof of Theorem \ref{thm:lstd-ub}, we can use the Pythagorean theorem on $\vLSTD - \Pi_\mu v_\cM \in \col(\Phi)$ and $\Pi_\mu v_\cM - v_\cM \in (\col(\Phi))^\perp$:
\begin{align*}
\norm{\vLSTD - v_\cM}_\mu &= \sqrt{ \norm{\Pi_\mu v_\cM - v_\cM}_\mu^2 + \norm{\vLSTD - \Pi_\mu v_\cM}^2_\mu} \\
	&\leq \sqrt{ \norm{\Pi_\mu v_\cM - v_\cM}_\mu^2 + \left(\gamma \norm{\Phi A^{-1} \Phi^\top D P}_\mu\right)^2\norm{\Pi_\mu v_\cM - v_\cM}_\mu^2} \\
	&= \sqrt{ 1 + \left(\gamma \norm{\Phi A^{-1} \Phi^\top D P}_\mu\right)^2} \norm{\Pi_\mu v_\cM - v_\cM}_\mu \\
	&\leq \sqrt{ 1 + \left(\gamma \frac{\norm{\Pi_\mu P}_\mu}{\sigmamin{\Sigma^{-1/2}A\Sigma^{-1/2}}}\right)^2} \norm{\Pi_\mu v_\cM - v_\cM}_\mu \\
	&\leq \left(1 + \gamma\frac{\norm{\Pi_\mu P}_\mu}{\sigmamin{\Sigma^{-1/2}A\Sigma^{-1/2}}}\right)\norm{\Pi_\mu v_\cM - v_\cM}_\mu \tag{$\sqrt{1+x^2} \leq 1 +x$ whenever $x \geq 0$}
\end{align*}

\subsection{Proof of Theorem \ref{thm:aliased-local-lb}}\label{app:aliased-local-lb}

\aliasedlocallb*

\begin{proof}
Our first instance is $(\cM_1,\mu,\phi)$ defined via

\begin{center}
\begin{tikzpicture}
\node[state] (q1) {$\varphi(s_1) = \varphi$};
\node[state, right=1in of q1] (q2) {$\varphi(s_2)=\varphi$};
\draw[->] (q1) edge[above] node{$r=1$} (q2)
(q2) edge[loop above] node{$r=0$} (q2);
\end{tikzpicture}\label{fig:aliasing-lb-infty}
\end{center}

We set $\phi=1 \in \bR^1$, and define the shorthands $\mu_1 = \mu(s_1)$, $\mu_2 = \mu(s_2)$. The values of $\gamma$ and $\mu_1,\mu_2$ will be picked to ensure that $\sigmamin{\Sigma^{-1/2}A\Sigma^{-1/2}} = y$ and $\norm{\Pi_\mu P}_\mu = x$.

By taking the derivative of $\mu_1 (\theta - 1)^2 + \mu_2 (\theta)^2$ wrt $\theta$, the optimal estimator is $\theta_1 = \theta \phi = \mu_1$. It has a square error: 
$$
\norm{\theta_1\phi(s) - v_{\cM_1}}_\mu^2 = \mu_1 (\mu_1 -1)^2 + \mu_2(\mu_1)^2 = \mu_1-\mu_1^2 = \mu_1\mu_2
$$

Note that this instance has $
\bP = \begin{pmatrix}
0 & 1 \\
0 & 1 \\
\end{pmatrix}
$, $\Sigma^{-1} = 1$, and $\Pi_\mu = \Phi \Phi^\top D = (1, 1) (1, 1)^\top D =  \begin{pmatrix}
 1 & 1 \\
 1 & 1 \\	
 \end{pmatrix}
\begin{pmatrix}
\mu_1 & 0 \\
0 & \mu_2 \\	
\end{pmatrix}
= 
\begin{pmatrix}
\mu_1 & \mu_2 \\
\mu_1 & \mu_2 \\
\end{pmatrix}$. This gives $\Pi_\mu P = \begin{pmatrix}
\mu_1 & \mu_2 \\
\mu_1 & \mu_2 \\
\end{pmatrix} \begin{pmatrix}
0 & 1 \\
0 & 1 \\
\end{pmatrix} = \begin{pmatrix}
0 & 1 \\
0 & 1 \\
\end{pmatrix}$. The operator norm thus has a value
$$
\max_{\norm{v}_\mu=1} \norm {\Pi_\mu Pv}_\mu = \max_{\norm{v}_\mu = 1} \sqrt{\mu_1 v_2^2 + \mu_2 v_2^2},
$$
which is maximized by taking $v_1=0,v_2 = 1/\sqrt{\mu_2}$, giving a value of
$$
\norm{ \Pi_\mu P}_\mu = \sqrt{\frac{\mu_1}{\mu_2} + 1}. 
$$
Note that $\norm{\Pi_\mu P}_\mu \in [1,\infty]$. We need this to equal $x$ which is easily achieved by solving $1 + \frac{\mu_1}{1-\mu_1} = x^2 \implies \mu_1 = \frac{x^2-1}{x^2}$ which lies inside $(0,1)$ for all $x \in (1,\infty)$. The cases where $x=1$ or $x=\infty$ are handled by picking $\mu_1=1$ or $\mu_1=0$, respectively.
Meanwhile we also have that 
$$
A = \phi^2 - \gamma \phi^2 = \phi^2(1-\gamma) = (1-\gamma),
$$
and $\Sigma^{-1/2}A\Sigma^{-1/2} = A$. We need $A = y$, which is achieved by picking $\gamma= 1 - y$. Note the restriction on the domain of $y \in (0,1/2)$ means that $1/2 < \gamma < 1 $.
 
The second MRP is the one defined as:

\begin{center}
\begin{tikzpicture}
\node[state] (q1) {$\varphi(s_1) = \varphi$};
\node[state, right=1in of q1] (q2) {$\varphi(s_2)=\varphi$};
\draw[->] (q1) edge[above] node{$r=\text{Ber}(\mu_1)$} (q2)
(q2) edge[loop above] node{$r=\text{Ber}(\mu_1)$} (q2);
\end{tikzpicture}\label{fig:aliasing-lb-infty}
\end{center}

where again $\phi=1 \in \bR^1$. We take $\mu_1$ and $\mu_2$ to be the same as in the first MRP. This instance also has $A = \phi^2(1-\gamma) = (1-\gamma)$ and $\norm{\Pi_\mu P}_\mu = 1+\frac{\mu_1}{\mu_2}$, which is easily seen since the features and the transition dynamics are the same. Further note that these two MRPs generate the same aliased distribution $\bQ_{\cM,\mu,\phi}$ since they both generate $(\phi,0,\phi)$ with probability $1-\mu_1$ and $(\phi,1,\phi)$ with probability $\mu_1$.

The optimal estimator for $\cM_2$ is evidently $\theta_2 = \phi \theta = \mu_1/(1-\gamma)$, since $v_{\cM_2}(s_1) = v_{\cM_2}(s_2) = \mu_1/(1-\gamma)$. In particular, this second MRP is realizable so this forces the estimator to pick $\mu_1/(1-\gamma)$ when faced against these two examples ($\mu_1$ is known by looking at the occurrence of the triples $(\phi,1,\phi)$ in $\bQ_{\cM,\mu,\phi}$). And other choice of estimator will in fact have a worst-case approximation ratio $\sup_{\cM \in \{\cM_1,\cM_2\}} \alpha = \infty$. On the first instance, this estimator will have a squared error
\begin{align*}
\norm{\theta_2 \phi(s) - v_{\cM_1}}^2_\mu = \mu_1(\frac{\mu_1}{1-\gamma}-1)^2 + \mu_2(\frac{\mu_1}{1-\gamma})^2 &= \mu_1 ((\frac{\mu_1}{1-\gamma})^2 - 2\frac{\mu_1}{1-\gamma} + 1) + \mu_2 (\frac{\mu_1}{1-\gamma})^2 \\
&= (\frac{\mu_1}{1-\gamma})^2 (\mu_1+\mu_2) - 2\frac{\mu_1^2}{1-\gamma} + \mu_1 \\
&= (\frac{\mu_1}{1-\gamma})^2 - 2\frac{\mu_1^2}{1-\gamma} + \mu_1
\end{align*}

Taking the ratio of squared errors gives:

\begin{align*}
\frac{\norm{\theta_2 \phi(s) - v_{\cM_1}}_\mu^2}{\norm{\theta_1 \phi(s) - v_{\cM_1}}_\mu^2 } = \frac{(\frac{\mu_1}{1-\gamma})^2 - 2\frac{\mu_1^2}{1-\gamma} + \mu_1}{\mu_1-\mu_1^2} &= 
\frac{\frac{\mu_1}{(1-\gamma)^2} - 2\frac{\mu_1}{1-\gamma} + 1}{1-\mu_1}
\\
&= \frac{\frac{\mu_1}{(1-\gamma)^2} - 2\frac{\mu_1}{1-\gamma} + 1}{\mu_2} \\
&\geq 1 + \frac{\mu_1}{\mu_2} \left( \frac{2\gamma - 1}{(1-\gamma)^2} \right) \tag{$1/\mu_2 \geq 1$, and algebra} \\
&\geq 1 + \frac{\mu_1}{\mu_2} \left( \frac{1}{(1-\gamma)^2} \right) \tag{$\frac{1}{2} \leq \gamma \leq 1$} \\
&= 1 + \frac{ \norm{\Pi_\mu P}^2_\mu - 1 }{\sigmamin{\Sigma^{-1/2}A\Sigma^{-1/2}}^2} \\
&\geq 1 + \gamma ^2 \frac{ \norm{\Pi_\mu P}^2_\mu - 1 }{\sigmamin{\Sigma^{-1/2}A\Sigma^{-1/2}}^2} \\
\end{align*}

The LHS was the ratio of squared errors, so taking square roots gives $\alpha_\mu$ and the desired bound. 

\end{proof}

\subsection{Proof of Theorem \ref{lemma:PiPnecessary}}\label{app:PiPnecessary}

\PiPnecessary*

\begin{proof}
This example is a slight modification of the two-state example \cite{amortila2020variant}. See Figure \ref{fig:PiPnecessary}.

\begin{figure}[htp]\label{fig:PiPnecessary}
    \begin{center}
    \begin{tikzpicture}
    \node[state] (q1) {$\varphi(s_1) = \gamma$};
    \node[state, right=1in of q1] (q2) {$\varphi(s_2)=1+\varepsilon$};
    \draw[->] (q1) edge[above] node{$r=0$} (q2)
    (q2) edge[loop above] node{$r$} (q2);
    \end{tikzpicture}\label{fig:aliasing-lb-infty}
    \end{center}
    \caption{The construction of Lemma \ref{lemma:PiPnecessary}}
    \label{fig:eps-discounted}
\end{figure}
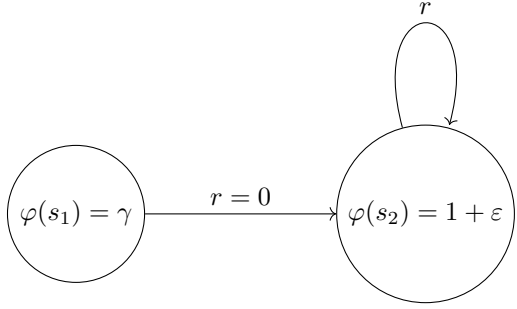
In the construction, we have $\mu(s_A) = 1$ and $\mu(s_B) = 0$. Note that $A = -\gamma^2\varepsilon \neq 0$.  Since the reward at state $s_B$ is never observed, any estimator will have constant error $\Omega(1/(1-\gamma))$ error asymptotically. Since $v_\cM$ is realizable on $s_A$ (with $\theta = r/(1-\gamma)$), the misspecification is $0$. Thus, the approximation ratio is infinite. The last thing to show is that $\norm{\Pi_\mu P}_\mu = \infty$. This is because $
P = \begin{pmatrix}
0 & 1 \\
0 & 1 \\
\end{pmatrix}
$, and $\Pi_\mu = \begin{pmatrix}
1 & 0 \\
(1+\varepsilon)/\gamma & 0 \\
\end{pmatrix}$ so $\Pi_\mu P = \begin{pmatrix}  
0 & 1 \\
0 & (1+\varepsilon)/\gamma \\
\end{pmatrix}$ thus $\norm{\Pi_\mu P}_\mu = \max_{\norm{v}_\mu=1} \norm{\Pi_\mu P v} = \max_{\norm{v}_\mu=1} \norm{(v_2, \frac{1+\varepsilon}{\gamma} v_2)^\top}_\mu = \max_{\norm{v}_\mu=1} v_2 = \infty$. In the last step we can take $v_2 \rightarrow \infty$ in the maximization since that state is unsupported. 
\end{proof}

\subsection{Proof of Theorem \ref{thm:sigmamin-lb}}\label{app:sigmamin-lb}

 \paragraph{Block matrix notation} In this section we will use the following convenient block matrix notation. Noting that $|\supp(\mu)| \leq S$ (with equality iff $\mu$ has support on all the states), we will re-arrange the states such that those that are supported are numbered $1 \dots |\supp(\mu)|$, and the unsupported ones are numbered $|\supp(\mu)|+1 \dots S$. Furthermore, for a given vector $v \in \bR^S$, we will write $v_\mu = (v_1, \dots, v_{|\supp(\mu)|})$ for the restriction of $v$ to the support states of $\cS$, and $v_{\neg \mu} = (v_{|\supp(\mu)| + 1}, \dots, v_S)$ for the restriction of $v$ to the unsupported states. Similarly, for a given matrix $X$, we will write it in block form as 
 $$
 X = \left[
\begin{array}{c|c}
X_{\mu,\mu}  & X_{\mu,\neg \mu}\\ \hline
 X_{\neg \mu, \mu} & X_{\neg \mu, \neg \mu}
\end{array}\right]
 $$

\sigmaminlb* 

We start by noting the following property.
\pipmu*
\begin{proof}
We show that $\norm{\Pi_\mu P }_\mu < \infty \iff (\Pi P)_{\mu, \neg \mu} = 0_{\mu, \neg \mu}$, and then that this implies $\forall i \in \mu, k \notin \mu$, $\langle \phi_i, \Sigma^{-1} \left(\sum_j \mu_j \phi_j P_{j,k} \right) \rangle = 0$. Lastly we show that If $\lambda_{\min}(\Sigma) > 0$ then $\norm{\Pi_\mu P }_\mu < \infty$ if and only if $\sum_j \mu \phi_j P_{j,k} =0 \, \forall k \notin \mu$.

The first part is easily observed by noting that $\norm{\Pi_\mu P}_\mu < \infty \iff \max_{\norm{v}_\mu = 1} \norm{\Pi_\mu P v}_\mu < \infty \iff \max_{\norm{v}_\mu = 1} \norm{( (\Pi_\mu P)_{\mu,\mu} v_\mu + (\Pi_\mu P)_{\mu, \neg \mu} v_{\neg \mu}; 0_{\neg \mu})}_\mu < \infty \iff (\Pi_\mu P)_{\mu, \neg \mu} = 0$, where the last line follows since if it has a non-trivial kernel then we can take $v_{\neg \mu}$ going to infinity while satisfying the constraints $\norm{v}_\mu = 1$. The second part is observed by expanding the definition of $(\Pi_\mu P)_{i,k}$ for all $i \in \mu$ and all $k \notin \mu$. For the last part, we note that $\lambda_{\min}(\Sigma) > 0$ implies that the span of $\{\phi(s_i)\}_{i \in \supp(\mu)} = \bR^d$. Thus, for each $k \notin \mu$, the set of equations $\langle \phi_i, \Sigma^{-1} \left(\sum_j \mu_j \phi_j P_{j,k} \right) \rangle = 0$ obtained by varying over all $i \in \mu$ must imply that the vector on the RHS must be $0$, i.e. $\Sigma^{-1} \left(\sum_j \mu_j \phi_j P_{j,k} \right) = 0 \implies \sum_j \mu_j \phi_j P_{j,k} = 0$ for each $k$. 
\end{proof}

\begin{proof}[Proof (of \ref{thm:sigmamin-lb})]
There are $m \coloneqq 3$ states in $\mu$, and $n\coloneqq 2$ states in $\neg \mu$. We number the known states as $1,2,3$ and the unknown states as $4$ and $5$. The states within $\mu$ transition amongst each other and to the unknown states. The unknown states simply self-loop. The reward will be 
$$
\bR = ( 0, 0, 0, r4, r5),
$$
where $r4$ and $r5$ are chosen later. We also set 
$$
\gamma = 9/10.
$$

The only things left to choose are $(\bP,\phi,\mu)$. Let us write down the transition matrix. 

$$
\bP = \begin{pmatrix}
0.313 & 0.2322 & 0.2999 & 0.0786 & 0.0763\\
0.8483 & 0.0014 & 0.0867 & 0.0484 & 0.0152\\
0.1144 & 0.2852 & 0.219 & 0.2437 & 0.1377\\
0 & 0 & 0 & 1 & 0\\
0 & 0 & 0 & 0 & 1\\
\end{pmatrix}
$$
where the floating point numbers are exact (i.e. can be represented as rationals). 

And of course the discounted occupancy matrix is 
$$
\mathbbm{d} \coloneqq (I - \gamma \bP)^{-1} = \begin{pmatrix}
2.22637 & 0.675069 & 0.814047 & 3.65445 & 2.63005\\
1.76839 & 1.56311 & 0.74639 & 3.56891 & 2.35319\\
0.85084 & 0.586281 & 1.58849 & 4.3413 & 2.63309\\
0 & 0 & 0 & 10 & 0\\
0 & 0 & 0 & 0 & 10\\
\end{pmatrix}
$$

The data distribution is not yet chosen but will have the following constraints
$$
\mu_1 >0, \mu_2 >0, \mu_3>0, \mu_4 = \mu_5 = 0
$$

The task of the learner is to predict a value function on $\mu$, i.e. on the first 3 states. Let us write $\mathbbm{d}_4$ for the $4^\text{th}$ column of $\mathbbm{d}$, and $\mathbbm{d}_{5}$ for the $5^\text{th}$ of $\mathbbm{d}$. The space of possible value functions in this MRP is 
$$
\cV_M = \left\{ v = r4\cdot \mathbbm{d}_{4} + r5\cdot \mathbbm{d}_{5} \mid r4,r5 \in [-1,1]  \right\} \subseteq \bR^5,
$$
since we set $r_1=r_2=r_3=0$. The space of possible value functions restricted to $\mu$ is:
$$
\cV^\mu_M = \left\{ (v(s_1),v(s_2),v(s_3))^\top = r4\cdot \mathbbm{d}_{1::3,4} + r5\cdot \mathbbm{d}_{1::3,5} \mid r4,r5 \in [-1,1]  \right\} \subseteq \bR^3,
$$
where $\mathbbm{d}_{1::3,4}$ is the first $3$ elements of $\mathbbm{d}_4$, and $\mathbbm{d}_{1::3,5}$ is the first $3$ elements of $\mathbbm{d}_5$ (i.e. the column vectors $(3.65445,3.56891,4.3413)^\top$ and $ (2.63005,2.35319,2.63309)^\top$, respectively). This is a $2-$dimensional plane lying in $\bR^3$. There is no loss of generality in assuming that the learner will pick a hypothesis whose restriction to $\mu$ is in $\cV^\mu_M$, as hypothesis lying outside of $\cV_\cM$ would be incorrect for all choices of reward functions (thus, strictly worse).

We pick a $1-$dimensional feature mapping $\phi: \cS \mapsto \bR$ (i.e. $\Phi \in \bR^{5 \times 1}$). We choose $\Phi$ such that it is a linear combination of the last two columns of $\mathbbm{d}$, i.e. 
$$
\Phi = \alpha \mathbbm{d}_4 + \beta \mathbbm{d}_5,
$$
which means that $\Phi$ is a vector lying inside $\cV_\cM$.
Our particular choice of $\alpha$ and $\beta$ give
\begin{align} \label{eq:rand_phi}
(\phi_1,\phi_2,\phi_3)^\top = -0.5874 \mathbbm{d}_{1::3,4} + 0.9354 \mathbbm{d}_{1::3,5} = (0.313528,0.104797,-0.0870883)^\top
\end{align}
The only thing left to pick now is $\mu$. We cannot do this arbitrarily, as we have to ensure that $\norm{\Pi_\mu P}_\mu < \infty$. Following the characterization of Lemma \ref{lemma:pipmu}, we need to ensure that $\sum_j \mu_j \phi_j P_{j,4} =0$ and $\sum_j \mu_j \phi_j P_{j,5} = 0$. Since we have chosen $\phi$ and $P$, the above two equations are linear constraints in $\mu$. Together with the constraint that $\mu_1+\mu_2+\mu_3=1$, we can solve them to find that 
$$
(\mu_1,\mu_2,\mu_3)^\top = ( 0.0840949, 0.660425, 0.25548)^\top
$$

Note that such a solution---where $\mu$ is a valid distribution---is not always possible for different choices of $\bP$ and $\phi$, hence the seemingly mysterious choices for $\bP$ and $\phi$. This particular instance was found via a random search: we keep generating $P$ and the coefficients in Eq.\eqref{eq:rand_phi} for defining $\phi$, and stop when we find an instance with $\mu_1, \mu_2, \mu_3 > 0$ (they can be negative). 

It remains to show that 1) $\sigma_{\min}(\Sigma^{-1/2} A \Sigma^{-1/2}) = 0$, and 2) the worst-case asymptotic approximation error is $\infty$. For 1), one can verify that with this choice of $(P,\phi,\mu)$ we have:
$$
\Sigma = 0.0174572 > 0 \,\&\, A = 0 \implies \sigma_{\min}(\Sigma^{-1/2} A \Sigma^{-1/2}) = 0.
$$

The last thing to argue is that the error is $\infty$. The space of possible linear predictors is the line $\left\{\theta \cdot (\phi_1,\phi_2,\phi_3)\right\}$. Recall that we picked $\Phi$ such that this entire line lies inside $\cV_M$. In other words, there are an infinite number of possible realizable value functions that the environment could pick (obtained via $(r4,r5) = \theta (\alpha,\beta)$ for arbitrary $\theta$). 

However, from the perspective of the learner, the only information available is the value of the reward inside $\mu$ (which is $0$), the transitions inside $\mu$ (the matrix $\bP_{\mu,\mu}$), and the transitions from $\mu$ to $\neg \mu$ (the matrix $\bP_{\mu, \neg \mu}$). In fact we can assume that the learner knows the whole $\bP$ matrix, since $\bP_{\neg \mu, \mu} =0 $ and $\bP_{\neg \mu, \neg \mu} = \texttt{Id}_{2\times2}$ (self-loops). But none of this information is enough to deduce the value of $r_4,r_5$ (the reward happens at states that are unsupported), and this reward is what determines the true value function. So, for whatever value function the learner picks, we can pick a different realizable value function, thus rendering the approximation factor infinite. (See Figure \ref{fig:lstdgeometry}).

\begin{figure}[h]
\centering
\includegraphics[width=0.75\textwidth]{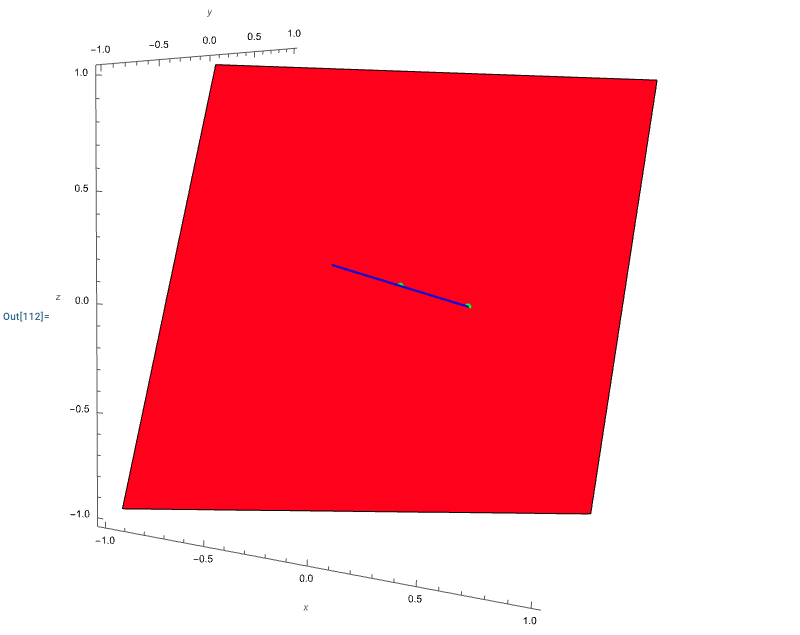}
\caption{The construction above. Red plane: $\cV_M$, space of value functions. Blue line: $\{\theta \cdot (\phi_1,\phi_2,\phi_3) \mid \theta \}$, space of linear predictors, which lies inside $\cV_M$. Two green points: a hypothesis value function and the other true value function.}\label{fig:lstdgeometry}
\end{figure}

\end{proof}

\subsection{Proof of Theorem \ref{thm:l2mulocal}}\label{app:l2mulocal}
This section proves the following result.

\Ltwomulocal*

\begin{proof}
The MRP has 5 states and $\gamma = 9/10$. Define $P$ which is rank-deficient but where $\bd \coloneqq (I - \gamma P)^{-1}$ is not (we find such an example via random search, see Appendix \ref{sec:P-defn}). We define $\mu$ later but for now we only need to know that it has support only on the first three states. We take the features 
\[
\Phi = \lambda_1 \bd_4 + \lambda_2 \bd_5 + \lambda_3 \psi
\]
where $d_4, d_5$, as in the proof of Theorem \ref{thm:sigmamin-lb}, are the 4th and 5th columns of $\bd$ respectively. The $\psi$ vector is a ``perturbation'' to be defined shortly (recall that if $\psi$ or $\lambda_3$ are $0$ then this gives $A=0$). We will take $\lambda_3$ to be small and $\lambda_1 = 1$. We will fix the 4th and 5th elements of $\psi$ to be 0, i.e. $\psi_4 = \psi_5 = 0$, and define $\psi_1,\psi_2,\psi_3$ later. Note that the features are $1$-dimensional, i.e. we have $\phi(s) \in \bR$ and $\Phi \in \bR^{5 \times 1}$. We define three instances $\cM_1, \cM_0, \cM_{-1}$, which are identical save for different reward functions indexed by $z \in \{-1,0,1\}$. Namely, the three possible reward functions are given by $r_4 = z \lambda_1 \cdot \eta$ and $r_5 = z \lambda_2 \cdot \eta$, where $\eta = 1/304$ is a normalization constant which we will see will ensure both that $|r_4|$ and $|r_5|$ are bounded by $1$ and that $\max_i \norm{\phi_i}_2 \leq 1$. For all three instances, the reward on supported states is fixed to be $r_1=r_2=r_3=0$. Using that $v_\cM = \bd \begin{pmatrix} r_1 \\ r_2 \\ r_3 \\ r_4 \\ r_5 \end{pmatrix} = r_4 d_4 + r_5 d_5$, we see that the misspecification error for the $\cM_1$ and $\cM_{-1}$ MRPs is
        \begin{equation}\label{eq:lambda-3-bound}
\norm{ \Pi_\mu v_\cM - v_\cM}_\mu = \inf_\theta\norm{(\theta \lambda_1 - r_4 )\bd_4 + (\theta \lambda_2 - r_5)\bd_5 + \theta \lambda_3 \psi}_\mu \leq |\lambda_3|  \norm{\psi}_\mu
\end{equation}
where in the inequality we have picked $\theta =  r_4 / \lambda_1 = r_5/\lambda_2 =  z \eta \in \{ \pm 1/304\}$ and used $|\theta| \leq 1$. Using that $d = I + \gamma P d$ we can check that $D(I - \gamma P) \Phi = \lambda_3 D(I- \gamma P)\psi$, and in particular
\[
A = \Phi^\top D (I - \gamma P) \Phi = \lambda_3 \Phi^\top D (I - \gamma P) \psi \implies \lambda_3 = \frac{\sigmamin{\Sigma^{-1}A}}{\Sigma^{-1}\Phi^\top D ( I - \gamma P) \psi } = \frac{\sigmamin{\Sigma^{-1/2}A\Sigma^{-1/2}}}{\Sigma^{-1}\Phi^\top D ( I - \gamma P) \psi }
\]
where in the above we have used at several points that the feature dimension is $1$.
Since the MRP corresponding to $z=0$ is realizable, the learner is forced to output $\theta = 0$ (else their approximation error is $\infty$). Let us calculate this estimator's approximation ratio on the other two MRPs:
\begin{align*}
    \alpha_\mu &\geq \frac{\norm{v_\cM}_\mu}{\norm{\Pi_\mu v_\cM - v_\cM}_\mu} \\
    &\geq  \frac{\norm{v_\cM}_\mu}{\lambda_3  \norm{\psi}_\mu} \tag{by Equation \eqref{eq:lambda-3-bound}}\\
    &= \frac{\norm{v_\cM}_\mu |\Sigma^{-1}\Phi^\top D(I - \gamma P)\psi|}{\sigmamin{\Sigma^{-1/2}A\Sigma^{-1/2}} \norm{\psi}_\mu} \tag{$\lambda_3$ equality from above}\\
     &= \frac{\norm{v_\cM}_\mu \norm{\Phi}_\mu |\Sigma^{-1}\Phi^\top D(I - \gamma P)\psi|}{\sigmamin{\Sigma^{-1/2}A\Sigma^{-1/2}} \norm{\Phi}_\mu \norm{\psi}_\mu} \\
    &= \frac{\norm{v_\cM}_\mu}{\sigmamin{\Sigma^{-1/2}A\Sigma^{-1/2}} \norm{\Phi}_\mu } \frac{\norm{\Phi\Sigma^{-1}\Phi^\top D(I - \gamma P)\psi}_\mu }{\norm{\psi}_\mu} \\
    &= \frac{\norm{v_\cM}_\mu}{\sigmamin{\Sigma^{-1/2}A\Sigma^{-1/2}} \norm{\Phi}_\mu } \frac{\norm{\Pi_\mu(I - \gamma P)\psi}_\mu}{\norm{\psi}_\mu}.
\end{align*}
Pick $\norm{\psi}_2 = 1$ such that $\norm{\Pi_\mu (I - \gamma P) \psi}_\mu = \norm{\Pi_\mu (I - \gamma P)}_\mu$. This exists by the reasoning of Appendix \ref{app:fixed-point}. By this property and $\norm{\psi}_\mu \leq \norm{\psi}_2$ we have: 
\[
\alpha_\mu \geq \frac{ \norm{v_\cM}_\mu }{\norm{\Phi}_\mu} \frac{\norm{\Pi_\mu(I - \gamma P)}_\mu}{\sigmamin{\Sigma^{-1/2}A\Sigma^{-1/2}}}.
\]
Note that $\frac{\norm{v_\cM}_\mu}{\norm{\Phi}_\mu}$ can be lower-bounded as follows:
\[
\frac{\norm{v_\cM}_\mu}{\norm{\Phi}_\mu} = \frac{\norm{\Phi - \lambda_3 \psi}_\mu}{\norm{\Phi}_\mu} \geq \frac{\norm{\Phi}_\mu - \lambda_3 \norm{\psi}_\mu}{\norm{\Phi}_\mu} = 1 - \lambda_3 \frac{\norm{\psi}_\mu}{\norm{\Phi}_\mu} =  1 - \frac{\sigmamin{\Sigma^{-1/2}A\Sigma^{-1/2}}\norm{\psi}_\mu}{\norm{\Pi_\mu(I-\gamma P)\psi}_\mu} = 1 - \frac{\sigmamin{\Sigma^{-1/2}A\Sigma^{-1/2}}}{\norm{\Pi_\mu(I-\gamma P)}_\mu} .
\]
Plugging this in, we get:
\[
 \alpha_\mu \geq \left(1 - \frac{\sigmamin{\Sigma^{-1/2}A\Sigma^{-1/2}}}{\norm{\Pi_\mu(I-\gamma P)}_\mu}\right) \frac{\norm{\Pi_\mu(I - \gamma P)}_\mu}{\sigmamin{\Sigma^{-1/2}A\Sigma^{-1/2}}} = \frac{\norm{\Pi_\mu(I - \gamma P)}_\mu}{\sigmamin{\Sigma^{-1/2}A\Sigma^{-1/2}}} - 1
 \]
The only thing left to show is to pick the values of $(\lambda_1,\lambda_2,\lambda_3)$ and $\mu$ to show that we can set the value of $\frac{\norm{\Pi_\mu P}_\mu}{\sigmamin{\Sigma^{-1/2}A\Sigma^{-1/2}}} = x$. This follows by the reasoning of Appendix \ref{sec:mu-defn}.

\end{proof}

\subsubsection{Defining $P$}\label{sec:P-defn}
We denote $P = \begin{pmatrix}
| & | & | & | & | \\
p_{1} & p_{2} & p_{3} & p_{4} & p_{5} \\
| & | & | & | & | \\
\end{pmatrix}$ and likewise $\bd = \begin{pmatrix}
| & | & | & | & | \\
d_{1} & d_{2} & d_{3} & d_{4} & d_{5} \\
| & | & | & | & | \\
\end{pmatrix}$ where $\bd = (I - \gamma P)^{-1}$. We also call $P_{\neg \mu} = \begin{pmatrix}
| & | \\
 p_{4} & p_{5} \\
 | & | \\
\end{pmatrix}$ and likewise for $d_{\neg \mu}$. Recall that $\gamma = 9/10$. We take any $P$ matrix such that $\rank(P_{\neg \mu}) = 1$, $\rank(d_{\neg \mu}) = 2$, and $P_{1,2}=P_{1,3} = P_{2,3} = P_{3,2} = 0$. A random search finds the following example:

$$
\bP = \begin{pmatrix}
0.384931 & 0 & 0 & 0.393873 & 0.221196\\
0.0864944 & 0.784211 & 0 & 0.0827968 & 0.046498\\
0.575606 & 0.35247 & 0 & 0.0460586 & 0.0258661\\
0.346009 & 0.227495 & 0.00896672 & 0.267374 & 0.150155\\
0.492524 & 0.0488124 & 0.364725 & 0.0601558 & 0.033783\\
\end{pmatrix},
$$
and of course the discounted occupancy matrix is 
$$
\mathbbm{d} \coloneqq (I - \gamma \bP)^{-1} = \begin{pmatrix}
3.9985 & 2.16139 & 0.421258 & 2.18933 & 1.22951\\
2.21064 & 4.98011 & 0.308174 & 1.60162 & 0.899458\\
2.96132 & 2.86469 & 1.34819 & 1.80957 & 1.01624\\
2.81682 & 2.67477 & 0.392056 & 2.99562 & 1.12073\\
3.08799 & 2.33295 & 0.684865 & 1.85336 & 2.04083\\
\end{pmatrix}.
$$

The condition $P_{1,2}=P_{1,3} = P_{2,3} = P_{3,2} = 0$ is chosen to enable more control over the magnitude of the parameter $x = \norm{\Pi_\mu P}_\mu / \sigmamin{\Sigma^{-1/2}A\Sigma^{-1/2}}$. The two columns of $P_{\neg \mu}$ are scalar multiples of each other, which verifies that $P_{\neg \mu}$ is rank 1. Similarly, it is easy to see that $d_{\neg \mu}$ is rank 2 since the columns cannot be expressed as scalar multiples of each other.

\subsubsection{Defining $\psi$ and $\Phi$}\label{app:fixed-point}

We start with a lemma.
\begin{lemma}\label{lemma:rank-deficient}
Let $P$, $\mu$, $\psi$ be arbitrary, and $\Phi = \lambda_1 d_4 + \lambda _2 d_5 + \lambda_3 \psi$ for some $(\lambda_1,\lambda_2,\lambda_3)$.  Assume that $P$ is such that $P_{\neg \mu}$ is rank-deficient (with rank $1$). Then there exists a vector $(\lambda_1, \lambda_2,\lambda_3)$ such that $\norm{\Pi_\mu P}_\mu < \infty$. Furthermore we can find such a vector such that $\lambda_3 \neq 0$, and one of $\lambda_1$ or $\lambda_2$ do not equal $0$. 
\end{lemma}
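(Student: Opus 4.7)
The plan is to reformulate $\norm{\Pi_\mu P}_\mu < \infty$ as a linear constraint on $(\lambda_1,\lambda_2,\lambda_3)$ via Lemma~\ref{lemma:pipmu}, collapse it to a single scalar equation using the rank-1 structure of $P_{\neg\mu}$, and then show that the coefficient of $\lambda_1$ in that equation is strictly positive, which leaves enough room in the resulting 2-dimensional kernel to simultaneously avoid the subspace $\{\lambda_3 = 0\}$ and the line $\{(\lambda_1,\lambda_2) = (0,0)\}$.

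First I would apply Lemma~\ref{lemma:pipmu} to rewrite the condition as $\Phi_\mu^\top D_\mu P_{\mu,\neg\mu} = 0^\top$, where $P_{\mu,\neg\mu}$ is the $|\supp(\mu)| \times 2$ block of $P$ with rows in $\supp(\mu)$ and columns indexed by the two unsupported states, and $D_\mu$, $\Phi_\mu$ are the analogous $\supp(\mu)$-restrictions of $D$ and $\Phi$. Substituting the ansatz $\Phi_\mu = \lambda_1 (\bd_4)_\mu + \lambda_2 (\bd_5)_\mu + \lambda_3 \psi_\mu$ turns this into the linear system $(\lambda_1,\lambda_2,\lambda_3)\, M = 0^\top$, where $M \in \bR^{3 \times 2}$ has rows $(\bd_4)_\mu^\top D_\mu P_{\mu,\neg\mu}$, $(\bd_5)_\mu^\top D_\mu P_{\mu,\neg\mu}$, and $\psi_\mu^\top D_\mu P_{\mu,\neg\mu}$. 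Since $P_{\neg\mu}$ has rank one, the columns of $P_{\mu,\neg\mu}$ are proportional (or both zero, in which case the condition is trivially satisfied); assume WLOG $p_4^{(\mu)} \neq 0$ and $p_5^{(\mu)} = c \cdot p_4^{(\mu)}$ for some $c \geq 0$. Both columns of $M$ are then scalar multiples of a single vector $(a_1,a_2,a_3)^\top$, with $a_k := (\bd_k)_\mu^\top D_\mu p_4^{(\mu)}$ for $k \in \{4,5\}$ and $a_3 := \psi_\mu^\top D_\mu p_4^{(\mu)}$, and the full constraint collapses to the scalar equation $a_1 \lambda_1 + a_2 \lambda_2 + a_3 \lambda_3 = 0$.

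The main obstacle is to show $a_1 \neq 0$, which will guarantee that the kernel of this equation contains vectors with $\lambda_3 \neq 0$. For this I rely on an elementary Neumann-series bound: since state $4 \notin \supp(\mu)$, for each $s \in \supp(\mu)$ one has $(\bd_4)_s = \sum_{t \geq 0} \gamma^t (P^t)_{s,4} \geq \gamma P_{s,4} = \gamma (p_4^{(\mu)})_s$ (isolating the $t=1$ term). Therefore
\[
a_1 = \sum_{s \in \supp(\mu)} \mu_s (\bd_4)_s (p_4^{(\mu)})_s \;\geq\; \gamma \sum_{s \in \supp(\mu)} \mu_s \big((p_4^{(\mu)})_s\big)^2 \;>\; 0,
\]
using $p_4^{(\mu)} \neq 0$ and positivity of $\mu_s$ on $\supp(\mu)$. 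With $a_1 > 0$ secured, I exhibit explicit solutions by cases. If $a_3 \neq 0$, take $(\lambda_1,\lambda_2,\lambda_3) = (-a_3/a_1,\, 0,\, 1)$, which satisfies the scalar equation and has $\lambda_3 \neq 0$ and $\lambda_1 \neq 0$. If instead $a_3 = 0$, take $(\lambda_1,\lambda_2,\lambda_3) = (-a_2/a_1,\, 1,\, 1)$, which again satisfies the equation and gives $\lambda_3 \neq 0$ and $\lambda_2 \neq 0$. Either choice produces a vector with all the required properties.
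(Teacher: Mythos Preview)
Your proof is correct and follows the same overall route as the paper: apply Lemma~\ref{lemma:pipmu} to turn $\norm{\Pi_\mu P}_\mu < \infty$ into a linear system in $(\lambda_1,\lambda_2,\lambda_3)$, then use the rank-$1$ structure of $P_{\neg\mu}$ to show that the solution set is large enough to accommodate the extra constraints $\lambda_3 \neq 0$ and $(\lambda_1,\lambda_2) \neq (0,0)$. The execution differs in two small but pleasant ways. First, you collapse the two equations to a single scalar equation $a_1\lambda_1 + a_2\lambda_2 + a_3\lambda_3 = 0$ and then hand over explicit solutions by cases, whereas the paper keeps the $2\times 3$ matrix $M(\psi)$, bounds its rank, and argues via rank--nullity on $M(\psi)$ and its $2\times 2$ submatrix $M'$. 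Second, your Neumann-series bound $(\bd_4)_s \geq \gamma P_{s,4}$ for $s \in \supp(\mu)$ is a clean, self-contained justification that $a_1 > 0$; the paper's corresponding step (``the vectors are all non-negative and have at least one non-zero element'') asserts the same positivity of $p_4^\top D \bd_4$ but does not spell out why the supports must overlap inside $\supp(\mu)$, which is exactly what your inequality supplies. One cosmetic point: your notation $a_k := (\bd_k)_\mu^\top D_\mu p_4^{(\mu)}$ for $k \in \{4,5\}$ clashes with the subsequent use of $a_1,a_2$; it would be cleaner to define $a_1,a_2,a_3$ directly.
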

\begin{proof}
We want $\vec{\lambda}$ to be such that the constraints of Lemma \ref{lemma:pipmu} hold, so that $\norm{\Pi_\mu P}_\mu < \infty$. We now show that this is equivalent to rank-deficiency of some matrix. The rank-deficiency of this new matrix will hold by the presupposition that $P_{\neg \mu}$ is rank-deficient. Plugging in the definition $\Phi$ and expanding the equation in Lemma \ref{lemma:pipmu} gives us that we need $\lambda_1$, $\lambda_2$, $\lambda_3$ such that:
\begin{align*}
&\lambda_1\left(\mu_1 d_{1,4}P_{1,4} + \mu_2 d_{2,4}P_{2,4} + \mu_3 d_{3,4}P_{3,4}\right) + \lambda_2 \left(\mu_1 d_{1,5}P_{1,4} + \mu_2 d_{2,5}P_{2,4} + \mu_3 d_{3,5}P_{3,4} \right)  + \lambda_3\left(\mu_1 \psi P_{1,4} + \mu_2 \psi P_{2,4} + \mu_3 \psi P_{3,4}\right) = 0 \\
&\lambda_1\left(\mu_1 d_{1,4}P_{1,5} + \mu_2 d_{2,4}P_{2,5} + \mu_3 d_{3,4}P_{3,5}\right) + \lambda_2 \left(\mu_1 d_{1,5}P_{1,5} + \mu_2 d_{2,5}P_{2,5} + \mu_3 d_{3,5}P_{3,5} \right) + \lambda_3\left(\mu_1 \psi P_{1,5} + \mu_2 \psi P_{2,5} + \mu_3 \psi P_{3,5}\right) = 0 
\end{align*}
which is equivalent to 
\begin{align*}
&\lambda_1 \langle d_{4}, p_{4} \rangle_\mu + \lambda_2 \langle d_{5}, p_{4} \rangle_\mu  + \lambda_3 \langle \psi, p_{4} \rangle= 0 \\
&\lambda_1 \langle d_{4},p_{5} \rangle_\mu + \lambda_2 \langle d_{4}, p_{5} \rangle_\mu + \lambda_3 \langle \psi, p_{5} \rangle = 0,
\end{align*}
or 
\begin{equation}\label{eq:M-matrix-kernel}
\begin{pmatrix}
\langle D d_4, p_{:,4} \rangle &  \langle D d_5, p_{:,4} \rangle & \langle D \psi, p_{:,4} \rangle \\
\langle D d_4, p_{:,5} \rangle & \langle D d_4, p_{:,5} \rangle & \langle D \psi, p_{:,5} \rangle
\end{pmatrix} \begin{pmatrix} \lambda_1 \\ \lambda_2 \\ \lambda_3 \end{pmatrix} = \begin{pmatrix} 0 \\ 0\end{pmatrix}\implies  \underbrace{\begin{pmatrix}
- p_{:,4} - \\
- p_{:,5} - \\
\end{pmatrix} D 
\begin{pmatrix}
| & | & | \\
d_4 & d_5 & \psi \\
| & | & | \\
\end{pmatrix}}_{\coloneqq M(\psi)}\begin{pmatrix} \lambda_1 \\ \lambda_2 \\ \lambda_3 \end{pmatrix} = \begin{pmatrix} 0 \\ 0\end{pmatrix}.
\end{equation} 

Note that a non-zero solution to this system of equations exists whenever $M(\psi)$ is rank-deficient, and furthermore that $\rank(M(\psi)) \leq \rank(P_{\neg \mu})$ since $P_{\neg \mu}$ is the first matrix in the definition of $M(\psi)$. Furthermore note that $M(\psi)$ cannot be trivial since that would imply that $p_i^\top D d_j$ for $i,j \in \{4,5\}$, which is not possible as the vectors are all non-negative and have at least one non-zero element. Since we have $\rank(P_{\neg \mu}) = 1$ by construction, then we have $\rank(M(\psi))=1$, in which case the solution space of the above equation (i.e., the kernel of $M(\psi)$) is $2$-dimensional by the rank-nullity theorem. Thus there exists a non-zero vector that satisfies Lemma \ref{lemma:pipmu} and thus $(\Pi_\mu P)_{\mu, \neg \mu} = 0$.

We show that furthermore we can find a vector in the kernel such that $\lambda_3 \neq 0$. Recall that the kernel is 2-dimensional and is a subspace of $\bR^3$, so the only case to rule out is that the kernel is exactly the plane $\{(x,y,0) \mid x, y \in \bR\}$ (i.e. the $x-y$ plane). We can rule out this case by examining the sub-matrix $M' = \begin{pmatrix}
- p_{:,4} - \\
- p_{:,5} - \\
\end{pmatrix} D 
\begin{pmatrix}
| & | \\
d_4 & d_5 \\
| & |  \\
\end{pmatrix}$. If the kernel was the $x-y$ plane then this matrix would also have a $2$-dimensional kernel. However this is not possible since, as we shall show next, the rank of this matrix is exactly $1$. First note that the rank of $M'$ is also at most $1$ (due to rank-deficiency of $P_{\neg \mu}$, which still appears as the first matrix in $M'$), and furthermore that it has to be exactly one since the matrix cannot be trivial (all of its elements are positive). So the kernel of $M'$ is also exactly $1$ by rank-nullity. Thus we can find $\lambda_3 \neq 0$ in the kernel. The fact that one of $\lambda_1,\lambda_2 \neq 0$ is because the kernel is 2-dimensional, and thus it must have a dimension along which $\lambda_1$ or $\lambda_2 \neq 0$ (i.e. it cannot be the line $\{(0,0,t \cdot \lambda_3)\}_t$).

\end{proof}

Let $\Phi(\psi, \lambda) = \lambda_1 d_4 + \lambda_2 d_5 + \lambda_3 \psi$. We define a function $h: \psi \mapsto \lambda$ that maps every $\psi$ to a choice of $\lambda$ satisfying Lemma \ref{lemma:rank-deficient}. Write $\Phi(\psi)$ for the function $\Phi(\psi,h(\psi))$.
\begin{lemma}\label{lemma:phi-psi-continuous}
 For all $c > 0$, there exists a continuous function $h_c$ that maps every $\psi$ onto $\lambda$ satisfying Lemma \ref{lemma:rank-deficient}, and furthermore satisfies  $\lambda_1 = 1$ and $\lambda_3 = c$. Furthermore, the function
    $\Phi(\psi)$ is continuous as a function of $\psi$, i.e. as $\psi_n \rightarrow \psi$ we have $\Phi(\psi_n, h(\psi_n)) \rightarrow \Phi(\psi,h(\psi))$.
    
\end{lemma}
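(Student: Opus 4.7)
The plan is to construct $h_c$ explicitly by solving the constraint in closed form. Recall from the proof of Lemma \ref{lemma:rank-deficient} that the admissible $\lambda$ are exactly the vectors in the kernel of the $2 \times 3$ matrix $M(\psi)$ defined in Equation \eqref{eq:M-matrix-kernel}. Because $P_{\neg\mu}$ has rank $1$, its two columns $p_{:,4}$ and $p_{:,5}$ are proportional, so the two rows of $M(\psi)$ are proportional as well. Hence membership in $\Ker(M(\psi))$ is equivalent to the single scalar equation
\[
\langle D d_4, p_{:,4}\rangle\,\lambda_1 + \langle D d_5, p_{:,4}\rangle\,\lambda_2 + \langle D \psi, p_{:,4}\rangle\,\lambda_3 \;=\; 0.
\]

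Imposing $\lambda_1 = 1$ and $\lambda_3 = c$, I would simply solve the remaining equation for $\lambda_2$, obtaining the explicit formula
\[
\lambda_2(\psi) \;=\; -\frac{\langle D d_4, p_{:,4}\rangle + c\,\langle D \psi, p_{:,4}\rangle}{\langle D d_5, p_{:,4}\rangle}.
\]
The right-hand side is manifestly a continuous (indeed affine) function of $\psi$, provided the denominator does not vanish. I would verify this nonvanishing by noting that $D$ has strictly positive diagonal on $\supp(\mu) = \{1,2,3\}$, that $d_5$ has strictly positive entries on these coordinates (as a column of the discounted occupancy matrix of a non-trivial MRP), and that $p_{:,4}$ has at least one strictly positive entry in those rows by the explicit form of $P$ in Appendix \ref{sec:P-defn}. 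Setting $h_c(\psi) = (1, \lambda_2(\psi), c)$ then gives the desired continuous selection.

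Continuity of $\Phi(\psi)$ is then immediate: writing $\Phi(\psi) = d_4 + \lambda_2(\psi)\, d_5 + c\,\psi$, the dependence on $\psi$ is through $\psi$ itself and through the continuous scalar $\lambda_2(\psi)$, so the composition is continuous. The main (and only real) obstacle is checking the denominator is strictly positive, which is handled by the explicit choice of $P$; everything else is just solving a single linear equation.
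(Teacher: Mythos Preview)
Your proposal is correct and follows essentially the same approach as the paper: both exploit the rank-$1$ structure of $M(\psi)$ to reduce the kernel constraint to a single scalar equation, fix $\lambda_1=1$ and $\lambda_3=c$, and solve explicitly for $\lambda_2$ as a continuous function of $\psi$. Your presentation is arguably cleaner, since you bypass the paper's $ab^\top$ factorization and the $\sign(\cdot)\|M_3(\psi)\|$ term by directly solving the first row of $M(\psi)\lambda=0$, which makes the affine dependence on $\psi$ (and hence continuity) immediate.
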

\begin{proof}
Note that the conditions of Lemma \ref{lemma:rank-deficient} are satisfied whenever $\lambda$ lies inside the kernel of $M(\psi)$ and $\lambda_3 \neq 0$. For $c > 0$, we construct some function $f_c(\psi,\lambda)$ who's zeros corresponds to these conditions for $\lambda$ being satisfied, and explicitly define the function $h_c(\psi)$ along this level set. The function is $f_c(\psi, \lambda) = M(\psi) \lambda + \lambda_3 \begin{pmatrix} 0 \\ 0 \\ c \end{pmatrix}$,  where $M(\psi) = \begin{pmatrix}
- p_{:,4} - \\
- p_{:,5} - \\
\end{pmatrix} D 
\begin{pmatrix}
| & | & | \\
d_4 & d_5 & \psi \\
| & | & | \\
\end{pmatrix}$ is the matrix from Lemma \ref{lemma:rank-deficient}. Note that $f_c: \bR^{5 + 3} \rightarrow \bR^{3}$ and that $f_c(\psi,\lambda) = 0 \iff \left(M(\psi) \lambda = \begin{pmatrix} 0 \\ 0 \end{pmatrix} \text{ and } \lambda_3 = c\right)$. 
Let $\lambda$ be a solution to $f_c(\psi,\lambda) = 0$ (which exists since we can find $\lambda$ in the kernel for which $\lambda_3 \neq 0$, thus re-scaling can achieve $\lambda_3 = c$). Since $\lambda_3 =c$, then we must have 
\[
    \bar{M} \begin{pmatrix}
        \lambda_1 \\ \lambda_2
    \end{pmatrix} = - c M_3(\psi)
\]
where $\bar{M} = \begin{pmatrix} | & | \\ M_1(\psi) &M_2(\psi) \\ | & | \end{pmatrix}$ is the submatrix corresponds to the first two columns of $M(\psi)$ and $M_3(\psi)$ is the third column of $M(\psi)$. Let $\bar{\lambda} = \begin{pmatrix} \lambda_1 \\ \lambda_2 \end{pmatrix}$. Note that $\bar{M}$ is independent of $\psi$. Again, the rank of $\bar{M}$ is 1 so the solution space to this equation is 1-dimensional. %
Since $\bar{M}$ is rank $1$, it is equivalent to $ab^\top$ for two vectors $a,b$. Without loss of generality we can take $a = \begin{pmatrix} 1 \\ \zeta\end{pmatrix}$ and $b = \begin{pmatrix} \langle p_4, d_4\rangle_\mu \\ \langle p_4, d_5\rangle_\mu \end{pmatrix}$, where $ 1 \leq \zeta \leq 2$ is the scalar multiple s.t. $p_4 = \zeta p_5$ (which exists since $P_{\neg \mu}$ is rank 1). This gives
\[
 (a b^\top) \bar{\lambda} = a (b^\top \bar{\lambda}) = - c M_3(\psi).
\]
Taking norms, this gives us the relationship 
\[
c \frac{\norm{M_3(\psi)}}{\norm{a}} = | b^\top \bar{\lambda} |
\]
or equivalently
\[
    b^\top \bar{\lambda} = c \cdot \sign(\langle a, M_3(\psi) \rangle) \frac{\norm{M_3(\psi)}}{\norm{a}}.
\]
We can take the solution which is $\lambda_1 = 1$ and 
\[
\lambda_2 = \left(c \cdot \sign(\langle a, M_3(\psi)\rangle) \frac{\norm{M_3(\psi)}}{\norm{a}} - b_1\right)/b_2.
\]

The choice of $h(\psi) = (\lambda_1, \lambda_2, \lambda_3) = (1, \frac{c \cdot \sign(\langle a, M_3(\psi)\rangle) \frac{\norm{M_3(\psi)}}{\norm{a}} - b_1}{b_2}, c)$ is continuous as a function of $\psi$, since $\psi$ only appears via $\norm{M_3(\psi)}$, and therefore $\Phi(\psi, \lambda) = M(\psi) h(\psi)$ is continuous. 
\end{proof}

\paragraph{Normalizing the features} 
While we are here, let us check the norm bound of $\lambda_1,\lambda_2,\lambda_3$ and $\norm{\phi_i}$. We have claimed that $\eta = 1/304$ is a sufficient normalization constant to ensure that $|\lambda_i| \eta \leq 1$ and $\norm{\phi_i}_2 \eta \leq 1$. 

Evidently we have $|\lambda_1| \leq 1$ and $|\lambda_3| \leq 1$ if $c \leq 1$ ($c$ has not been chosen yet but will be shown to satisfy this). For $\lambda_2$, we have
\[
|\lambda_2| \leq \left( c \frac{\norm{M_3(\psi)}}{\norm{a}} + |b_1| \right) / \min_{i \in {1,2,3}} p_4^i d_5^i.
\]
To bound this, we can use the bounds that $\norm{a} = \sqrt{1 + \zeta^2} \geq \sqrt{2}$, $|b_1| = \langle p_4,d_4\rangle_\mu \leq \langle p_4,d_4 \rangle = \norm{p_4}\norm{d_4} \leq 2$ and $\min_i p_4^i d_5^i \geq 0.08$. We also have that $M_3(\psi) = \begin{pmatrix} \langle p_4, \psi \rangle_\mu \\ \langle p_4, \psi \rangle_\mu \end{pmatrix}$ so we have 
\[
\norm{M_3(\psi)}^2 \leq \langle p_4,\psi \rangle ^2 + \langle p_5, \psi \rangle^2 \leq \norm{p_4}^2 + \norm{p_5}^2 \leq 50.
\]
Combining everything gives us
\[
|\lambda_2| \leq \left( \sqrt{25} + 2 \right) / 0.08 \leq 100.
\]

Let us also check the feature magnitude $\norm{\phi_i}$ so that we can normalize the features. We have:
\[
    \norm{\phi_i} = |\phi_i| \leq |\lambda_1| |d_4^i| + |\lambda_2| |d_5^i| + |\lambda_3| |\psi_i| \leq 1\cdot 3 + 100 \cdot 3 + 1 \cdot 1 = 304.
\]
So let us rescale subsequent and prior choices of $(\lambda_1,\lambda_2,\lambda_3)$ by $1/304$ so that we are ensured to have $\norm{\phi_i} \leq 1$ for all $i$.

We will need the following lemma about continuity of eigenvectors.
\begin{lemma}[\cite{ortega1990numerical} Lemma 3.1.3]\label{lem:cont-eigenvector}
    Let $\lambda$ be a simple eigenvalue of some matrix $A \in \bR^{n \times n}$, with associated eigenvector $x \neq 0$. Then, for $E \in \bR^{n \times n}$, $A+E$ has eigenvalue $\lambda(E)$ and $x(E)$ where 
    \[
    \lambda(E) \rightarrow \lambda \,\text{ and }\, x(E) \rightarrow x \text{ as } E \rightarrow 0.
    \]
\end{lemma}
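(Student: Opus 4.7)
The plan is to handle the two continuity claims separately, treating the eigenvalue first and then using it to pin down the eigenvector, with the implicit function theorem doing the main work in both steps.

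First, I would prove continuity of $\lambda(E)$ by viewing it as a simple root of the characteristic polynomial $p_E(z) = \det(zI - (A+E))$, a monic polynomial of degree $n$ in $z$ whose coefficients are polynomial, and in particular continuous, functions of the entries of $E$. By the simplicity assumption, $p_0(\lambda) = 0$ and $p_0'(\lambda) \neq 0$. The classical implicit function theorem applied to the scalar equation $p_E(z) = 0$ at the point $(E,z) = (0,\lambda)$ then yields a unique continuous function $E \mapsto \lambda(E)$ defined on a neighborhood of $0$ with $\lambda(0) = \lambda$ and $p_E(\lambda(E)) = 0$ throughout that neighborhood.

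Next, for the eigenvector I would apply the implicit function theorem to
\[
G(E, y) \;=\; \begin{pmatrix} (A + E - \lambda(E) I)\, y \\ x^\top y - \|x\|_2^2 \end{pmatrix} \;\in\; \bR^{n+1},
\]
so that $G(0,x) = 0$. The Jacobian of $G$ with respect to $y$ at $(0,x)$ is the $(n+1)\times n$ matrix $J = \binom{A - \lambda I}{x^\top}$, which I would argue has full column rank using simplicity: if $Jv = 0$ then $(A-\lambda I)v = 0$ forces $v \in \ker(A-\lambda I) = \mathrm{span}(x)$, and then $x^\top v = 0$ gives $v = 0$. Since $A - \lambda I$ has rank $n-1$, one can pick $n-1$ of its rows that are linearly independent and, together with the bottom row $x^\top$, assemble an invertible $n\times n$ submatrix of $J$. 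The implicit function theorem applied to the corresponding square subsystem produces a continuous $x(E)$ with $x(0) = x$ satisfying it. The one dropped row is automatically satisfied because $\lambda(E)$ being an eigenvalue of $A+E$ forces $A+E-\lambda(E)I$ to have rank $\leq n-1$, and by continuity for $E$ small the dropped row is a continuously varying linear combination of the retained ones.

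The main obstacle is the second step: one has to justify carefully that dropping an appropriate row of the singular block $A+E-\lambda(E)I$ yields a square invertible Jacobian and that the dropped equation is implied by the retained ones. The simplicity hypothesis is essential here---without it, $\ker(A-\lambda I)$ has dimension greater than one and the Jacobian $J$ develops a nontrivial nullspace, which is precisely why Jordan blocks give rise to discontinuous eigenvectors. A more elegant alternative would be Kato's spectral projector: pick a contour $\Gamma$ around $\lambda$ enclosing no other eigenvalue of $A$, and define
\[
P(E) \;=\; \frac{1}{2\pi i} \oint_\Gamma (zI - (A+E))^{-1}\, dz.
\]
For $E$ near $0$, $\Gamma$ still isolates $\lambda(E)$ from the rest of the spectrum of $A+E$, so $P(E)$ is the rank-one spectral projector onto the eigenspace of $\lambda(E)$; it depends analytically on $E$, so setting $x(E) = P(E)x$ (which is nonzero for small $E$ since $P(0)x = x$) gives a continuous eigenvector with $x(0) = x$.
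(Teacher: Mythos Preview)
The paper does not prove this lemma; it is quoted verbatim as Lemma~3.1.3 of \cite{ortega1990numerical} and used as a black box inside the proof of Lemma~\ref{lemma:psi-fixed-point}. So there is no ``paper's own proof'' to compare against.

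Your argument is sound. The implicit-function-theorem route is exactly the classical one (and is essentially what Ortega does): the characteristic polynomial handles $\lambda(E)$, and for $x(E)$ the key point---that the $n\times n$ submatrix formed by $n-1$ independent rows of $A-\lambda I$ together with $x^\top$ is invertible---follows because the row space of $A-\lambda I$ equals $\ker(A-\lambda I)^\perp = (\mathrm{span}\,x)^\perp$, so $x^\top$ cannot lie in it. Your justification that the dropped equation is redundant is also correct: for small $E$ the retained $n-1$ rows of $A+E-\lambda(E)I$ stay independent by continuity, and since $\lambda(E)$ is an eigenvalue the full block has rank $\le n-1$, forcing the dropped row into their span. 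The Kato projector alternative you sketch is cleaner and avoids the row-selection bookkeeping entirely; either approach would be acceptable here, and both rely on simplicity in the same essential way (rank-one eigenprojector versus one-dimensional kernel).
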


The main result of this section is:
\begin{lemma}\label{lemma:psi-fixed-point}
Let $P$ be such that $P_{\neg \mu}$ is rank-deficient, and $\Phi_c(\psi) \coloneqq \Phi(\psi, h_c(\psi))$ be as above. %
Denote $\Pi_\mu^{\psi}$ the projection matrix for features $\Phi_c(\psi)$. Then, for all $\gamma$,  and $\mu$ such that $\supp(\mu) = \{1,2,3\}$, there exists a $\psi$ such that $\psi_4=\psi_5=0$, $\norm{\psi}_2=1$, and
\[
\norm{\Pi_\mu^{\psi}(I - \gamma P) \psi}_\mu = \norm{ \Pi_\mu^{\psi}(I - \gamma P)}_\mu,
\]
i.e. $\psi$ realizes the $L_2(\mu)$ operator norm of $\Pi^\psi_\mu (I - \gamma P)$.
\end{lemma}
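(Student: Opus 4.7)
The plan is to realize $\psi$ as a fixed point of a self-consistency map and invoke Brouwer's fixed point theorem. The key simplification provided by the rank-one structure of $\Pi^{\psi}_\mu$ is that the top right singular vector of $\Pi^{\psi}_\mu(I-\gamma P)$ in $L_2(\mu)$ has an explicit closed form as a function of $\psi$, so continuity is direct and Lemma~\ref{lem:cont-eigenvector} is not needed in its full generality.

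I would first factor $\Pi^{\psi}_\mu(I-\gamma P) = u(\psi)\,w(\psi)^\top$ with $u(\psi)\propto \Phi_c(\psi)$ and $w(\psi) = (I-\gamma P)^\top D\Phi_c(\psi)$. The construction of $\Phi_c$ via Lemma~\ref{lemma:phi-psi-continuous} enforces the condition of Lemma~\ref{lemma:pipmu}, i.e.\ $w_{\neg\mu}(\psi)=0$, so that $w(\psi)^\top v = w_\mu(\psi)^\top v_\mu$. A short calculation then shows that the $L_2(\mu)$ operator norm, restricted to $v$ with $v_{\neg\mu}=0$, is attained (uniquely up to sign) at
\[
v^\star(\psi) \;=\; \bigl(D_\mu^{-1}(I-\gamma P_{\mu,\mu})^\top D_\mu\,\Phi_\mu(\psi),\; 0,\; 0\bigr).
\]
Define the self-map $T(\psi) := v^\star(\psi)/\norm{v^\star(\psi)}_2$ on the compact convex ball $\bar B := \{\psi\in\bR^5 : \psi_4=\psi_5=0,\,\norm{\psi}_2\leq 1\}\cong B^3$, noting that $T(\bar B)\subseteq S^2\subset \bar B$.

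Continuity of $T$ reduces to continuity of $\Phi_c(\psi)$, which is affine in $\psi$ once one observes that the vector $M_3(\psi)$ from Lemma~\ref{lemma:phi-psi-continuous} is parallel to $a$ by $\mathrm{rank}(P_{\neg\mu})=1$ (so the sign factor in the definition of $\lambda_2(\psi)$ collapses and $\lambda_2(\psi)$ becomes linear in $\psi$), and to non-vanishing of $v^\star(\psi)$ on $\bar B$. The latter is equivalent to $\Phi_\mu(\psi)\neq 0$, since $D_\mu$ and $(I-\gamma P_{\mu,\mu})^\top$ are invertible; as $c\to 0$ the coefficient $\lambda_2(\psi)$ approaches a constant $\lambda_2^{(0)}$ for which $(d_4+\lambda_2^{(0)}d_5)_\mu\neq 0$ (a direct check on the fixed $P$ from Section~\ref{sec:P-defn}), so taking $c$ small enough ensures $\norm{c\psi_\mu}_2<\norm{(d_4+\lambda_2(\psi)d_5)_\mu}_2$ uniformly on $\bar B$, precluding any cancellation.

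Brouwer's fixed point theorem then yields $\psi^\star\in \bar B$ with $T(\psi^\star)=\psi^\star$. Since $T(\bar B)\subseteq S^2$, the fixed point automatically satisfies $\norm{\psi^\star}_2 = 1$ and $\psi^\star_4=\psi^\star_5=0$; and the identity $\psi^\star\propto D_\mu^{-1}w_\mu(\psi^\star)$ means exactly that $\psi^\star$ is a top right singular vector of $\Pi^{\psi^\star}_\mu(I-\gamma P)$ in $L_2(\mu)$ on $\{v:v_4=v_5=0\}$, i.e.\ it realizes the operator norm in the sense $\norm{\Pi^{\psi^\star}_\mu(I-\gamma P)\psi^\star}_\mu/\norm{\psi^\star}_\mu = \norm{\Pi^{\psi^\star}_\mu(I-\gamma P)}_\mu$. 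The main obstacle will be the quantitative choice of $c$ used to prevent the vanishing of $v^\star$; once this is in hand, the Brouwer argument is routine, since we are taking a fixed point on a ball (where Brouwer applies) rather than on a sphere.
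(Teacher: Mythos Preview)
Your approach is essentially the same Brouwer fixed-point scheme as the paper's, but your execution is cleaner on two counts and is correct.

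First, the paper defines the self-map $g$ on the sphere $\{\bar\psi\in\bR^3:\norm{\bar\psi}_2=1\}$ and then invokes Brouwer, which strictly speaking does not apply there; you instead work on the closed ball and observe that the image lies on the sphere, which is the right way to do this. Second, the paper argues continuity by expanding $\Pi_\mu^{\psi_n}$ around $\Pi_\mu^{\psi}$ and appealing to the general eigenvector continuity result (Lemma~\ref{lem:cont-eigenvector}); you bypass this entirely by exploiting that $\Pi_\mu^\psi(I-\gamma P)$ has rank one, so the top right singular direction in $L_2(\mu)$ is the explicit vector $D_\mu^{-1}(I-\gamma P_{\mu,\mu})^\top D_\mu\Phi_\mu(\psi)$. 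Your observation that $M_3(\psi)$ is always a scalar multiple of $a$ (because $p_4$ and $p_5$ are proportional) is exactly right: it turns $\sign(\langle a,M_3(\psi)\rangle)\norm{M_3(\psi)}/\norm{a}$ into the linear form $p_4^\top D\psi$, so $\lambda_2(\psi)$ is affine and $\Phi_c(\psi)$ is affine in $\psi$. This is sharper than what Lemma~\ref{lemma:phi-psi-continuous} states and makes continuity of $T$ immediate.

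The only place your argument is more restrictive than the paper's is the requirement that $c$ be small enough to keep $\Phi_\mu(\psi)\neq 0$ on the whole ball. The paper's proof does not impose this (it argues continuity locally via the Taylor expansion of $\Sigma_{\psi_n}^{-1}$, which only needs $\Sigma_\psi>0$ at the limit point), so as stated its lemma holds for every $c>0$. In the way the lemma is actually used (Section~\ref{sec:mu-defn} takes $c\to 0$ anyway), your restriction is harmless. Also note that both arguments, yours and the paper's, really produce a $\psi$ with $\norm{\Pi_\mu^\psi(I-\gamma P)\psi}_\mu/\norm{\psi}_\mu=\norm{\Pi_\mu^\psi(I-\gamma P)}_\mu$ and $\norm{\psi}_2=1$; the equality in the lemma statement should be read with this normalization, which is precisely what is used downstream via $\norm{\psi}_\mu\leq\norm{\psi}_2$.
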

\begin{proof}
Fix $\psi$ s.t. $\psi_4=\psi_5=0$ and an associated $h_c(\psi)$ given by Lemma \ref{lemma:phi-psi-continuous}.  Note that 
\begin{align*}
\norm{\Pi^{\psi}_\mu (I - \gamma P)}_\mu 	&= \max_{\norm{x}_\mu=1} \norm{\Pi^{\psi}_\mu (I - \gamma P) x}_\mu \\
&= \max_{\norm{x}_\mu=1} \norm{D^{1/2}\Pi_\mu^{\psi}(I - \gamma P) x}_2 \\
&= \max_{\norm{D^{1/2} x}_2 = 1} \norm{D^{1/2}\Pi_\mu^{\psi}(I - \gamma P) x}_2 \\
&= \max_{\norm{y}_2 = 1} \norm{D^{1/2}\Pi_\mu^{\psi}(I - \gamma P) D^{-\dagger/2} y}_2 \\
&= \norm{\underbrace{D^{1/2}\Pi_\mu^{\psi}(I - \gamma P) D^{-\dagger/2}}_{\coloneqq N(\psi)}}_2.
\end{align*}

Here the second-to-last line follows since $(\Pi_\mu^{\psi} P)_{\mu, \neg \mu} = 0 \iff (\Pi_\mu^{\psi} (I - \gamma P))_{\mu, \neg \mu} = 0$, and thus we can assume that $x_{\neg \mu} = 0$ and use the substitution $D^{1/2} x = y \iff x = D^{-\dagger/2} y$. 

The $L_2$ operator norm at the end of the last display has as value $\sigma_{\max}(N(\psi)) = \sqrt{\lambda_{\max}((N(\psi))^\top N(\psi))}$. We need to know which vector maximizes this operator norm, and since there are multiple such vectors we make the canonical choice that it be the (normalized) eigenvector corresponding to this max eigenvalue, denoted $v_{\max}$. 

Let $\bar{\psi}$ denote the first $3$ elements of $\psi$, and recall that $\psi = \bar{\psi} \oplus (0,0)$ since $\psi_4=\psi_5=0$. We can define the function $g : \{\bar{\psi} \in \bR^3: \norm{\bar{\psi}}_2 = 1\} \mapsto \{\bar{\psi} \in \bR^3: \norm{\bar{\psi}}_2 = 1\}$ s.t.
\[
g(\bar{\psi} \oplus (0,0)) = g(\psi) = v_{\max}((N(\psi))^\top N(\psi)),
\]
and to conclude the proof we have to show that this function has a fixed point $\bar{\psi}$. Since $g$ maps to and from the unit ball in $\bR^3$, by Brouwer's fixed point theorem it only remains to show that this function is continuous as a function of $\bar{\psi}$. 

Continuity will follow from the continuity of eigenvectors given by Lemma \ref{lem:cont-eigenvector}. Note that the Lemma applies, since the eigenvalue $\lambda_{\max}$ is indeed simple as $N(\psi)$ is rank 1, and thus has only one non-zero eigenvalue (and in particular, $\lambda_{\max}$ appears with multiplicity one). To apply this lemma, we re-write $N(\psi_n)$ in the form of $N(\psi)+E$ given in that lemma, where $E$ is a matrix which goes to $0$. The only term in $N(\psi_n) = D^{1/2} \Pi_\mu^{\psi_n} (I - \gamma P) D^{-\dagger/2}$ which changes as $\psi_n \rightarrow \psi$ is $\Pi_\mu^{\psi_n}$. Let $\psi_n \rightarrow \psi$, and note that by continuity of $\Phi(\psi)$ (Lemma \ref{lemma:phi-psi-continuous}) we can write $\Phi(\psi) = \Phi(\psi_n) + \varepsilon_n$ where $\varepsilon_n \rightarrow 0$ as $\psi_n \rightarrow \psi$. Then, we have that:
\begin{align}
\Pi_\mu^{\psi_n} &= \Phi(\psi_n) \Sigma^{-1}_{\psi_n} \Phi(\psi_n)^\top D \nonumber \\
&= (\Phi(\psi) + \varepsilon_n)\Sigma^{-1}_{\psi_n} (\Phi(\psi) + \varepsilon_n)^\top D \nonumber \\
&= \Phi(\psi)\Sigma^{-1}_{\psi_n} \Phi(\psi)^\top D + \varepsilon_n\Sigma^{-1}_{\psi_n}\Phi(\psi)^\top D + \Phi(\psi)\Sigma^{-1}_{\psi_n}\varepsilon_n^\top D+ \varepsilon_n\Sigma^{-1}_{\psi_n} \varepsilon_n^\top D \label{eq:gobble}
\end{align}
Note firstly that $\Sigma^{-1}_\psi$ and $\Sigma^{-1}_{\psi_n}$ are bounded since $\mu$ has full support over the first three states and $\psi, \psi_n$ need to be non-zero on one of their first three components by the condition $\norm{\bar{\psi}}_2 = 1$. Thus, for fixed $\mu$, as $ n \rightarrow 0$, the 2nd, 3rd, and 4th terms vanish due to the fact that $\varepsilon_n \rightarrow 0$. It remains to deal with the first term ($\Phi(\psi)\Sigma^{-1}_{\psi_n} \Phi(\psi)^\top D$). Note that $\Sigma_{\psi_n}$ is a scalar since $d=1$, so $\Sigma_{\psi_n}^{-1}= \frac{1}{\Sigma_{\psi} + \delta_n}$, where $\delta_n = \varepsilon_n^\top D \Phi(\psi) + \Phi(\psi)^\top D\varepsilon_n + \varepsilon_n^\top D \varepsilon_n$. For small enough $\varepsilon_n$ (such that $\delta_n/\Sigma_\psi < 1$) this can be written as a Taylor series $\Sigma_{\psi_n}^{-1}= \frac{1}{\Sigma_{\psi} + \delta_n} = \frac{1}{\Sigma_{\psi}} - \frac{\delta_n}{\Sigma^2_{\psi}} + \frac{\delta_n^2}{\Sigma^3_{\psi}} - ...$. 
Plugging this into Equation \eqref{eq:gobble} gives that 
$\Pi_\mu^{\psi_n} = \Pi_\mu^{\psi} + E$, where $E = \Phi(\psi)(-\frac{\delta_n}{\Sigma^2_{\psi}} + \frac{\delta_n^2}{\Sigma^3_{\psi}} + ...)\Phi(\psi)^\top D + \varepsilon_n\Sigma^{-1}_{\psi_n}\Phi(\psi)^\top + \Phi(\psi)\Sigma^{-1}_{\psi_n}\varepsilon_n^\top + \varepsilon_n\Sigma^{-1}_{\psi_n} \varepsilon_n^\top D$. Note that $E \rightarrow 0$ as $\varepsilon_n \rightarrow 0$ (or equivalently as $\psi_n \rightarrow \psi$).
Thus we can apply Lemma \ref{lem:cont-eigenvector} to obtain that $g(\psi)$ is indeed continuous.

\end{proof}

To conclude this section, we note that the value of $c$ given in $h_c(\psi)$ (Lemma \ref{lemma:phi-psi-continuous}) has not been chosen yet. This will be chosen in the following section. 

\subsubsection{Defining $\mu$ and $\lambda$}\label{sec:mu-defn}

We pick $\mu$ and $\lambda_3$ so that we can satisfy $x = \frac{\norm{\Pi_\mu P}_\mu}{ \sigmamin{\Sigma^{-1/2} A \Sigma^{-1/2}} }$. Towards this, let's simplify the expression $\norm{\Pi_\mu P}_\mu$. Recall that for each $\psi$ we pick $h(\psi)$ s.t. Lemma \ref{lemma:rank-deficient} holds, and in particular $(\Pi_\mu P)_{\mu, \neg \mu} = 0$. Note that in the $1-$dimensional case, for any $v \in \bR^{S}$, we have 
\begin{align*}
\norm{\Pi_\mu P v}^2_\mu &= \mu_1 \left( \langle \phi_1, \Sigma^{-1} (\sum_k \mu_k \phi_k P_k^\top v) \rangle \right)^2 + \mu_2 \left( \langle \phi_2, \Sigma^{-1} (\sum_k \mu_k \phi_k P_k^\top v) \rangle\right)^2 + \mu_3 \left( \langle \phi_3, \Sigma^{-1} (\sum_k \mu_k \phi_k P_k^\top v) \rangle\right)^2 \\
&= (\Sigma^{-1})^2 \left(\mu_1 \phi_1^2 (\sum_k \mu_k \phi_k P_k^\top v)^2 + \mu_2 \phi_2^2 (\sum_k \mu_k \phi_k P_k^\top v)^2  + \mu_3 \phi_3^2 (\sum_k \mu_k \phi_k P_k^\top v)^2\right)  \tag{because $d=1$}\\
&= (\Sigma^{-1})^2(\mu_1 \phi_1^2 + \mu_2 \phi_2^2 + \mu_3 \phi_3^3) (\sum_k \mu_k \phi_k P_k^\top v)^2 \\
&= (\Sigma^{-1})^2 (\norm{\Phi}^2_\mu) (\sum_k \mu_k \phi_k P_k^\top v)^2\\
&= \frac{1}{\norm{\Phi}^2_\mu} (\sum_k \mu_k \phi_k P_k^\top v)^2.   \tag{when $d=1$, $\Sigma = \Phi^\top D \Phi = \norm{\Phi}^2_\mu$}
\end{align*} 
Taking the square roots gives that 
\[
\norm{\Pi_\mu P v}_\mu = \frac{1}{\norm{\Phi}_\mu} | \sum_k \mu_k \phi_k P_k^\top v |.
\]

Note that the summation can be further broken down into 
\begin{equation*}
\sum_k \mu_k \phi_k (P^\mu_k)^\top v_\mu + \sum_k \mu_k \phi_k (P^{\neg \mu}_k)^\top v_{\neg \mu},
\end{equation*}
where $v = v_\mu \oplus v_{\neg \mu} = (v_1,v_2,v_3) \oplus (v_4,v_5)$ and similarly $P_k = P^{\mu}_k \oplus P^{\neg \mu}_k$ (i.e. $P^{\mu}_k = (P_{k,1},P_{k,2},P_{k,3})$ and $P^{\neg \mu}_k=(P_{k,4},P_{k,5})$). The second term in this summation is $0$ for all $v$ by Lemma \ref{lemma:pipmu}. Thus, $\norm{\Pi_\mu P}_\mu = \frac{1}{\norm{\Phi}_\mu} \max_{\norm{v}_\mu =1} | \sum_k \mu_k \phi_k (P_k^\mu)^\top v_\mu|$. Let us call the vectors $\nu_i \coloneqq \phi_i (P_i^\mu) \in \bR^3$, and we treat them as fixed. Note further that %
\begin{align*}
\norm{\Pi_\mu P}_\mu &= \frac{1}{\norm{\Phi}_\mu} \max_{\norm{v}_\mu =1} | \sum_k \mu_k \nu_k^\top v_\mu| \\
&= \frac{1}{\norm{\Phi}_\mu} \max_{\norm{v}_\mu =1} | \left(\sum_k \mu_k \nu_k\right)^\top v_\mu| \\
&= \frac{1}{\norm{\Phi}_\mu} \max_{\norm{D^{1/2}_\mu v_\mu}_2 =1} | \left(\sum_k \mu_k\nu_k\right)^\top v_\mu|,
\end{align*}
where in the last line we are using the $2-$norm over $\bR^3$, and $D^{1/2}_\mu \coloneqq D^{1/2}\vert_\mu$ is the restriction of $D^{1/2}$ to the first 3 coordinates. Note that $D^{-1/2}_\mu \coloneqq (D^{1/2}_\mu)^{-1}$ is well-defined. Hence, 
\begin{align}
\norm{\Pi_\mu P}_\mu &= \frac{1}{\norm{\Phi}_\mu} \max_{\norm{y}_2 =1} | \left(\sum_k \mu_k \nu_k \right)^\top D^{-1/2}_\mu y| \tag{change of coordinates $y = D^{-1/2}_\mu v_\mu$} \nonumber\\
&= \frac{1}{\norm{\Phi}_\mu} \max_{\norm{y}_2 =1} | y^\top \left( D^{-1/2}_\mu \sum_k \mu_k \nu_k\right)| \nonumber \\
&= \frac{1}{\norm{\Phi}_\mu} \norm{D^{-1/2}_\mu\left(\sum_k \mu_k \nu_k\right)}_2 \tag{max is achieved by $y = \frac{D^{-1/2}_\mu \sum_k \mu_k \nu_k}{ \norm{D^{-1/2}_\mu \sum_k \mu_k \nu_k}_2}$} \nonumber \\
&= \frac{1}{\norm{\Phi}_\mu} \norm{D^{-1/2}_\mu\left(\mu_1 \nu_1 + \mu_2 \nu_2 + \mu_3 \nu_3 \right)}_2  \nonumber \\
&= \frac{1}{\norm{\Phi}_\mu} \norm{\begin{pmatrix}
\sqrt{\mu_1} \nu_{1,1}\\
\frac{\mu_1}{\sqrt{\mu_2}} \nu_{1,2}\\
\frac{\mu_1}{\sqrt{\mu_3}} \nu_{1,3}
\end{pmatrix}  +\begin{pmatrix}
 \frac{\mu_2}{\sqrt{\mu_1}}\nu_{2,1}\\
\sqrt{\mu_2} \nu_{2,2}\\
 \frac{\mu_2}{\sqrt{\mu_3}} \nu_{2,3}
\end{pmatrix} + \begin{pmatrix}
 \frac{\mu_3}{\sqrt{\mu_1}}\nu_{3,1}\\
 \frac{\mu_3}{\sqrt{\mu_2}}\nu_{3,2}\\
 \sqrt{\mu_3} \nu_{3,3}
\end{pmatrix}}_2  \nonumber \\
&= \frac{1}{\norm{\Phi}_\mu} \norm{\phi_1\begin{pmatrix}
\sqrt{\mu_1} P_{1,1}\\
\frac{\mu_1}{\sqrt{\mu_2}} P_{1,2}\\
\frac{\mu_1}{\sqrt{\mu_3}} P_{1,3}
\end{pmatrix}  + \phi_2 \begin{pmatrix}
 \frac{\mu_2}{\sqrt{\mu_1}}P_{2,1}\\
\sqrt{\mu_2} P_{2,2}\\
 \frac{\mu_2}{\sqrt{\mu_3}} P_{2,3}
\end{pmatrix} + \phi_3\begin{pmatrix}
 \frac{\mu_3}{\sqrt{\mu_1}}P_{3,1}\\
 \frac{\mu_3}{\sqrt{\mu_2}}P_{3,2}\\
 \sqrt{\mu_3} P_{3,3}
\end{pmatrix}}_2. \label{eq:pi-mu-p-equals-x}\\
\end{align}
Plugging in that $P$ is 0 on certain indices, the quantity we care about is 
\[
  x = \frac{\norm{\Pi_\mu P}_\mu}{\Sigma^{-1}|A|} =  \frac{\Sigma^{1/2}}{|A|} \norm{\phi_1\begin{pmatrix}
\sqrt{\mu_1} P_{1,1}\\
    0\\
     0
\end{pmatrix}  + \phi_2 \begin{pmatrix}
 0\\
\sqrt{\mu_2} P_{2,2}\\
  0
\end{pmatrix} + \phi_3\begin{pmatrix}
 \frac{\mu_3}{\sqrt{\mu_1}}P_{3,1}\\
 \frac{\mu_3}{\sqrt{\mu_2}}P_{3,2}\\
 \sqrt{\mu_3} P_{3,3}
\end{pmatrix}}_2.
\]

We claim that this function is continuous along the path $(\mu_1, \frac{1- \mu_1}{2}, \frac{1-\mu_1}{2})$ in the domain $\mu_1 \in [0,1]$. Call this function $\rho(\mu_1)$. To see continuity, note that $\Sigma^{1/2}$ and $A$ are linear function of $\mu$ (since they are expectations under $\mu$, and can thus be written as an $L_2$ inner product where one of the vectors is $\mu$, e.g. $A = \left\langle \mu, \phi(\cdot)\left(\phi(\cdot) - \gamma \bE_{s' \sim P(\cdot)}[\phi(s')] \right) \right\rangle$. Similarly, the $L_2$ norm can be seen to be continuous as a function of $\mu_1$, since expanding the $L_2$ norm shows that it is a sum of rational functions in $\mu_1$. Since $\rho$ is continuous as a function of $\mu_1$, it takes on all values between $\rho(0)$ and $\rho(1)$ by the intermediate value theorem. Let us check the range of this function. Let us write $\phi_i' = \bE_{s' \sim P(i)} \phi(s')$. Taking $\mu_1 \rightarrow 0$ causes the ratio $\Sigma^{1/2}/|A|$ to take the value
\begin{align*}
\frac{\sqrt{0.5}\left(\phi_2^2 + \phi_3^2\right)^{1/2}}{0.5 \left| \phi_2(\phi_2 - \gamma \phi'_2) + \phi_3(\phi_3 - \gamma \phi'_3) \right|} &= \frac{\sqrt{0.5}\left(\phi_2^2 + \phi_3^2\right)^{1/2}}{0.5 \left| \phi^2_2 -   \gamma \phi_2 \phi'_2 + \phi^2_3 - \gamma \phi_3\phi'_3 \right|} \\
&\geq \frac{\sqrt{0.5}\left(\phi_2^2 + \phi_3^2\right)^{1/2}}{0.5 \left| \phi^2_2 + \phi_3^2\right| +   \gamma \left| \phi_2 \phi'_2 + \phi_3\phi'_3 \right| } \\
& \geq \frac{\sqrt{0.5}\left(\phi_2^2 + \phi_3^2\right)^{1/2}}{0.5 \left| \phi^2_2 + \phi_3^2\right| +   \gamma 2 B^2 } \\
& \geq \frac{\sqrt{0.5}\left(\phi_2^2 + \phi_3^2\right)^{1/2}}{B^2 +   \gamma 2 B^2 },
\end{align*}where $B=1$ is the bound on the feature vectors. The above quantity is lower bounded by a positive constant (since the numerator is $>0$ and the denominator is bounded by $3$). Meanwhile, the value of the $L_2$ norm $ \norm{\phi_1\begin{pmatrix}
\sqrt{\mu_1} P_{1,1}\\
    0\\
     0
\end{pmatrix}  + \phi_2 \begin{pmatrix}
 0\\
\sqrt{\mu_2} P_{2,2}\\
  0
\end{pmatrix} + \phi_3\begin{pmatrix}
 \frac{\mu_3}{\sqrt{\mu_1}}P_{3,1}\\
 \frac{\mu_3}{\sqrt{\mu_2}}P_{3,2}\\
 \sqrt{\mu_3} P_{3,3}
\end{pmatrix}}_2$ goes to $\infty$ via the term $1/\sqrt{\mu_1}$, so $\rho(0) \rightarrow \infty$. For the other asymptote, taking $\mu_1 \rightarrow 1$ gives us that the ratio on the left goes to 
\[
\frac{\phi_1}{|\phi_1^2 - \gamma \phi_1 \phi'_1|} = \frac{1}{|\phi_1 - \gamma \phi'_1|}.
\]
and the $L_2$ norm goes to $\phi_1P_{1,1}$, so the product goes to 
\[
\frac{\phi_1 P_{1,1}}{|\phi_1 - \gamma \phi'_1|} \leq \frac{0.6}{|1 - 0.9 \frac{\phi'_1}{\phi_1}|}.
\]
Using $P_{1,1} \approx 0.58... \leq 0.6$ and $\gamma = 9/10$. It remains to deal with the term $|1 - 0.9 \phi_1' / \phi_1|$ and to show that this term can be made arbitrarily large (so that the ratio can be bounded by an arbitrarily small number). Towards this, note that
\begin{align*}
\phi_1' / \phi_1 &= \frac{P_{1,1} \phi_1 + P_{1,4} \phi_4 + P_{1,5}\phi_5}{\phi_1} = P_{1,1} + P_{1,4} \frac{\phi_4}{\phi_1} + P_{1,5} \frac{\phi_5}{\phi_1} \\
&= P_{1,1} + P_{1,4} \frac{\lambda_1 d_4^4 + \lambda_2 d_5^4 + \lambda_3 \psi^4}{\lambda_1 d_4^1 + \lambda_2 d_5^1 + \lambda_3 \psi^1} + P_{1,5} \frac{\lambda_1 d_4^5 + \lambda_2 d_5^5 + \lambda_3 \psi^5}{\lambda_1 d_4^1 + \lambda_2 d_5^1 + \lambda_3 \psi^1}
\end{align*}
Note that the ratios $\frac{\lambda_1 d_4^4 + \lambda_2 d_5^4 + \lambda_3 \psi^4}{\lambda_1 d_4^1 + \lambda_2 d_5^1 + \lambda_3 \psi^1}$ and $\frac{\lambda_1 d_4^5 + \lambda_2 d_5^5 + \lambda_3 \psi^5}{\lambda_1 d_4^1 + \lambda_2 d_5^1 + \lambda_3 \psi^1}$ are continuous functions of $c$, since $c$ only appears via $\lambda_3 = c$ or $\lambda_2 = \left( c \sign\langle a, M_3(\psi) \rangle - b_1 \right) / b_2$. Thus, taking $c$ small enough towards $0$ we can get arbitrarily close to the value 
\[
    \phi_1' / \phi_1 \rightarrow  P_{1,1} + P_{1,4} \frac{d_4^4 - \frac{b_1}{b_2} d_5^4}{d_4^1 -\frac{b_1}{b_2} d_5^1} + P_{1,5} \frac{d_4^5 -\frac{b_1}{b_2} d_5^5}{d_4^1 -\frac{b_1}{b_2} d_5^1}
\]
Recall that $b$ is the vector obtained by the rank-1 factorization of $\bar{M} = a b^\top$, i.e. is defined by $b_1 = \langle p_4, d_4 \rangle_\mu$ and $b_2 = \langle p_4, d_5 \rangle_\mu$. Solving for $b_1$ and $b_2$ in the case where $\mu_1 = 1$ gives us $b_1 = p_4^1 d_4^1$ and $b_2 = p_4^1 d_5^1$. Plugging this in, we have
\begin{align*}
\phi_1' / \phi_1 &\rightarrow  P_{1,1} + P_{1,4} \frac{d_4^4 - \frac{d_4^1}{d_5^1} d_5^4}{d_4^1 -\frac{d_4^1}{d_5^1} d_5^1} + P_{1,5} \frac{d_4^5 -\frac{d_4^1}{d_5^1} d_5^5}{d_4^1 -\frac{d_4^1}{d_5^1} d_5^1} \\
&=  P_{1,1} + P_{1,4} \frac{d_4^4 - \frac{d_4^1}{d_5^1} d_5^4}{d_4^1 - d_4^1} + P_{1,5} \frac{d_4^5 -\frac{d_4^1}{d_5^1} d_5^5}{d_4^1 -d_4^1} = \infty,
\end{align*}

as desired.

Concluding, we have that by the intermediate value theorem, for any value of $x$ between $(0, \infty)$, there exists some value of $\mu_1$ and $c$ s.t. the expression above can take value $\rho(\mu_1) = x$, so in particular we can satisfy $x = \norm{\Pi_\mu P}_\mu / \sigmamin{\Sigma^{-1/2}A\Sigma^{-1/2}}$ with some choice of $\mu$.

\if0
\subsection{Extension to $\sigmamin{A} > 0$}

We will prove the following result. 

\begin{theorem}\label{thm:sigmamin-lb-local}
$\forall y \geq 0$, there exists a family of instances $(M,\mu,\Phi)$ where $\norm{\Pi P}_\mu < \infty$,  and $\sigma_{\min}(\Sigma^{-1/2}A\Sigma^{-1/2}) = y$, and the worst-case approximation factor of any estimator is 
$$
\inf_{\hat v} \sup_{M \in \{\cM\}} \alpha(M, \hat{v}) \geq 1 + \gamma \norm{\Pi_\mu P}_\mu/y
$$
\end{theorem}
\begin{proof}
Take the same MRP as last time. Let $\bd^\mu_4, \bd^\mu_5$ denote the first three coordinates of $\bd_4, \bd_5$ respectively. Let $\bn \coloneqq \bd^\mu_4 \cross \bd^\mu_5$, and note that$\{ \bd^\mu_4,\bd^\mu_5, n \}$ forms a basis of $\bR^3$. Thus we can write
$$
\Phi^\mu = \alpha \bd^\mu_4 + \beta \bd^\mu_5 + \varepsilon \bn
$$

We can show that $A = \varepsilon \Phi^\top D (I - \gamma P) \bn$, and since this is a scalar the minimum singular value is $\left| \varepsilon \Phi^\top D (I - \gamma P) \bn \right|$. If we normalize $\norm{\Phi}_\mu = 1$ (doesn't affect the constraints for the bilinear system of $\norm{\Pi_\mu P}_\mu < \infty$), then we also have that this is equal to $A = \varepsilon \norm{\Phi \Sigma^{-1} \Phi^\top D (I - \gamma P) \bn }_\mu = \varepsilon \norm{ \Pi_\mu (I - \gamma P) \bn}_\mu$. We want this to equal $y$ which means that $\varepsilon = \frac{y}{\norm{ \Pi_\mu (I - \gamma P) \bn}_\mu}$

Is $\norm{\Pi_\mu P}_\mu < \infty$ still? Numerically, the bilinear system remains solvable for all $ 0 \leq \varepsilon \leq 490/1280$ (we get a different $\mu$ for each $\varepsilon$, taking $\varepsilon \rightarrow 490/1280$ makes that $\mu \rightarrow (1,0,0)$, and if you take $\varepsilon$ larger than that there starts being negative elements). We need to have an answer to when this system is solvable.  

The line generated by $\Phi \theta$ intersects the plane only at $0$. We pick two environments that have symmetric rewards such that the optimal thing for the learner to do will be to pick the $0$ estimator. Thus the error of the learner is $\norm{v_\cM}_\mu$ and the error of the optimal hypothesis is $\norm{\Pi_\mu v_\cM - v_\cM}_\mu$. The ratio is $\alpha^\star = \frac{\norm{v_\cM}_\mu }{\norm{\Pi_\mu v_\cM - v_\cM}_\mu}$. 

We take $v_\cM = r_4 \bd_4 + r_5 \bd_5$. Note that $\Pi_\mu v_\cM = \LS\left(\alpha \bd^\mu_4 + \beta \bd^\mu_5 + \varepsilon \bn \right)$. We can take the environments such that $r_4 = z \cdot \alpha$ and $r_5 = z \cdot \beta$, with $z = \pm 1$. Thus 
$$
\norm{ \Pi_\mu v_\cM - v_\cM}_\mu = \norm{(\LS \alpha - r_4 )\bd_4 + (\LS \beta - r_5)\bd_5 + \LS \varepsilon n}_\mu \leq \varepsilon \norm{n}_\mu,
$$
by upper bounding with the linear parameter that simply matches the sign of $z$ (might not be optimal for large $\varepsilon$). Plugging this into the approximation ratio:

$$
\frac{\norm{v_\cM}_\mu}{\varepsilon \norm{n}_\mu} = \frac{\norm{v_\cM}_\mu}{y}\frac{\norm{ \Pi_\mu (I - \gamma P) \bn}_\mu}{\norm{n}_\mu} \stackrel{?}{\approx} \frac{1}{y} \norm{\Pi_\mu(I - \gamma P)}_\mu
$$

\paragraph{How to make $\norm{\Pi_\mu P}_\mu = x < \infty$?}

Let us enforce that $\Phi$ is normalized, i.e. $\norm{\Phi}_\mu = 1$ (this doesn't affect the bilinear constraints). Some brute force calculations show that
\begin{align*}
\norm{\Pi_\mu P v}^2_\mu &= \mu_1 ( \langle \phi_1, \Sigma^{-1} (\sum_k \mu_k \phi_k P_k^\top v) \rangle)^2 + \mu_2 ( \langle \phi_2, \Sigma^{-1} (\sum_k \mu_k \phi_k P_k^\top v) \rangle)^2 + \mu_3 ( \langle \phi_3, \Sigma^{-1} (\sum_k \mu_k \phi_k P_k^\top v) \rangle)^2 \\
&= \mu_1 \phi_1^2 (\sum_k \mu_k \phi_k P_k^\top v)^2 + \mu_2 \phi_2^2 (\sum_k \mu_k \phi_k P_k^\top v)^2  + \mu_3 \phi_3^2 (\sum_k \mu_k \phi_k P_k^\top v)^2  \tag{since $\Sigma=1$ and $\phi_i \in \bR$ are scalars}\\
&= (\mu_1 \phi_1^2 + \mu_2 \phi_2^2 + \mu_3 \phi_3^3) (\sum_k \mu_k \phi_k P_k^\top v)^2 \\
&=(\norm{\Phi}_\mu)^2 (\sum_k \mu_k \phi_k P_k^\top v)^2 = (\sum_k \mu_k \phi_k P_k^\top v)^2
\end{align*}

And taking the square root gives that $\norm{\Pi_\mu P v} = | \sum_k \mu_k \phi_k P_k^\top v |$. Note that this can be further broken down into 
\begin{align*}
&\sum_k \mu_k \phi_k (P^\mu_k)^\top v_\mu + \sum_k \mu_k \phi_k (P^{\neg \mu}_k)^\top v_{\neg \mu}\\
&= \sum_k \mu_k \phi_k (P_{1,k}, P_{2,k}, P_{3,k})^\top v_\mu + \sum_k \mu_k \phi_k (P_{4,k}, P_{5,k})^\top v_{\neg \mu}
\end{align*}
where $P^{\mu}_k$ is the $\mu$ elements of the $k^{th}$ column (i.e. $(P_{k,1},P_{k,2},P_{k,3})$) and $P^{\neg \mu}_k$ is the $\neg \mu$ elements of the $k^{th}$ column (i.e. $(P_{k,4},P_{k,5})$).
We then maximize over all $\norm{v}_\mu = 1$ to get $\norm{\Pi_\mu P}_\mu$. Since we cannot have $v_{\neg \mu}$ appear in this summation (otherwise the operator norm would be $\infty$), this gives the familiar constraints 
$$
\sum_k \mu_k \phi_k P_k^{\neg \mu} = (0,0) \iff \sum_k \mu_k \phi_k P_{k,4} = 0 \quad \& \quad \sum_k \mu_k \phi_k P_{k,4} = 0.
$$

When does this have a solution? Recall that $\phi = \alpha \bd_4 + \beta \bd_5$. Rewriting this system of equations gives us that we need $\alpha$ and $\beta$ such that:

\begin{align*}
&\quad &\alpha\left(\mu_1 d_{1,4}P_{1,4} + \mu_2 d_{2,4}P_{2,4} + \mu_3 d_{3,4}P_{3,4}\right) + \beta \left(\mu_1 d_{1,5}P_{1,4} + \mu_2 d_{2,5}P_{2,4} + \mu_3 d_{3,5}P_{3,4} \right) = 0 \\
&\quad &\alpha\left(\mu_1 d_{1,4}P_{1,5} + \mu_2 d_{2,4}P_{2,5} + \mu_3 d_{3,4}P_{3,5}\right) + \beta \left(\mu_1 d_{1,5}P_{1,5} + \mu_2 d_{2,5}P_{2,5} + \mu_3 d_{3,5}P_{3,5} \right) = 0 \\
\end{align*}

which is equivalent to 
\begin{align*}
&\implies &\alpha \langle d_4, p_4 \rangle_\mu + \beta \langle d_5, p_4 \rangle_\mu = 0 \\
&  &\alpha \langle d_4,p_5 \rangle_\mu + \beta \langle d_4, p_5 \rangle_\mu = 0 
\end{align*}
or 
\begin{align*}
&\implies &\alpha \langle D d_4, p_4 \rangle + \beta \langle D d_5, p_4 \rangle = 0 \\
&  &\alpha \langle D d_4,p_5 \rangle  + \beta \langle D d_4, p_5 \rangle = 0 
\end{align*}
or 
\begin{equation*}
\begin{pmatrix}
\langle D d_4, p_4 \rangle &  \langle D d_5, p_4 \rangle \\
\langle D d_4, p_5 \rangle & \langle D d_4, p_5 \rangle
\end{pmatrix} (\alpha, \beta)^\top = (0,0)^\top \implies  \underbrace{\begin{pmatrix}
- p_4 - \\
- p_5 - \\
\end{pmatrix} D 
\begin{pmatrix}
| & | \\
d_4 & d_5 \\
| & | \\
\end{pmatrix}}_{\coloneqq M}(\alpha, \beta)^\top = (0,0)^\top
\end{equation*}
And a non-zero solution to this system exists as long as the matrix $M$ is rank-deficient, which is easily accomplished by taking $p_5 = p_4$. The problem is that we then need to check that the solution which makes this system $0$ doesn't also make $\alpha d_4 + \beta d_4 = 0$, since then the features would be trivial $(\Phi = 0)$. 

Once this is done, we can then choose a value for $\norm{\Pi_\mu P}_\mu$ by tuning  Let us turn our attention to $v_\mu$ terms. 

\end{proof}
\fi

\section{Cases where $\alpha=1$ is asymptotically achievable}\label{app:alphaequals1}

Thanks to the proof of Equation \ref{eq:lstd-ub}, we identify several scenarios where the true solution can be recovered. 

\begin{enumerate}
    \item $\Phi A^{-1} \Phi^\top D P v^\perp = 0$, in particular $\Phi^\top D P v^\perp = 0$, e.g. when is satisfied under the condition that the orthogonal subspace of $\col(\Phi)$ is closed under $P$ (i.e. $P$ maps orthogonal vectors (of the features) to orthogonal vectors). Then LSTD has an $L_2(\mu)$ approximation factor of $1$.
    	\begin{itemize}
    		\item Proof: from Equation \eqref{eq:ls-minus-lstd}.
    	\end{itemize}
    \item $\norm{P}_\mu < \infty$ implies that $v_\cM$ can be learned exactly on the support of $\mu$. Thus with the tabular function class the asymptotic approximation ratio in the $L_2(\mu)$ norm is either $1$ if $\norm{P}_\mu < \infty$ or $\infty$ if $\norm{P}_\mu$.
		\begin{itemize}
    		\item Proof: If $\norm{P}_\mu < \infty$ then we must have the condition $(\mu(s)>0 \,\&\, P(s'|s)>0) \implies \mu(s')>0$, otherwise in the equation $\max_{\norm{v}_\mu=1} \norm{Pv}_\mu$ we will have a contribution of $P_{s,s'}v(s')$ for some unsupported state $s'$, and the value for $v(s')$ can be taken to infinity while satisfying the constraint $\norm{v}_\mu=1$. From this condition it is easy to see that $v_\cM$ can be recovered exactly on $\mu$, as in the asymptotic regime we have access to $r(s) \forall s \in \mu$ and $P(s) \forall s \in \mu$, and if a state transitions to $s'$ then we will also have $P(s')$ and $r(s')$.%
    	\end{itemize}

\end{enumerate}

\section{Proofs for Section \ref{sec:linfty}}
\subsection{Proof of Theorem \ref{thm:lstd-ub-infty}}\label{app:lstd-ub-infty}

\lstdubinfty*

\begin{proof}
We repeat the steps of Lemma \ref{lemma:lstd-ls}. Let us write $v_\cM = \Pi_\infty v_\cM + \delta \coloneqq \Phi \theta_\infty + \delta$, so that $\delta = v_\cM - \Pi_\infty v_\cM$.

Then we have:
\begin{align*}
    (I - \gamma P) v_M &= r \\
    (I - \gamma P) \Phi \theta_\infty + (I- \gamma P) \delta &= r \\
    \Phi^\top D (I - \gamma P) \Phi \theta_\infty + \Phi^\top D (I - \gamma P) \delta &= \Phi^\top D r \tag{$\Phi^\top D$ on both sides}\\
    \left(\Phi^\top D(\Phi - \gamma P \Phi)\right) \theta_\infty &= \Phi^\top D r - \Phi^\top D(I-\gamma P) \delta\\
     A \theta_\infty  &= b - \Phi^\top D(I-\gamma P) \delta \tag{Defns of $A,b$}\\
     \theta_\infty &= A^{-1} b - A^{-1}\Phi^\top D(I-\gamma P) \delta \tag{$A^{-1}$ exists}
\end{align*}
Meanwhile, the LSTD solution is defined by $\LSTD = A^{-1}b$. Substracting both of these gives:
 \begin{equation}\label{eq:infty-lstd}
 \theta_\infty - \LSTD = -A^{-1}\Phi^\top D(I-\gamma P) \delta
 \end{equation}
Now, applying $\Phi$ and taking the $\infty$ norm gives
\begin{align*}
\norm{\Phi(\theta_\infty - \LSTD)}_\infty &= \norm{\Phi A^{-1}\Phi^\top D(I-\gamma P) \delta}_\infty \\
&\leq \norm{\Phi A^{-1}\Phi^\top D(I-\gamma P)}_\infty \norm{\delta}_\infty \\
&\leq \norm{\Phi A^{-1}\Phi^\top D}_\infty\norm{(I-\gamma P)}_\infty \norm{\delta}_\infty \\
&\leq \norm{\Phi A^{-1}\Phi^\top D}_\infty (1+\gamma) \norm{\delta}_\infty \\
\end{align*}

It remains to relate $\norm{\Phi A^{-1}\Phi^\top D}_\infty$ to $\sigmamin{A}$. Notice that
$$
(\Phi A^{-1}\Phi^\top D)_{i,j} = \mu_j \left\langle \phi_i, A^{-1}\phi_j\right\rangle,
$$
The $L_\infty$ matrix norm is the maximum $L_1$ norm of a row, thus
\begin{align*}
\norm{\Phi A^{-1}\Phi^\top D}_\infty = \max_i \left( \sum_j | \mu_j \left\langle \phi_i, A^{-1}\phi_j\right\rangle | \right) &\leq \max_i\left( \sum_j \mu_j \norm{\phi_i}_2 \norm{A^{-1}\phi_j}_2 \right) \tag{Cauchy-Schwartz} \\
&= \norm{A^{-1}}_2 \max_i \norm{\phi_i}_2 \left(\sum_j \mu_j \norm{\phi_j}_2\right) \\
&\leq \norm{A^{-1}}_2 1 \tag{$\norm{\phi_i}\leq 1 \, \forall i$} \\
&= \frac{1}{\sigmamin{A}} 
\end{align*}

Combining everything and using a triangle inequality gives us:
$$
\norm{\vLSTD - v_\cM}_\infty \leq (1 + \frac{(1+\gamma)1}{\sigmamin{A}}) \norm{\Phi \theta_\infty - v_\cM }_\infty
$$
\end{proof}

\if0
\subsection{Extension to $\sigmamin{A} > 0$}

\begin{theorem}
$\forall y \in (0,\infty)$ there exists a family of instances $\bM = \{(\cM,\Phi,\mu)\}$ which all satisfy $\sigmamin{A}=y$, yet any estimator $\hat{v}$ will satisfy have 
	$$
\sup_{(\cM,\mu,\phi)\in \bM} \alpha_\infty(\hat v; (\cM,\mu,\phi)) \geq \min\{1,\frac{1}{y}\}
	$$
\end{theorem}
\begin{proof}
In the two-state example, take the first state to have feature $\phi_1 = 1$ and the second state to have feature $\phi_2 = 1/\gamma+\varepsilon$. This gives $A = 1 - \gamma ( 1 \cdot (1/\gamma+\varepsilon)) = -\gamma \varepsilon$, which has $y \coloneqq \sigmamin{A} = \gamma \varepsilon$. The misspecification is now
\begin{align*}
\inf_\theta \norm{\Phi \theta - v_\cM}_\infty  &= \inf_\theta \max\{ \left| \theta - \gamma \frac{r}{1-\gamma} \right|, \left|\theta(1/\gamma+\varepsilon) - \frac{r}{1-\gamma}\right| \}  \\
&= \inf_\theta \max\{ \gamma \left| \theta/\gamma - \frac{r}{1-\gamma} \right|, \left|\theta/\gamma - \frac{r}{1-\gamma} + \varepsilon \theta \right| \} \\
&\leq \varepsilon \left|\gamma \frac{r}{1-\gamma} \right| \tag{Picking $\theta = \gamma r/(1-\gamma)$}\\
&\leq \frac{\varepsilon \gamma }{1-\gamma}
\end{align*}

The problem is still unidentifiable, which means any estimator will have an error of at least $1/(1-\gamma)$. Thus, the worst-case approximation ratio is 
$$
\frac{\norm{\hat{v} - v_\cM}_\infty}{\inf_\theta \norm{\Phi \theta - v_\cM}_\infty} \geq \frac{\frac{1}{1-\gamma}}{\frac{ \gamma \varepsilon}{1-\gamma}} = \frac{1}{\gamma \varepsilon} = \frac{1}{y}
$$

\end{proof}
\fi

\subsection{Proof of Theorem \ref{thm:sigmamin-infty}}\label{app:sigmamin-infty}

\sigmamininfty*

    \begin{proof}
We take $P = \begin{pmatrix} 
0 & 1 \\
0 & 1 \\
\end{pmatrix}$ and $D = \begin{pmatrix} 
1 & 0 \\
0 & 0 \\
\end{pmatrix}$. Note that this is the same MRP as in \cite{amortila2020variant} and Lemma \ref{lemma:PiPnecessary}. This gives a discounted occupancy matrix
\[
\bd = (I - \gamma P)^{-1} = \begin{pmatrix}
1 & \gamma/(1-\gamma) \\
0 & 1/(1-\gamma) 
\end{pmatrix}.
\]
Let $d_1$ denote the first column of $\bd$ and $d_2$ denote the second column. We take $r = (0, r_2)^\top$, i.e. no reward at state $1$ and a reward of $r_2$ at state $2$. This gives $v_\cM = r_2 d_2$. We set one instance to have $r_2 = 1$, one instance to have $r_2 = 0$, and the last instance to have $r_2 = -1$. The three instances are otherwise identical. We take $\Phi = \left[\alpha d_1 + d_2\right](1-\gamma) \in \bR^{2 \times 1}$ (thus $\phi(s) \in \bR$), and we will later impose that $0 \leq  \alpha \leq 1$. Assuming that this bound on $\alpha$ holds for now, we can see that $\norm{\phi_1}_2 = [\alpha + \gamma/(1-\gamma)](1-\gamma) \leq (1-\gamma)/(1-\gamma) = 1$ and $\norm{\phi_2}_2 = (1-\gamma)/(1-\gamma)$ and thus $\norm{\phi_i}_2 \leq 1$ for all $i$. We can directly verify that 
\[
A = \Phi^\top D (I - \gamma P)\Phi = \left[\alpha^2 + \alpha \gamma/(1-\gamma)\right](1-\gamma)^2 = \alpha^2(1-\gamma)^2 + \alpha \gamma (1-\gamma) %
\]
We need $\sigmamin{A} = |A| = A = y$, so we can solve the quadratic for $\alpha$ and pick the positive solution to get:
\[
\alpha = \frac{-\gamma + \sqrt{\gamma^2+4y}}{2(1-\gamma)}
\]

Note that $\alpha$ satisfies the bound $0 \leq \alpha \leq 1$ whenever $\gamma < 1$ and $0 \leq y \leq 1- \gamma$, which holds by the assumption in our theorem statement.
The misspecification error is at most:
$$
\inf_\theta \norm{v_\cM - \Phi \theta}_\infty = \inf_\theta \norm{r_2 d_2 - (\alpha d_1 + d_2)\theta(1-\gamma) }_\infty = \inf_\theta \norm{(r_2 - (1-\gamma)\theta)d_2 - \alpha(1-\gamma)\theta d_1}_\infty \leq \alpha \norm{d_1}_\infty = \alpha,
$$
where the upper bound was obtained by plugging in $\theta = \frac{r_2}{1-\gamma}$. 
Note that the minimax estimator against these three instances will need to output $\theta = 0$ since the instance with $r_2 = 0$ is realizable with $\theta = 0$. Namely, if the learner does not output $\theta=0$ then its worst-case approximation will be $\infty$. This gives the ratio:
\begin{align*}
\alpha_\infty &\geq \frac{\norm{v_\cM - 0}_\infty}{\norm{\Pi_\infty v_\cM - v_\cM}_\infty} \\
&\geq \frac{\norm{v_\cM}_\infty}{\alpha } \\
&= \frac{1}{\alpha(1-\gamma)} \\
&= \frac{1}{y} \left\{\alpha(1-\gamma) + \gamma \right\} \tag{using that $\alpha(1-\gamma)\left\{\alpha(1-\gamma) + \gamma\right\} = y$ by definition of $A$}\\
&= \frac{1}{y} \left\{\frac{-\gamma + \sqrt{\gamma^2+4y}}{2} + \gamma \right\}  \tag{using that $\alpha =  \frac{-\gamma + \sqrt{\gamma^2+4y}}{2(1-\gamma)} $}\\
&= \frac{\gamma}{2y} \left\{1 + \sqrt{1+\frac{4y}{\gamma^2}} \right\}\\
&\geq \frac{\gamma}{2y} \left\{1 + 1+\frac{2y}{\gamma^2}- \frac{(4y)^2}{8\gamma^4} \right\} \tag{using that $\sqrt{1+x} \geq 1 + x/2 - x^2/8$ for all $x \geq 0$}\\
&= \frac{\gamma}{y} \left\{1+\frac{y}{\gamma^2}- \frac{y^2}{\gamma^4} \right\} \\
&= \frac{\gamma}{y} +\frac{1}{\gamma}- \frac{y}{\gamma^3} \\
&\geq \frac{\gamma}{y} +\frac{1}{\gamma}- \frac{1-\gamma}{\gamma^3} \tag{using that $y \leq 1-\gamma$} \\
&\geq \frac{\gamma}{y} + \frac{1}{2}, \tag{using that $\frac{1}{\gamma}- \frac{1-\gamma}{\gamma^3} \geq \frac{1}{2}$ when $\gamma \geq c_1$}
\end{align*}
as desired. The value of $c_1$ can be taken to be the smallest solution $x$ such that $\frac{1}{x} - \frac{1-x}{x^3} \geq \frac 1 2$, which by Mathematica is approximately $0.6889$ (but one can verify that $0.7$ suffices and that this inequality holds for all $x \geq 0.7$ since the function is increasing).
\end{proof}

\subsection{Proof of Theorem \ref{thm:aliasing-ub-infty}}\label{app:aliasing-ub-infty}

\aliasingubinfty*

\begin{proof}
Inspired by the theory of ``$q^\star$-irrelevant abstractions'' \cite{li2006towards,jiang2018notes,xie2021batch}, we define $v_\cM$-irrelevant abstractions as follows:
\begin{definition}\label{def:irr-abs}
A feature map $\varphi: \cS \mapsto \cX$ is an $\varepsilon-$approximate $v_\cM-$irrelevant abstraction for MRP $\cM$ if there exists a function $f: \cX \mapsto \bR$ such that 
$$ 
\inf_{f: \cX \mapsto \bR}\norm{f \circ \varphi - v_\cM}_\infty = \varepsilon 
$$
\end{definition}

Note that the $\inf$ is taken over all pointwise functions over $\cX$, and thus every feature mapping with an $L_\infty$-misspecification error of $\varepsilon$ is also a $\varepsilon$-approximate $v_\cM$-irrelevant abstraction. 

To conclude the proof we use Theorem 5 from \cite{jiang2018notes}, which establishes the analogous claim for the case of $q^\star$-irrelevant abstractions. Indeed, the case of $v_\cM$-irrelevant abstractions can be reduced from the more general case of $q^\star$-irrelevant abstractions by considering the case where there is only one action to take in each state. It is easily seen that our Bayes model is then equivalent to the model constructed in Lemma 3 of \cite{jiang2018notes}, which Theorem 5 uses to establish the approximation error bound of $2/(1-\gamma)$. 
\end{proof}

\subsection{Proof of Theorem \ref{thm:aliasing-lb-infty}}\label{app:aliasting-lb-infty}

\aliasinglbinfty*

\begin{proof}
The first MRP $\cM_1$ is defined as
\begin{center}
\begin{tikzpicture}
\node[state] (q1) {$\varphi(s_1) = \varphi$};
\node[state, right=1in of q1] (q2) {$\varphi(s_2)=\varphi$};
\draw[->] (q1) edge[above] node{1} (q2)
(q2) edge[loop above] node{0} (q2);
\end{tikzpicture}\label{fig:aliasing-lb-infty}
\end{center}
We set $\phi = 1$ for simplicity. We place the initial distribution $\mu(s_1) = p$ and $\mu(s_2) = 1-p$, and should think of $p \rightarrow 1$ (we can't actually set $p = 1$ due to the full-support condition, but a limiting argument suffices). Note that $v_\cM(s_1) = 1$ and $v_\cM(s_2) = 0$, so the optimal $\infty$-norm approximation for this MRP is $\theta_1 = \phi \theta = \frac{1}{2}$. 

Our second MRP $\cM_2$ is the following:
\begin{center}
\begin{tikzpicture}
\node[state] (q1) {$\varphi$};
\draw[->] (q1) edge[loop above] node{$\text{Ber}(p)$} (q1);
\end{tikzpicture}
\end{center}

which generates the same distribution $\bQ$. This instance is realizable with value function $\theta_2 = v_\cM = \frac{p}{1-\gamma}$, which forces our estimator to output $\theta_2$. Let $p$ be large enough such that $\norm{\theta_2-v_{\cM_1}}_\infty = \max\{|\frac{p}{1-\gamma} - 1|, |\frac{p}{1-\gamma} - 0|\} = \frac{p}{1-\gamma}$ (i.e. $p > (1-\gamma)/2$). Taking the ratio of approximation errors:
$$
\frac{\norm{\theta_2 - v_{\cM_1}}_\infty}{\norm{\theta_1 - v_{\cM_1}}_\infty}= \frac{p/(1-\gamma)}{1/2} = \frac{2p}{1-\gamma} \geq \frac{2}{1-\gamma} - \varepsilon,
$$
where the last step takes $p \geq 1 - \frac{\varepsilon(1-\gamma)}{2}$. 
\end{proof}

\subsection{Proof of Corollary \ref{cor:projected-v-phi}}\label{app:projected-v-phi}

\begin{restatable}{corollary}{projectedvphi}\label{cor:projected-v-phi}
The projected Bayes value function has an approximation factor of
$$
\norm{ \Pi_\infty (v_\varphi \circ \phi) - v_\cM}_\infty \leq \left(1+\frac{2}{1-\gamma}\right) \inf_\theta \norm{\Phi \theta - v_\cM}_\infty
$$
\end{restatable}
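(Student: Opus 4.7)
The plan is a short, direct argument combining two triangle inequalities with the guarantee already established for the Bayes estimator in Theorem \ref{thm:aliasing-ub-infty}. The key observation is that $v_\varphi\circ\phi$ is close to $v_\cM$ in $L_\infty$, and the projection $\Pi_\infty$ of any function is, by definition, the best available linear surrogate — so bounding the projection's error against $v_\cM$ reduces to bounding how much extra error is introduced by (i) replacing $v_\cM$ by $v_\varphi\circ\phi$ before projecting, and (ii) the fact that $\Pi_\infty$ was optimized against $v_\varphi\circ\phi$ rather than against $v_\cM$.

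Concretely, I would first split
\[
\norm{\Pi_\infty(v_\varphi\circ\phi) - v_\cM}_\infty
\;\leq\;
\norm{\Pi_\infty(v_\varphi\circ\phi) - (v_\varphi\circ\phi)}_\infty
\;+\;
\norm{v_\varphi\circ\phi - v_\cM}_\infty,
\]
then use the definition of $\Pi_\infty$ — that it minimizes the $L_\infty$ distance to $v_\varphi\circ\phi$ over $\cF_\Phi$ — to upper bound the first term by $\norm{\Phi\theta - v_\varphi\circ\phi}_\infty$ for any $\theta$. Inside this bound I apply the triangle inequality once more, \[
\norm{\Phi\theta - v_\varphi\circ\phi}_\infty
\;\leq\;
\norm{\Phi\theta - v_\cM}_\infty + \norm{v_\cM - v_\varphi\circ\phi}_\infty,
\]
and take the infimum over $\theta$, which converts the right-hand side into $\inf_\theta\norm{\Phi\theta-v_\cM}_\infty$ plus another copy of $\norm{v_\varphi\circ\phi-v_\cM}_\infty$.

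Finally, I invoke Theorem \ref{thm:aliasing-ub-infty} to replace each occurrence of $\norm{v_\varphi\circ\phi-v_\cM}_\infty$ by $\frac{2}{1-\gamma}\inf_\theta\norm{\Phi\theta-v_\cM}_\infty$ and collect terms. This yields an approximation factor of the form claimed — an additive $1$ coming from the optimality of $\theta^*$ for $v_\cM$, plus the $L_\infty$ blow-up factor of the Bayes estimator that gets pulled through the triangle inequalities.

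There is no genuine obstacle here: both the decomposition and the extremal characterization of $\Pi_\infty$ are immediate, and Theorem \ref{thm:aliasing-ub-infty} supplies exactly the bound needed on the non-linear Bayes surrogate. The only thing to be careful about is that $\Pi_\infty$ is \emph{not} a linear operator (the $L_\infty$ projection onto $\cF_\Phi$ need not be unique nor linear), so one must avoid any argument that would require $\Pi_\infty$ to commute with linear combinations; the triangle-inequality route above sidesteps this entirely by only ever using $\Pi_\infty$ through its defining optimality property.
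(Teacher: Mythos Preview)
Your argument is logically sound, but trace the constants: after your two triangle inequalities the term $\norm{v_\varphi\circ\phi - v_\cM}_\infty$ appears \emph{twice} (once from the outer split, once from routing $\Phi\theta$ through $v_\cM$), so when you invoke Theorem~\ref{thm:aliasing-ub-infty} you pick up $2\cdot\tfrac{2}{1-\gamma}$ and end up with $1+\tfrac{4}{1-\gamma}$ rather than the stated $1+\tfrac{2}{1-\gamma}$. Your proposal therefore proves a slightly weaker corollary than the one claimed.

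The paper achieves the sharper constant by splitting at $\Pi_\infty v_\cM$ instead of at $v_\varphi\circ\phi$:
\[
\norm{\Pi_\infty(v_\varphi\circ\phi) - v_\cM}_\infty
\;\leq\;
\norm{\Pi_\infty(v_\varphi\circ\phi) - \Pi_\infty v_\cM}_\infty
\;+\;
\norm{\Pi_\infty v_\cM - v_\cM}_\infty,
\]
and then bounding the first term by $\norm{(v_\varphi\circ\phi) - v_\cM}_\infty$ via the assertion that $\Pi_\infty$ is non-expansive. That way the Bayes error enters only once. Your instinct to be wary of $\Pi_\infty$ is actually well placed here: non-expansiveness of the $L_\infty$ metric projection onto a linear subspace is \emph{not} automatic (and can fail when best approximants are non-unique and ties are broken arbitrarily, which the paper explicitly permits). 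So the paper's route buys the tighter constant by invoking a property that itself warrants justification, whereas your route is fully elementary and uses $\Pi_\infty$ only through its defining optimality, at the cost of a factor of $2$ in the $\tfrac{1}{1-\gamma}$ term.
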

\begin{proof}
This amounts to an application of the triangle inequality:
\begin{align*}
\norm{ \Pi_\infty (v_\varphi \circ \phi) - v_\cM}_\infty &\leq \norm{ \Pi_\infty v_\varphi - \Pi_\infty v_\cM}_\infty + \norm{ \Pi_\infty v_\cM - v_\cM}_\infty \\
&\leq \norm{ (v_\varphi \circ \phi) - v_\cM}_\infty + \inf_\theta \norm{\Phi \theta - v_\cM}_\infty \tag{$\Pi_\infty$ is non-expansive}\\
&\leq \frac{2}{1-\gamma} \inf_\theta \norm{\Phi \theta - v_\cM}_\infty + \inf_\theta \norm{\Phi \theta - v_\cM}_\infty \tag{Previous bound}\\
&= \left(1+\frac{2}{1-\gamma}\right) \varepsilon_\infty,
\end{align*}
which concludes the proof.
\end{proof}

\section{Translating $L_2(\mu)$ oracle inequalities to $L_\infty$ oracle inequalities}\label{app:translating}

This section shows that one can convert an $L_2(\mu)$ oracle inequality to an $L_\infty$ oracle inequality

\begin{lemma}
Assuming we have a bound
$$
\norm{v_\theta - v_\cM}_\mu \leq \alpha_\mu \norm{\Pi_\mu v_\cM - v_\cM}_\mu.
$$
This can be converted to an approximation ratio bound 
$$
\norm{v_\theta - v_\cM}_\infty \leq \left(1 + \max_s \norm{\Sigma^{-1/2}\phi(s)}_2(1+\alpha_\mu) \right)\norm{\Pi_\infty v_\cM - v_\cM}_\infty
$$ 
\end{lemma}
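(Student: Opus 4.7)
The plan is to insert a triangle-inequality step that isolates the ``in-class error'' $\norm{v_\theta - \Pi_\infty v_\cM}_\infty$, and then convert this $L_\infty$ quantity on the linear subspace $\cF_\Phi$ to an $L_2(\mu)$ quantity that the hypothesis can be applied to. Concretely, first write
\begin{equation*}
\norm{v_\theta - v_\cM}_\infty \leq \norm{v_\theta - \Pi_\infty v_\cM}_\infty + \norm{\Pi_\infty v_\cM - v_\cM}_\infty.
\end{equation*}
The second term is exactly the oracle error in the $L_\infty$ norm and will contribute the additive ``$1$'' in the bound. The work is therefore to bound the first term by a constant times this oracle error.

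Both $v_\theta$ and $\Pi_\infty v_\cM$ lie in $\cF_\Phi$, so we can write $v_\theta - \Pi_\infty v_\cM = \Phi u$ for some $u \in \bR^d$. Then for every state $s$,
\begin{equation*}
|\phi(s)^\top u| = |(\Sigma^{-1/2}\phi(s))^\top (\Sigma^{1/2} u)| \leq \norm{\Sigma^{-1/2}\phi(s)}_2 \,\norm{\Sigma^{1/2} u}_2,
\end{equation*}
by Cauchy--Schwarz. Using $\norm{\Sigma^{1/2} u}_2 = \norm{\Phi u}_\mu$ and taking the max over $s$ gives
\begin{equation*}
\norm{v_\theta - \Pi_\infty v_\cM}_\infty \leq \max_s \norm{\Sigma^{-1/2}\phi(s)}_2 \cdot \norm{v_\theta - \Pi_\infty v_\cM}_\mu.
\end{equation*}
This is the key ``norm conversion'' step, which uses invertibility of $\Sigma$ (Assumption~\ref{ass:lrassumptions}).

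Now bound $\norm{v_\theta - \Pi_\infty v_\cM}_\mu$ by a triangle inequality through $v_\cM$:
\begin{equation*}
\norm{v_\theta - \Pi_\infty v_\cM}_\mu \leq \norm{v_\theta - v_\cM}_\mu + \norm{v_\cM - \Pi_\infty v_\cM}_\mu.
\end{equation*}
For the first summand, apply the given $L_2(\mu)$ oracle inequality, and then use that $\Pi_\mu$ is the $L_2(\mu)$-best approximation (so $\norm{\Pi_\mu v_\cM - v_\cM}_\mu \leq \norm{\Pi_\infty v_\cM - v_\cM}_\mu$) to get
\begin{equation*}
\norm{v_\theta - v_\cM}_\mu \leq \alpha_\mu \norm{\Pi_\infty v_\cM - v_\cM}_\mu.
\end{equation*}
For the second summand, observe $\norm{\cdot}_\mu \leq \norm{\cdot}_\infty$ since $\mu$ is a probability distribution. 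Thus both terms are dominated by $\norm{\Pi_\infty v_\cM - v_\cM}_\infty$, giving
\begin{equation*}
\norm{v_\theta - \Pi_\infty v_\cM}_\mu \leq (1 + \alpha_\mu)\norm{\Pi_\infty v_\cM - v_\cM}_\infty.
\end{equation*}
Plugging this back yields the claimed bound. There is no real obstacle here; it is a clean algebraic reduction, and the only slightly non-trivial move is the Cauchy--Schwarz step with the $\Sigma^{-1/2}$ change of variables, which explains where the $\max_s \norm{\Sigma^{-1/2}\phi(s)}_2$ factor originates (and thus the implicit dependence on $1/\lambda_{\min}(\Sigma)$ mentioned in the text).
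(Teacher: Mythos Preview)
Your proof is correct and follows essentially the same approach as the paper's: the triangle inequality through $\Pi_\infty v_\cM$, the Cauchy--Schwarz step with $\Sigma^{-1/2}\phi(s)$ to convert $\norm{\Phi u}_\infty$ to $\norm{\Phi u}_\mu$, the second triangle inequality through $v_\cM$, and the use of $\Pi_\mu$-optimality together with $\norm{\cdot}_\mu \le \norm{\cdot}_\infty$ all appear in the paper in the same order. The only difference is cosmetic organization (the paper carries the $\max_s$ inline rather than splitting off $\norm{\Pi_\infty v_\cM - v_\cM}_\infty$ first).
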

\begin{proof}
Let $\Phi\theta_\cM$ be an $L_\infty$ linear projection, and $\delta(s)$ be such that $v_\cM(s) = \delta(s) + \theta_\cM^\top \phi(s)$.
\begin{align*}
\norm{v_\cM - v_\theta}_\infty &= \max_s |\theta^\top \phi(s) - v_\cM(s)|\\
&= \max_s | \theta^\top \phi(s) - \theta_\cM^\top \phi(s) - \delta(s) | \\
&\leq \norm{\delta(s)}_\infty + \max_s | (\theta^\top \phi(s) - \theta_\cM)^\top \phi(s)| \\
&\leq \norm{\delta(s)}_\infty + \max_s \norm{\Sigma^{-1/2}\phi(s)}_2 \norm{\Sigma^{1/2}(\theta_\cM - \theta)}_2 \tag{Cauchy-Schwartz} \\
&= \norm{\delta(s)}_\infty + \left(\max_s \norm{\Sigma^{-1/2}\phi(s)}_2\right) \norm{\Phi(\theta_\cM - \theta)}_\mu \\
&\leq \norm{\delta(s)}_\infty + \left(\max_s \norm{\Sigma^{-1/2}\phi(s)}_2\right) \left(\norm{\Phi(\theta_\cM) - v_\cM}_\mu + \norm{v_\cM - \Phi\theta}_\mu\right)\\
&\leq \norm{\delta(s)}_\infty + \left(\max_s \norm{\Sigma^{-1/2}\phi(s)}_2\right) \left(\norm{\Phi(\theta_\cM) - v_\cM}_\mu + \alpha_\mu\norm{v_\cM - \Pi_\mu v_\cM}_\mu\right)\\
&\leq \norm{\delta(s)}_\infty + \left(\max_s \norm{\Sigma^{-1/2}\phi(s)}_2\right) \left(\norm{\Phi(\theta_\cM) - v_\cM}_\mu + \alpha_\mu\norm{v_\cM - \Phi\theta_\cM}_\mu\right) \tag{$\Pi_\mu v_\cM = \inf_{\hat{v} \in \cF_\Phi}\norm{v_\cM - \hat{v}}$}\\
&\leq \norm{\delta(s)}_\infty + \left(\max_s \norm{\Sigma^{-1/2}\phi(s)}_2\right) \left((1+\alpha_\mu)\norm{\Phi(\theta_\cM) - v_\cM}_\mu \right) \\
&\leq \norm{\delta(s)}_\infty + \left(\max_s \norm{\Sigma^{-1/2}\phi(s)}_2\right) (1+\alpha_\mu)\norm{\Phi(\theta_\cM) - v_\cM}_\infty \\
&= \norm{\delta(s)}_\infty + \left(\max_s \norm{\Sigma^{-1/2}\phi(s)}_2\right) (1+\alpha_\mu)\norm{\delta(s)}_\infty \\
&= \left(1 + \max_s \norm{\Sigma^{-1/2}\phi(s)}_2(1+\alpha_\mu) \right)\norm{\delta(s)}_\infty
\end{align*}
\end{proof}

\end{document}